\newlength\myindent
\renewcommand{\hat}{\widehat}
\newcommand{\LineComment}[1]{\hfill$\rhd\ $\text{#1}}
\def \A {\mathcal{A}}
\def \B {\mathbb{B}}
\def \B {\mathcal{B}}
\def \D {\mathcal{D}}
\def \E {\mathbb{E}}
\def \H {\mathcal{H}}
\def \O {\mathcal{O}}
\def \Oh {\hat{\O}}
\def \R {\mathbb{R}}
\def \X {\mathcal{X}}
\def \Y {\mathcal{Y}}
\def \a {\boldsymbol{a}}
\def \b {\boldsymbol{b}}
\def \e {\boldsymbol{e}}
\def \g {\mathbf{g}}
\def \m {\boldsymbol{m}}
\def \mb {\mathbf{m}}
\def \p {{\boldsymbol{p}}}
\def \q {{\boldsymbol{q}}}
\def \u {\boldsymbol{u}}
\def \x {\mathbf{x}}
\def \y {\mathbf{y}}
\def \ph {\hat{p}}
\def \pbh {\boldsymbol{\ph}}
\def \qh {\hat{q}}
\def \xh {\hat{\x}}
\def \epsilon {\varepsilon}
\def \Dpsi {\D_{\psi}}
\def \ellb {\boldsymbol{\ell}}
\def \xs {\x^\star}
\def \is {{i^\star}}
\def \ks {{k^\star}}
\def \Reg {\textsc{Reg}}
\def \meta {\textsc{Meta-Reg}}
\def \base {\textsc{Base-Reg}}
\def \KL {\textnormal{KL}}
\def \sumT {\sum_{t=1}^T}
\def \sumTT {\sum_{t=2}^T}
\def \sumN {\sum_{i=1}^N}
\def \sumK {\sum_{k=1}^K}
\let\norm\undefined 
\newcommand\norm[1]{\left\| #1 \right\|}
\newcommand\inner[2]{\langle #1, #2 \rangle}
\newcommand\ceil[1]{\lceil #1 \rceil}
\newcommand\sbr[1]{\left( #1 \right)}
\newcommand\mbr[1]{\left[ #1 \right]}
\newcommand\bbr[1]{\left\{ #1 \right\}}
\newcommand\term[1]{\textsc{term}~(\textsc{#1})}
\DeclareMathOperator*{\argmax}{arg\,max}
\DeclareMathOperator*{\argmin}{arg\,min}
\def \define {\triangleq}
\numberwithin{equation}{section}
\newtheorem{myThm}{Theorem}
\newtheorem{myLemma}{Lemma}
\newtheorem{myProp}{Proposition}
\newtheorem{myCor}{Corollary}
\theoremstyle{definition}
\newtheorem{myAssum}{Assumption}
\definecolor{wine_red}{RGB}{228,48,64}
\definecolor{DSgray}{cmyk}{0,1,0,0}
\newcommand{\pref}[1]{\prettyref{#1}}
\newcommand{\savehyperref}[2]{\texorpdfstring{\hyperref[#1]{#2}}{#2}}
\def \msmwc {\textsc{MsMwC}\xspace}
\def \msmwcM {\textsc{MsMwC-Top}\xspace}
\def \msmwcB {\textsc{MsMwC-Mid}\xspace}
\def \mlprod {\textsc{Adapt-ML-Prod}\xspace}
\def \Vs {V_\star}
\def \Vb {\bar{V}}
\def \hexp {h^{\textnormal{exp}}}
\def \hsc {h^{\textnormal{sc}}}
\def \FTx {F_T^{\x}}
\def \Gsc {G_{\text{sc}}}
\title{Universal Online Learning with Gradient Variations:
A Multi-layer Online Ensemble Approach}
\author{
  Yu-Hu Yan,~ Peng Zhao,~ Zhi-Hua Zhou \thanks{Correspondence: Peng Zhao $<$zhaop@lamda.nju.edu.cn$>$}\\
  National Key Laboratory for Novel Software Technology,\\
  Nanjing University, Nanjing 210023, China\\
  \texttt{\{yanyh, zhaop, zhouzh\}@lamda.nju.edu.cn}
}
\begin{document}
\maketitle

\begin{abstract}
  In this paper, we propose an online convex optimization approach with two different levels of adaptivity. On a higher level, our approach is agnostic to the unknown types and curvatures of the online functions, while at a lower level, it can exploit the unknown niceness of the environments and attain problem-dependent guarantees. Specifically, we obtain $\mathcal{O}(\log V_T)$, $\mathcal{O}(d \log V_T)$ and $\hat{\mathcal{O}}(\sqrt{V_T})$ regret bounds for strongly convex, exp-concave and convex loss functions, respectively, where $d$ is the dimension, $V_T$ denotes problem-dependent gradient variations and the $\hat{\mathcal{O}}(\cdot)$-notation omits $\log V_T$ factors. Our result not only safeguards the worst-case guarantees but also directly implies the small-loss bounds in analysis. Moreover, when applied to adversarial/stochastic convex optimization and game theory problems, our result enhances the existing universal guarantees. Our approach is based on a multi-layer online ensemble framework incorporating novel ingredients, including a carefully designed optimism for unifying diverse function types and cascaded corrections for algorithmic stability. Notably, despite its multi-layer structure, our algorithm necessitates only one gradient query per round, making it favorable when the gradient evaluation is time-consuming. This is facilitated by a novel regret decomposition equipped with carefully designed surrogate losses.
\end{abstract}


\section{Introduction}
Online convex optimization (OCO) is a versatile model that depicts the interaction between a learner and the environments over time~\citep{book'16:Hazan-OCO, arXiv'19:Orabona}. In each round $t \in [T]$, the learner selects a decision $\x_t$ from a convex compact set $\X \subseteq \R^d$, and simultaneously the environments choose a convex loss function $f_t: \X \mapsto \R$. Subsequently, the learner incurs a loss $f_t(\x_t)$, obtains information about the online function, and updates the decision to $\x_{t+1}$, aiming to optimize the game-theoretical performance measure known as \emph{regret}~\citep{Bianchi06}:
\begin{equation}
  \label{eq:regret}
  \Reg_T \define \sumT f_t(\x_t) - \min_{\x \in \X} \sumT f_t(\x),
\end{equation}
which represents the learner's excess loss compared to the best fixed comparator in hindsight.

In OCO, the type and curvature of online functions significantly impact the minimax regret bounds. Specifically, for convex functions, online gradient descent (OGD) can achieve an $\O(\sqrt{T})$ regret guarantee~\citep{ICML'03:zinkevich}. For $\alpha$-exp-concave functions, online Newton step (ONS), with prior knowledge of the curvature coefficient $\alpha$, attains an $\O(d \log T)$ regret \citep{MLJ'07:Hazan-logT}. For $\lambda$-strongly convex functions, OGD with prior knowledge of the curvature coefficient $\lambda$ and a different parameter configuration enjoys an $\O(\log T)$ regret~\citep{MLJ'07:Hazan-logT}. Note that the above algorithms require the function type and curvature beforehand and does not consider the niceness of environments. Recent studies further strengthen the algorithms and results with \emph{two levels of adaptivity}. The higher-level adaptivity requires an algorithm to be agnostic to the unknown types and curvatures of the online functions. And the lower-level adaptivity requires an algorithm to exploit the unknown niceness of the environments within a specific function family. In the following, we delve into an extensive discussion on these two levels of adaptivity.

\subsection{High Level: Adaptive to Unknown Curvature of Online Functions}
\label{subsec:universal}
Traditionally, the learner needs to know the function type and curvature in advance to select suitable algorithms (and parameter configurations), which can be burdensome in practice. \emph{Universal} online learning~\citep{NeurIPS'16:Metagrad, UAI'19:Maler, ICML'22:universal} aims to develop a single algorithm agnostic to the specific function type and curvature while achieving the same regret guarantees as if they were known. The pioneering work of \citet{NeurIPS'16:Metagrad} proposed a single algorithm called MetaGrad that achieves an $\O(\sqrt{T})$ regret for convex functions and an $\O(d \log T)$ regret for exp-concave functions. Later, \citet{UAI'19:Maler} further obtained the optimal $\O(\log T)$ regret for strongly convex functions. Notably, these approaches are efficient regarding the gradient query complexity, by using only one gradient within each round.

However, the above approaches are not flexible enough since they have to optimize a group of heterogeneous and carefully-designed surrogate loss functions, which can be cumbersome and challenging. To this end, \citet{ICML'22:universal} introduced a simple framework that operates on the original online functions with the same optimal results at the expense of $\O(\log T)$ gradient queries.

\begin{table}[!t]
  \centering
  \caption{\small{Comparison with existing results. The second column presents the regret bounds for various kinds of functions, where $V_T$ and $F_T$ are problem-dependent quantities that are at most $\O(T)$ and can be much smaller in nice environments. The $\Oh(\cdot)$-notation omits logarithmic factors on $V_T$ and $F_T$. The third column shows the gradient query complexity. All problem-dependent bounds require the smoothness of online functions.}}
  \vspace{1mm}
  \label{table:main}
  \renewcommand*{\arraystretch}{1.4}
  \resizebox{0.98\textwidth}{!}{
  \begin{tabular}{c|ccc|c}
    \hline

    \hline
    \multirow{2}{*}{\textbf{Works}} & \multicolumn{3}{c|}{\textbf{Regret Bounds}} & \multirow{2}{*}{\makecell[c]{\textbf{Gradient} \\ \textbf{Query}}} \\ \cline{2-4}                 
    & Strongly Convex & Exp-concave & Convex & \\ \hline
    \citet{NeurIPS'16:Metagrad} & $\O(d \log T)$ & $\O(d \log T)$ & $\O(\sqrt{T})$ & $1$ \\ \hline
    \citet{UAI'19:Maler} & $\O(\log T)$ & $\O(d \log T)$ & $\O(\sqrt{T})$ & $1$ \\ \hline
    \citet{ICML'22:universal} & $\O(\min\{\log V_T, \log F_T\})$ & $\O(d \min\{\log V_T, \log F_T\})$ & $\O(\sqrt{F_T})$ & $\O(\log T)$ \\ \hline \hline
    \textbf{Ours} & $\O(\min\{\log V_T, \log F_T\})$ & $\O(d \min\{\log V_T, \log F_T\})$ & $\Oh(\min\{\sqrt{V_T}, \sqrt{F_T}\})$ & $1$ \\ \hline

    \hline
  \end{tabular}}
\end{table}

\subsection{Low Level: Adaptive to Unknown Niceness of Online Environments}
Within a specific function family, the algorithm's performance is also substantially influenced by the niceness of environments. This concept is usually captured through \mbox{\emph{problem-dependent}} quantities in the literature. Therefore, it becomes essential to develope adaptive algorithms with problem-dependent regret guarantees. Specifically, we consider the following problem-dependent quantities:
\begin{equation*}
  F_T \define \min_{\x \in \X} \sumT f_t(\x),\ \text{and}\ V_T \define \sumTT \sup_{\x \in \X} \|\nabla f_t(\x) - \nabla f_{t-1}(\x)\|^2,
\end{equation*}
where the small loss $F_T$ represents the cumulative loss of the best comparator~\citep{NIPS'10:small-loss, AISTATS'12:Orabona} and the gradient variation $V_T$ characterizes the variation of the function gradients~\citep{COLT'12:VT}. In particular, the gradient-variation bound demonstrates its fundamental importance in modern online learning from the following three aspects: \textit{(i)} it safeguards the worst-case guarantees in terms of $T$ and implies the small-loss bounds in analysis directly; \textit{(ii)} it draws a profound connection between adversarial and stochastic convex optimization; and \textit{(iii)} it is crucial for fast convergence rates in game theory. We will explain the three aspects in detail in the next part.

\subsection{Our Contributions and Techniques}
In this paper, we consider the two levels of adaptivity simultaneously and propose a novel universal approach that achieves $\O(\log V_T)$, $\O(d \log V_T)$ and $\Oh(\sqrt{V_T})$ regret bounds for strongly convex, exp-concave and convex loss functions, respectively, using only one gradient query per round, where $\Oh(\cdot)$-notation omits factors on $\log V_T$. \pref{table:main} compares our results with existing ones. In summary, relying on the basic idea of online ensemble~\citep{arXiv'21:Sword++}, our approach primarily admits a \mbox{\emph{multi-layer online ensemble}} structure with several important novel ingredients. Specifically, we propose a carefully designed optimism, a hyper-parameter encoding historical information, to handle different kinds of functions universally, particularly exp-concave functions. Nevertheless, it necessitates careful management of the stability of final decisions, which is complicated in the multi-layer structure. To this end, we analyze the negative stability terms in the algorithm and propose cascaded correction terms to realize effective collaboration among layers, thus enhancing the algorithmic stability. Moreover, we facilitate a novel regret decomposition equipped with carefully designed surrogate losses to achieve only one gradient query per round, making our algorithm as efficient as \citet{NeurIPS'16:Metagrad} regarding the gradient complexity. Our result resolves an open problem proposed by \citet{ICML'22:universal}, who have obtained partial results for exp-concave and strongly convex functions and asked whether it is possible to design designing a single algorithm with universal gradient-variation bounds. Among them, the convex case is particularly important because the improvement from $T$ to $V_T$ is polynomial, whereas logarithmic in the other cases.  

Next, we shed light on some applications of our approach. First, it safeguards the worst-case guarantees~\citep{NeurIPS'16:Metagrad, UAI'19:Maler} and directly implies the small-loss bounds of~\citet{ICML'22:universal} in analysis. Second, gradient variation is shown to play an essential role in the stochastically extended adversarial (SEA) model~\citep{NeurIPS'22:SEA, arXiv'23:SEA-Chen}, an interpolation between stochastic and adversarial OCO. Our approach resolves a major open problem left in \citet{arXiv'23:SEA-Chen} on whether it is possible to develop a single algorithm with universal guarantees for strongly convex, exp-concave, and convex functions in the SEA model. Third, in game theory, gradient variation encodes the changes in other players' actions and can thus lead to fast convergence rates \citep{NIPS'13:optimistic, NIPS'15:Syrgkanis, ICML'22:TVgame}. We demonstrate the universality of our approach by taking two-player zero-sum games as an example.

\textbf{Technical Contributions.}~~ Our first contribution is proposing a multi-layer online ensemble approach with effective collaboration among layers, which is achieved by a \emph{carefully-designed optimism} to unify different kinds of functions and \emph{cascaded correction terms} to improve the algorithmic stability within the multi-layer structure. The second contribution arises from efficiency. Although there are multiple layers, our algorithm only requires one gradient query per round, which is achieved by a \emph{novel regret decomposition} equipped with carefully designed surrogate losses. Two interesting byproducts rises in our approach. The first one is the negative stability term in the analysis of \msmwc~\citep{COLT'21:Chen}, which serves as an important building block of our algorithm. And the second byproduct contains a simple approach and analysis for the optimal worst-case universal guarantees, using one gradient query within each round. 

\textbf{Organization.}~~ The rest of the paper is structured as follows. \pref{sec:preliminary} provides preliminaries. \pref{sec:approach} proposes our multi-layer online ensemble approach for universal gradient-variation bounds. \pref{sec:efficiency} further improves the gradient query complexity. Due to page limits, the applications of our proposed algorithm are deferred to \pref{app:applications}. All the proofs can be found in the appendices.


\section{Preliminaries}
\label{sec:preliminary}
In this section, we introduce some preliminary knowledge, including our assumptions, the definitions, the formal problem setup, and a review of the latest progress of \citet{ICML'22:universal}.

To begin with, we list some notations. Specifically, we use $\|\cdot\|$ for $\|\cdot\|_2$ in default and use $\sum_t$, $\sum_k$, $\sum_i$ as abbreviations for $\sum_{t \in [T]}$, $\sum_{k \in [K]}$ and $\sum_{i \in [N]}$. $a \lesssim b$ represents $a \le \O(b)$. $\Oh(\cdot)$-notation omits logarithmic factors on leading terms. For example, $\Oh(\sqrt{V})$ omits the dependence of $\log V$.

\begin{myAssum}[Boundedness]
  \label{assum:boundedness}
  For any $\x,\y \in \X$ and $t \in [T]$, the domain diameter satisfies $\|\x - \y\| \le D$, and the gradient norm of the online functions is bounded by $\|\nabla f_t(\x)\| \le G$. 
\end{myAssum}
\begin{myAssum}[Smoothness]
  \label{assum:smoothness}
  All online functions are $L$-smooth: $\|\nabla f_t(\x) - \nabla f_t(\y)\| \le L \|\x - \y\|$ for any $\x, \y \in \X$ and $t \in [T]$.
\end{myAssum}
Both assumptions are common in the literature. Specifically, the boundedness assumption is common in OCO~\citep{book'16:Hazan-OCO}. The smoothness assumption is essential for first-order algorithms to achieve gradient-variation bounds~\citep{COLT'12:VT}. Strong convexity and exp-concavity are defined as follows. For any $\x, \y \in \X$, a function $f$ is $\lambda$-strongly convex if $f(\x) - f(\y) \le \langle \nabla f(\x), \x - \y \rangle - \frac{\lambda}{2} \|\x - \y\|^2$, and is $\alpha$-exp-concave if $f(\x) - f(\y) \le \langle \nabla f(\x), \x - \y \rangle - \frac{\alpha}{2} \inner{\nabla f(\x)}{\x - \y}^2$. Note that the formal definition of $\beta$-exp-concavity states that $\exp(- \beta f(\cdot))$ is concave. Under \pref{assum:boundedness}, $\beta$-exp-concavity leads to our definition with $\alpha = \frac{1}{2} \min\{\frac{1}{4GD}, \beta\}$~{\citep[Lemma~4.3]{book'16:Hazan-OCO}}. For simplicity, we use it as an alternative definition for exp-concavity.

In the following, we formally describe the problem setup and briefly review the key insight of \citet{ICML'22:universal}. Concretely, we consider the problem where the learner has no prior knowledge about the function type (strongly convex, exp-concave, or convex) or the curvature coefficient ($\alpha$ or $\lambda$). Without loss of generality, we study the case where the curvature coefficients $\alpha, \lambda \in [\nicefrac{1}{T},1]$. This requirement is natural because if $\alpha~(\text{or }\lambda) < \nicefrac{1}{T}$, even the optimal regret is $\Omega(T)$~\citep{MLJ'07:Hazan-logT}, which is vacuous. Conversely, functions with $\alpha~(\text{or }\lambda) > 1$ are also 1-exp-concave (or 1-strongly convex). Thus using $\alpha~(\text{or }\lambda) = 1$ will only worsen the regret by a constant factor, which can be omitted. This condition is also used in previous works~\citep{ICML'22:universal}.

\textbf{A Brief Review of \citet{ICML'22:universal}.}~~ A general solution to handle the uncertainty is to leverage a two-layer framework, which consists of a group of base learners exploring the environments and a meta learner tracking the best base learner on the fly. To handle the unknown curvature coefficients $\alpha$ and $\lambda$, the authors discretize them into the following candidate pool:
\begin{equation}
  \label{eq:candidate-pool}
  \H \define \{\nicefrac{1}{T}, \nicefrac{2}{T}, \nicefrac{4}{T}, \ldots, 1\},
\end{equation}
where $|\H| \approx \log T$. Consequently, they design three groups of base learners: 
\begin{enumerate}[itemsep=1pt,topsep=-1pt,parsep=0pt]
  \item[\textit{(i)}] about $\log T$ base learners, each of which runs the algorithm for strongly convex functions with a guess $\lambda_i \in \H$ of the strong convexity coefficient $\lambda$;
  \item[\textit{(ii)}] about $\log T$ base learners, each of which runs the algorithm for exp-concave functions with a guess $\alpha_i \in \H$ of the exp-concavity coefficient $\alpha$;
  \item[\textit{(iii)}] $1$ base learner that runs the algorithm for convex functions.
\end{enumerate}
Overall, they maintain $N \approx \log T + \log T + 1$ base learners. Denoting by $\p_t \define (p_{t,1}, \ldots, p_{t,N})$ the meta learner's weights and $\x_{t,i}$ the $i$-th base learner's decision, the learner submits $\x_t = \sum_i p_{t,i} \x_{t,i}$.

In the two-layer framework, the regret~\eqref{eq:regret} can be decomposed into two terms:
\begin{equation}
  \label{eq:Zhang-decompose}
  \Reg_T = \mbr{\sumT f_t(\x_t) - \sumT f_t(\x_{t,\is})} + \mbr{\sumT f_t(\x_{t,\is}) - \min_{\x \in \X} \sumT f_t(\x)},
\end{equation}
where the \emph{meta regret} (first term) assesses how well the algorithm tracks the best base learner, and the \emph{base regret} (second term) measures the performance of it. The best base learner is the one which runs the algorithm matching the ground-truth function type with the most accurate guess of the curvature~---~taking $\alpha$-exp-concave functions as an example, there must exist a base learner indexed by $\is$, whose coefficient $\alpha_{\is} \in \H$ satisfies $\alpha_{\is} \le \alpha \le 2 \alpha_{\is}$. 

A direct benefit of the above decomposition is that the meta regret can be bounded by a \emph{constant} $\O(1)$ for exp-concave and strongly convex functions, allowing the algorithm to perfectly inherit the gradient-variation bound from the base learner. Taking $\alpha$-exp-concave functions as an example, by definition, the meta regret can be bounded by $\sum_t  r_{t, \is} - \frac{\alpha}{2} \sum_t r_{t, \is}^2$, where the first term $r_{t,i} \define \inner{\nabla f_t(\x_t)}{\x_t - \x_{t,i}}$ denotes the linearized regret, and the second one is a negative term from exp-concavity. Choosing \mlprod~\citep{COLT'14:ML-Prod} as the meta algorithm bounds the first term $\sum_t  r_{t, \is}$ by $\O(\sqrt{\sum_t (r_{t, \is})^2})$, which can be canceled by the negative term, leading to an $\O(1)$ meta regret. Due to the meta algorithm's benefits, their approach can inherit the gradient-variation guarantees from the base learner. Similar derivation also applies to strongly convex functions. However, their approach is not favorable enough in the convex case and is not efficient enough in terms of the gradient query complexity. We will give more discussions about the above issues in \pref{subsec:universal-optimism} and \pref{sec:efficiency}.

\section{Our Approach}
\label{sec:approach}
This section presents our multi-layer online ensemble approach with universal gradient-variation bounds. Specifically, in \pref{subsec:universal-optimism}, we provide a novel optimism to unify different kinds of functions. In \pref{subsec:negative-term}, we exploit two types of negative terms to cancel the positive term caused by the optimism design. We summarize the overall algorithm in \pref{subsec:algorithm}. Finally, in \pref{subsec:results}, we present the main results and list several applications of our approach.

\subsection{Universal Optimism Design}
\label{subsec:universal-optimism}
In this part, we propose a novel optimism that simultaneously unifies various kinds of functions. We start by observing that \citet{ICML'22:universal} does not enjoy gradient-variation bounds for convex functions, where the main challenge lies in obtaining an $\O(\sqrt{V_T})$ meta regret for convex functions while simultaneously maintaining an $\O(1)$ regret for exp-concave and strongly convex functions. In the following, we focus on the meta regret because the base regret optimization is straightforward by employing the optimistic online learning technique~\citep{COLT'13:optimistic,NIPS'13:optimistic} in a black-box fashion. Optimistic online learning is essential in our problem since it can utilize the historical information, e.g., $\nabla f_{t-1}(\cdot)$ for our purpose due to the definition of the gradient variation. 

Shifting our focus to the meta regret, we consider upper-bounding it by a second-order bound of $\O(\sqrt{\sum_t (r_{t, \is} - m_{t, \is})^2})$, where $r_{t,i} = \inner{\nabla f_{t}(\x_{t})}{\x_{t} - \x_{t,i}}$ and the optimism $m_{t,\is}$ can encode historical information. Such a second-order bound can be easily obtained using existing prediction with expert advice algorithms, e.g., \mlprod~\citep{NIPS'16:Optimistic-ML-Prod}. Nevertheless, as we will demonstrate in the following, designing an optimism $m_{t,i}$ that effectively unifies various function types is not straightforward, thereby requiring novel ideas in the optimism design.

To begin with, a natural impulse is to choose the optimism as $m_{t,i} = \inner{\nabla f_{t-1}(\x_{t-1})}{\x_{t,i} - \x_t}$,\footnote{Although $\x_t$ is unknown when using $m_{t,i}$, we  only need the scalar value of $\inner{\nabla f_{t-1}(\x_{t-1})}{\x_t}$, which can be efficiently solved via a fixed-point problem of $\inner{\nabla f_{t-1}(\x_{t-1})}{\x_t(z)} = z$. $\x_t$ relies on $z$ since $m_{t,i}$ relies on $z$ and $\x_t$ relies on $m_{t,i}$. Interested readers can refer to Section 3.3 of \citet{NIPS'16:Optimistic-ML-Prod} for details.} which yields the following second-order bound:
\begin{equation*}
  \sumT (r_{t, \is} - m_{t, \is})^2 \lesssim \left\{ 
    \begin{matrix}
      \begin{aligned}
        & \sum\nolimits_t \|\x_t - \x_{t,\is}\|^2, & \textit{(strongly convex)}\\
        & \sum\nolimits_t \|\nabla f_t(\x_t) - \nabla f_{t-1}(\x_{t-1})\|^2, & \textit{(convex)}
      \end{aligned}
    \end{matrix}
  \right. 
\end{equation*}
where the inequality is due to the boundedness assumption (\pref{assum:boundedness}). This optimism design handles the strongly convex functions well since the bound can be canceled by the negative term imported by strong convexity (i.e., $- \|\x_t - \x_{t,\is}\|^2$). Moreover, it is quite promising for convex functions because the bound essentially consists of the desired gradient variation and a positive term of $\|\x_t - \x_{t-1}\|^2$ (we will deal with it later). However, it fails for exp-concave functions because the negative term imported by exp-concavity (i.e., $- \inner{\nabla f_t(\x_t)}{\x_t - \x_{t,\is}}^2$) cannot be used for cancellation due to the mismatch of the formulation.

To unify various kinds of functions, we propose a \emph{novel optimism design} defined by $m_{t,i} = r_{t-1,i}$. This design aims to secure a second-order bound of $\O(\sqrt{\sum_t (r_{t,\is} - r_{t-1,\is})^2})$, which is sufficient to achieve an $\O(1)$ meta regret for exp-concave functions (with strong convexity being a subcategory thereof) while maintaining an $\O(\sqrt{V_T})$ meta regret for convex functions. The high-level intuition behind this approach is as follows: although the bound cannot be canceled exactly by the negative term imported by exp-concavity (i.e., $- r_{t,\is}^2$) within each round, it becomes manageable when aggregated \emph{across the whole time horizon} as $\sum_t (r_{t,\is} - r_{t-1,\is})^2 \lesssim 4 \sum_t r_{t,\is}^2$, because $r_{t,\is}$ and $r_{t-1,\is}$ differs by merely a single time step. In the following, we propose the key lemma of the universal optimism design and defer the proof to \pref{app:universal-optimism}.
\begin{myLemma}[Key Lemma]
  \label{lem:universal-optimism}
  Under Assumptions~\ref{assum:boundedness} and \ref{assum:smoothness}, if the optimism is chosen as $m_{t,i} = r_{t-1,i} = \langle \nabla f_{t-1}(\x_{t-1}), \x_{t-1} - \x_{t-1,i} \rangle$, it holds that
  \begin{flalign*}
    && \sumT (r_{t, \is} - m_{t, \is})^2 \lesssim 
    \left\{ 
      \begin{matrix}
        \begin{aligned}
          & \sumT \inner{\nabla f_t(\x_t)}{\x_t - \x_{t,\is}}^2, & \text{(exp-concave)}\\
          & V_T + \sumTT \|\x_{t,\is} - \x_{t-1,\is}\|^2 + \sumTT \|\x_t - \x_{t-1}\|^2. & \text{\qquad(convex)}
        \end{aligned}
      \end{matrix}
    \right.
  \end{flalign*}
\end{myLemma}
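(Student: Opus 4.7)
The plan is to start from the identity $r_{t,i^\star} - m_{t,i^\star} = r_{t,i^\star} - r_{t-1,i^\star}$, which is immediate from the choice $m_{t,i} = r_{t-1,i}$, and then handle the two cases separately via elementary inequalities. Neither direction requires a clever telescoping or a potential-function argument; the work is almost entirely algebraic, so the only real design choice is the add-and-subtract decomposition used in the convex case.

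For the exp-concave bound, I would apply $(a-b)^2 \le 2a^2 + 2b^2$ with $a = r_{t,i^\star}$ and $b = r_{t-1,i^\star}$, sum over $t \in [T]$, and re-index the lagged term (using the convention $r_{0,i^\star} = 0$). This yields $\sum_t (r_{t,i^\star} - r_{t-1,i^\star})^2 \le 4 \sum_t r_{t,i^\star}^2 = 4 \sum_t \inner{\nabla f_t(\x_t)}{\x_t - \x_{t,i^\star}}^2$, which matches the right-hand side in the lemma and directly formalizes the informal intuition stated in the excerpt that a one-step shift costs at most a constant factor.

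For the convex bound, write $\g_t = \nabla f_t(\x_t)$ and use the decomposition
\[
r_{t,i^\star} - r_{t-1,i^\star} = \inner{\g_t - \g_{t-1}}{\x_t - \x_{t,i^\star}} + \inner{\g_{t-1}}{(\x_t - \x_{t-1}) - (\x_{t,i^\star} - \x_{t-1,i^\star})}.
\]
Squaring and applying $(a+b)^2 \le 2a^2 + 2b^2$ together with Cauchy--Schwarz and the bounds $\|\x_t - \x_{t,i^\star}\| \le D$ and $\|\g_{t-1}\| \le G$ from \pref{assum:boundedness} gives, up to constants,
\[
(r_{t,i^\star} - r_{t-1,i^\star})^2 \lesssim D^2 \|\g_t - \g_{t-1}\|^2 + G^2 \|\x_t - \x_{t-1}\|^2 + G^2 \|\x_{t,i^\star} - \x_{t-1,i^\star}\|^2.
\]
It then remains to dominate $\sum_t \|\g_t - \g_{t-1}\|^2$ by $V_T$ plus stability. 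For this, split $\g_t - \g_{t-1}$ via $\pm \nabla f_{t-1}(\x_t)$: the piece $\nabla f_t(\x_t) - \nabla f_{t-1}(\x_t)$ is bounded inside $V_T$ by definition, and $\nabla f_{t-1}(\x_t) - \nabla f_{t-1}(\x_{t-1})$ is bounded by $L\|\x_t - \x_{t-1}\|$ via \pref{assum:smoothness}. Summing contributes an extra $L^2 \sum_t \|\x_t - \x_{t-1}\|^2$ that merges with the iterate-drift term already present.

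The main obstacle, such as it is, lies in spotting the ``gradient-change plus iterate-change'' split for the convex case so that one summand collapses cleanly to $V_T$ while the other produces only the two stability quantities $\|\x_t - \x_{t-1}\|^2$ and $\|\x_{t,i^\star} - \x_{t-1,i^\star}\|^2$ that the algorithm is later engineered to control via cascaded corrections. Once this decomposition is fixed, every remaining step is a routine application of Cauchy--Schwarz, $(a+b)^2 \le 2a^2 + 2b^2$, boundedness and smoothness, so there is no genuinely hard analytic step.
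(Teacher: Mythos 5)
Your proposal is correct and follows essentially the same route as the paper's proof: the exp-concave case via $(a-b)^2 \le 2a^2+2b^2$ with re-indexing of the lagged term, and the convex case via the same "gradient-change plus iterate-change" add-and-subtract decomposition, Cauchy--Schwarz with the boundedness constants $D$ and $G$, and the $\pm\nabla f_{t-1}(\x_t)$ split (the paper's Eq.~\eqref{eq:extract-VT}) to extract $V_T$ plus an $L^2\sum_t\|\x_t-\x_{t-1}\|^2$ term. No gaps.
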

Moreover, the second part of \pref{lem:universal-optimism} shows that the bound for convex functions is also controllable by being further decomposed into three terms. The first term is the desired gradient variation. The second one measures the base learner stability, which can be canceled by optimistic algorithms \citep{COLT'13:optimistic}. We provide a self-contained analysis of the stability of optimistic online mirror descent (OMD) in \pref{app:negative-base}. At last, if the stability term of the final decisions (i.e., $\|\x_t - \x_{t-1}\|^2$) can be canceled, an $\O(\sqrt{V_T})$ meta regret for convex functions is achievable. 

To this end, we give the positive term $\|\x_t - \x_{t-1}\|^2$ a more detailed decomposition, due to the fact that the final decision is the weighted combination of base learners' decisions (i.e., $\x_t = \sum_i p_{t,i} \x_{t,i}$). A detailed proof is deferred to \pref{lem:decompose}. Specifically, it holds that
\begin{equation}
  \label{eq:decompose}
  \|\x_t - \x_{t-1}\|^2 \lesssim \|\p_t - \p_{t-1}\|_1^2 + \sumN p_{t,i} \|\x_{t,i} - \x_{t-1,i}\|^2,
\end{equation}
where the first part represents the meta learner's stability while the second one is a weighted version of the base learners' stability. In \pref{subsec:negative-term}, we cancel the two parts respectively.

\subsection{Negative Terms for Cancellation}
\label{subsec:negative-term}
In this part, we propose negative terms to cancel the two parts of \eqref{eq:decompose}. Specifically, \pref{subsec:meta-stability} analyzes the \emph{endogenous} negative stability terms of the meta learner to handle the first part, and \pref{subsec:cascaded} proposes artificially-injected cascaded corrections to cancel the second part \emph{exogenously}.

\subsubsection{Endogenous Negativity: Stability Analysis of Meta Algorithms}
\label{subsec:meta-stability}
In this part, we aim to control the meta learner's stability, measured by $\|\p_t - \p_{t-1}\|_1^2$. To this end, we leverage the two-layer meta algorithm proposed by \citet{COLT'21:Chen}, where each layer runs the \msmwc algorithm~\citep{COLT'21:Chen} but with different parameter configurations. Specifically, a single \msmwc updates via the following rule:
\begin{equation}
  \label{eq:msmwc}
  \p_t = \argmin_{\p \in \Delta_d}\ \bbr{\inner{\m_t}{\p} + \D_{\psi_t}(\p, \pbh_t)},\quad \pbh_{t+1} = \argmin_{\p \in \Delta_d}\ \bbr{\inner{\ellb_t + \a_t}{\p} + \D_{\psi_t}(\p, \pbh_t)},
  \vspace*{-1mm}
\end{equation}
where $\Delta_d$ denotes a $d$-dimensional simplex, $\psi_t(\p) = \sum_{i=1}^d \eta_{t,i}^{-1} p_i \ln p_i$ is the weighted negative entropy regularizer with time-coordinate-varying step size $\eta_{t,i}$, $\D_{\psi_t}(\p, \q) = \psi_t(\p) - \psi_t(\q) - \inner{\nabla \psi_t(\q)}{\p - \q}$ is the induced Bregman divergence for any $\p,\q \in \Delta_d$, $\m_t$ is the optimism, $\ellb_t$ is the loss vector and $\a_t$ is a bias term. 

It is worth noting that \msmwc is based on OMD, which is well-studied and proved to enjoy negative stability terms in analysis. However, the authors omitted them, which turns out to be crucial for our purpose.  In \pref{lem:MsMwC-refine} below, we extend Lemma 1 of \citet{COLT'21:Chen} by explicitly exhibiting the negative terms in \msmwc. The proof is deferred to \pref{app:msmwc}.
\begin{myLemma}
  \label{lem:MsMwC-refine}
  If $\max_{t \in [T], i \in [d]} \{|\ell_{t,i}|, |m_{t,i}|\}\le 1$, then \msmwc~\eqref{eq:msmwc} with time-invariant step sizes (i.e., $\eta_{t,i} = \eta_i$ for any $t \in [T]$)\footnote{We only focus on the proof with fixed learning rate, since it is sufficient for our analysis.} enjoys the following guarantee if $\eta_i \le \nicefrac{1}{32}$
    \begin{align*}
      \sumT \inner{\ellb_t}{\p_t} - \sumT \ell_{t,\is} \le \frac{1}{\eta_\is} \ln \frac{1}{\ph_{1,\is}} + \sum_{i=1}^d \frac{\ph_{1,i}}{\eta_i} - 8 \sumT \sum_{i=1}^d \eta_i p_{t,i} (\ell_{t,i} - m_{t,i})^2 &\\
      + 16 \eta_\is \sumT (\ell_{t,\is} - m_{t,\is})^2 - 4 \sumTT \|\p_t - \p_{t-1}\|_1^2 &.
    \end{align*}
\end{myLemma}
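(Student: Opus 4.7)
The plan is to refine the standard optimistic OMD analysis of \msmwc{} by \emph{retaining} the endogenous negative Bregman-divergence stability terms that Lemma~1 of \citet{COLT'21:Chen} discards, and then converting them to the $\ell_1$ stability term via a Pinsker inequality enabled by the step-size condition $\eta_i \le \nicefrac{1}{32}$. All other terms in the stated bound should appear exactly as in Chen et al., so the extra work is confined to extracting the $-4\sum_{t=2}^T\|\p_t-\p_{t-1}\|_1^2$ piece without degrading the rest.

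\textbf{Step 1 (per-round identity).} Write $\g_t \define \ellb_t + \a_t$ and apply the three-point lemma to each of the two argmin updates in~\eqref{eq:msmwc}. Summing and rearranging gives, for any $\p^\star \in \Delta_d$,
\begin{align*}
\inner{\g_t}{\p_t - \p^\star} \le{}& \D_{\psi_t}(\p^\star, \pbh_t) - \D_{\psi_t}(\p^\star, \pbh_{t+1}) + \inner{\g_t - \m_t}{\p_t - \pbh_{t+1}} \\
&- \D_{\psi_t}(\p_t, \pbh_t) - \D_{\psi_t}(\pbh_{t+1}, \p_t).
\end{align*}
Choosing the \msmwc-style bias $a_{t,i} = 32\eta_i(\ell_{t,i}-m_{t,i})^2$ and plugging in $\p^\star = \e_{\is}$, the $\inner{\a_t}{\cdot}$ pieces generate both the $16\eta_{\is}\sum_t(\ell_{t,\is}-m_{t,\is})^2$ term at the comparator and the $-8\sum_t\sum_i \eta_i p_{t,i}(\ell_{t,i}-m_{t,i})^2$ negative quadratic. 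The telescoping difference $\sum_t [\D_{\psi_t}(\p^\star,\pbh_t) - \D_{\psi_t}(\p^\star,\pbh_{t+1})]$ collapses to the initialization cost $\eta_{\is}^{-1}\ln(1/\ph_{1,\is}) + \sum_i \ph_{1,i}/\eta_i$.

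\textbf{Step 2 (splitting the cross term).} Bound $\inner{\g_t - \m_t}{\p_t - \pbh_{t+1}}$ by a weighted Young inequality calibrated against $\psi_t$'s coordinate-wise $\eta_i^{-1}$-strong convexity, leaving a controllable residual absorbed by the negative quadratic of Step~1. Crucially, tune the Young constant so that \emph{only half} of $\D_{\psi_t}(\pbh_{t+1},\p_t)$ is consumed; the remaining $\tfrac{1}{2}\D_{\psi_t}(\pbh_{t+1},\p_t)$ survives together with the full $-\D_{\psi_t}(\p_t,\pbh_t)$ as endogenous negativity.

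\textbf{Step 3 (Pinsker and time shift).} Because $\psi_t$ is weighted negative entropy with weights $\eta_i^{-1} \ge 32$, the coordinate-wise Pinsker inequality yields $\D_{\psi_t}(\p,\q) \ge 16\|\p-\q\|_1^2$ for all $\p,\q \in \Delta_d$. Hence the surviving negativity from Step~2 satisfies
\begin{align*}
\sumT \sbr{\D_{\psi_t}(\p_t,\pbh_t) + \tfrac{1}{2}\D_{\psi_t}(\pbh_{t+1},\p_t)} \ge 16\sumT \|\p_t - \pbh_t\|_1^2 + 8\sumT \|\pbh_{t+1} - \p_t\|_1^2.
\end{align*}
Re-index the second sum by $t \mapsto t-1$ to pair $\|\pbh_t - \p_{t-1}\|_1^2$ with $\|\p_t - \pbh_t\|_1^2$ at the same $t$, and apply $\|\p_t-\p_{t-1}\|_1^2 \le 2\|\p_t-\pbh_t\|_1^2 + 2\|\pbh_t-\p_{t-1}\|_1^2$. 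The resulting coefficient is at least $\min\{16/2,\,8/2\} = 4$, which produces exactly the $-4\sum_{t=2}^T \|\p_t-\p_{t-1}\|_1^2$ term in the lemma.

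\textbf{Main obstacle.} The delicate point is the split in Step~2: the Young constant must be chosen so that (i) the cross term is fully absorbed into the per-coordinate negative quadratic underlying the $-8\sum_t\sum_i \eta_i p_{t,i}(\ell_{t,i}-m_{t,i})^2$ term, (ii) enough negativity survives on \emph{both} $\D_{\psi_t}(\p_t,\pbh_t)$ and $\D_{\psi_t}(\pbh_{t+1},\p_t)$ so that after the time shift and triangle inequality the coefficient of $\|\p_t-\p_{t-1}\|_1^2$ is at least $-4$, and (iii) none of the other constants in the bound (the $16\eta_{\is}$ and the $-8$) are worsened. The condition $\eta_i \le 1/32$ is exactly what gives the factor $16$ in Pinsker's bound and hence the necessary slack for all three requirements to hold simultaneously; once these are balanced, the remainder of the argument is a direct specialization of Chen et al.'s Lemma~1.
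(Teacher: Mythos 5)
Your overall route is the same as the paper's: run the optimistic-OMD analysis of \msmwc while \emph{retaining} the Bregman stability terms $\D_{\psi_t}(\pbh_{t+1},\p_t)+\D_{\psi_t}(\p_t,\pbh_t)$, convert them to $\|\p_t-\p_{t-1}\|_1^2$ via Pinsker's inequality (enabled by $\eta_i\le\nicefrac{1}{32}$), and let the bias term generate the negative weighted quadratic; your Step 3 arithmetic is correct. Two points, however, do not go through as written.

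First, the bias constant. The algorithm analyzed in the lemma has $a_{t,i}=16\eta_{t,i}(\ell_{t,i}-m_{t,i})^2$; it is part of the update, not a free parameter of the analysis. With your choice $a_{t,i}=32\eta_i(\ell_{t,i}-m_{t,i})^2$, moving $\sumT\inner{\a_t}{\p_t-\e_\is}$ to the right-hand side produces $+32\eta_\is\sumT(\ell_{t,\is}-m_{t,\is})^2$ at the comparator, not the claimed $16\eta_\is$, so your constants do not reproduce the stated bound. With the correct bias the accounting is: the cross term contributes $+2\sum_i\eta_ip_{t,i}(\ell_{t,i}-m_{t,i}+a_{t,i})^2\le 8\sum_i\eta_ip_{t,i}(\ell_{t,i}-m_{t,i})^2$, while the bias contributes $-16\sum_i\eta_ip_{t,i}(\ell_{t,i}-m_{t,i})^2$ on the algorithm side and $+16\eta_\is\sumT(\ell_{t,\is}-m_{t,\is})^2$ on the comparator side, netting exactly the stated $-8$ and $+16\eta_\is$.

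Second, Step 2 is the crux and would fail if taken literally. A Young/H\"older bound against the $\ell_1$-strong convexity of $\psi_t$ (modulus $\min_i\eta_i^{-1}$) leaves a residual of order $\eta\,\|\ellb_t+\a_t-\m_t\|_\infty^2$ --- an \emph{unweighted} quadratic --- which cannot be absorbed by $-\sum_i\eta_ip_{t,i}(\ell_{t,i}-m_{t,i})^2$, since $p_{t,i}$ may be arbitrarily small on the maximizing coordinate. What is needed is the local-norm bound: maximize $\inner{\ellb_t+\a_t-\m_t}{\p_t-\q}-\frac{1}{2}\D_{\psi_t}(\q,\p_t)$ over $\q$ in closed form, obtaining $q_i^\star=p_{t,i}\exp(-2\eta_i(\ell_{t,i}-m_{t,i}+a_{t,i}))$, and then apply $e^{-x}-1+x\le x^2$ (valid for $x\ge-1$) to get $2\sum_i\eta_ip_{t,i}(\ell_{t,i}-m_{t,i}+a_{t,i})^2$. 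This is also the second place where $\eta_i\le\nicefrac{1}{32}$ is needed --- it guarantees $|2\eta_i(\ell_{t,i}-m_{t,i}+a_{t,i})|\le1$ together with the boundedness of $\ell_{t,i},m_{t,i}$ --- so the step-size condition is not only for Pinsker, as your closing paragraph suggests.
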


The two-layer meta algorithm is constructed by \msmwc in the following way. Briefly, both layers run \msmwc, but with different parameter configurations. Specifically, the top \msmwc (indicated by \msmwcM) connects with $K = \O(\log T)$ \textsc{MsMwC}s (indicated by \msmwcB), and each \msmwcB is further connected with $N$ base learners (as specified in \pref{sec:preliminary}). The specific parameter configurations will be illuminated later. The two-layer meta algorithm is provable to enjoy a regret guarantee of $\O(\sqrt{\Vs \ln \Vs})$ (Theorem 5 of \citet{COLT'21:Chen}), where $\Vs$ is analogous to $\sum_t (\ell_{t,\is} - m_{t,\is})^2$, but within a multi-layer context (a formal definition will be shown later). By choosing the optimism as $m_{t,i} = \langle \ellb_t, \p_t \rangle$,\footnote{Although $\p_t$ is unknown when using $m_{t,i}$ due to \eqref{eq:msmwc}, $m_{t,i} = \langle \ellb_t, \p_t \rangle$ remains the same for all dimension $i \in [d]$ and can thus omitted in the algorithm, i.e., it only suffices to update as $\p_t = \argmin\nolimits_{\p \in \Delta_d} \D_{\psi_t}(\p, \pbh_t)$. Interested readers can refer to Section 2.1 of \citet{COLT'21:Chen} for more details.}, it can recover the guarantee of \mlprod, although up to an $\O(\ln \Vs)$ factor (leading to an $\Oh(\sqrt{V_T})$ meta regret), but with additional negative terms in analysis. Note that single \msmwc enjoys an $\O(\sqrt{\Vs \ln T})$ bound, where the extra $\ln T$ factor would ruin the desired $\O(\log V_T)$ bound for exp-concave and strongly convex functions. This is the reason for choosing a second-layer meta algorithm, overall resulting in a \emph{three-layer} structure.  

For clarity, we summarize the notations of the three-layer structure in \pref{table:notation}. Besides, previous notions need to be extended analogously. Specifically, instead of \eqref{eq:Zhang-decompose}, regret is now decomposed as 
\begin{equation*}
  \Reg_T = \mbr{\sumT f_t(\x_t) - \sumT f_t(\x_{t,\ks,\is})} + \mbr{\sumT f_t(\x_{t,\ks,\is}) - \min_{\x \in \X} \sumT f_t(\x)}.
\end{equation*}
The quantity $\Vs$ is defined as $\Vs \define \sum_t (\ell_{t,\ks,\is} - m_{t,\ks,\is})^2$, where \mbox{$\ell_{t,k,i} = \inner{\nabla f_t(\x_t)}{\x_{t,k,i}}$} and $m_{t,k,i} = \inner{\nabla f_t(\x_t)}{\x_t} - \langle \nabla f_{t-1}(\x_{t-1}), \x_{t-1} - \x_{t-1,k,i} \rangle$ follows the same sprit as \pref{lem:universal-optimism}.

\begin{table}[!t]
  \centering
  \renewcommand*{\arraystretch}{1.2}
  \caption{\small{Notations of the three-layer structure. The first column presents the index of layers. The rest columns illuminate the notations for the algorithms, losses, optimisms, decisions and outputs of each layer.}}
  \vspace{1mm}
  \label{table:notation}
  \resizebox{0.85\textwidth}{!}{
  \begin{tabular}{c|c|c|c|c|c}
    \hline 

    \hline
    \multicolumn{1}{c|}{\textbf{Layer}} & \textbf{Algorithm} & \textbf{Loss} & \textbf{Optimism} & \textbf{Decision} & \textbf{Output} \\\hline
    Top (Meta) & \msmwc & $\ellb_t$ & $\m_t$ & $\q_t \in \Delta_K$ & $\x_t = \sum_{k} q_{t,k} \x_{t,k}$ \\ \hline
    Middle (Meta) & \msmwc & $\ellb_{t,k}$ & $\m_{t,k}$ & $\p_{t,k} \in \Delta_N$ & $\x_{t,k} = \sum_{i} p_{t,k,i} \x_{t,k,i}$ \\ \hline
    Bottom (Base) & Optimistic OMD & $f_t(\cdot)$ & $\nabla f_{t-1}(\cdot)$ & $\x_{t,k,i} \in \X$ & $\x_{t,k,i}$ \\ \hline 
    
    \hline
  \end{tabular}}
\end{table}

At the end of this part, we explain why we choose \msmwc as the meta algorithm. Apparently, a direct try is to keep using \mlprod following \citet{ICML'22:universal}. However, it is still an open problem to determine whether \mlprod contains negative stability terms in the analysis, which is essential to realize effective cancellation in our problem. Another try is to explore the titled exponentially weighted average (TEWA) as the meta algorithm, following another line of research~\citep{NeurIPS'16:Metagrad}, as introduced in \pref{subsec:universal}. Unfortunately, its stability property is also unclear. Investigating the negative stability terms in these algorithms is an important open problem, but beyond the scope of this work.

\subsubsection{Exogenous Negativity: Cascaded Correction Terms}
\label{subsec:cascaded}
In this part, we aim to deal with the second term in \eqref{eq:decompose} (i.e., $\sum_i p_{t,i} \|\x_{t,i} - \x_{t-1,i}\|^2$). Inspired by the work of \citet{arXiv'21:Sword++} on the gradient-variation dynamic regret in non-stationary online learning,\footnote{This work proposes an improved dynamic regret minimization algorithm compared to its conference version~\citep{NeurIPS'20:Sword}, which introduces the correction terms to the meta-base online ensemble structure and thus improves the gradient query complexity from $\O(\log T)$ to 1 within each round.} we incorporate exogenous correction terms for cancellation. Concretely, we inject \emph{cascaded correction terms} to both top and middle layers. Specifically, the loss $\ellb_t$ and the optimism $\m_t$ of \msmwcM are chosen as
\begin{equation}
  \label{eq:correction-top}
  \ell_{t,k} \define \inner{\nabla f_t(\x_t)}{\x_{t,k}} + \lambda_1 \|\x_{t,k} - \x_{t-1,k}\|^2, m_{t,k} \define \inner{\hat{\m}_{t,k}}{\p_{t,k}} + \lambda_1 \|\x_{t,k} - \x_{t-1,k}\|^2.
\end{equation}
The loss $\ellb_{t,k}$ and optimism $\m_{t,k}$ of \msmwcB are chosen similarly as
\begin{equation}
  \label{eq:correction-mid}
  \ell_{t,k,i} \define \inner{\nabla f_t(\x_t)}{\x_{t,k,i}} + \lambda_2 \|\x_{t,k,i} - \x_{t-1,k,i}\|^2, m_{t,k,i} \define \hat{m}_{t,k,i} + \lambda_2 \|\x_{t,k,i} - \x_{t-1,k,i}\|^2,
\end{equation}
where $\hat{\m}_{t,k} \define (\hat{m}_{t,k,1}, \ldots, \hat{m}_{t,k,N})$ and $\hat{m}_{t,k,i} = \inner{\nabla f_t(\x_t)}{\x_t} - \langle \nabla f_{t-1}(\x_{t-1}), \x_{t-1} - \x_{t-1,k,i} \rangle$ denotes our optimism, which uses the same idea as \pref{lem:universal-optimism} in \pref{subsec:universal-optimism}.

To see how the correction term works, consider a simpler problem with regret $\sum_t \langle \ellb_t, \q_t \rangle - \sum_t \ell_{t,\ks}$. If we instead optimize the corrected loss $\ellb_t + \b_t$ and obtain a regret bound of $R_T$, then moving the correction terms to the right-hand side, the original regret is at most $R_T - \sum_t \sum_{k} q_{t,k} b_{t,k} + \sum_t b_{t, \ks}$, where the negative term of $- \sum_t \sum_{k} q_{t,k} b_{t,k}$ can be leveraged for cancellation. Meanwhile, the algorithm is required to handle an extra term of $\sum_t b_{t, \ks}$, which only relies on the $\ks$-th dimension. In the next part, we will discuss about how cascaded corrections and negative stability terms of meta algorithms realize effective collaboration for cancellation in the multi-layer online ensemble.

It is noteworthy that our work is the \emph{first} to introduce correction mechanisms to universal online learning, whereas prior works use it for different purposes, such as non-stationary online learning~\citep{arXiv'21:Sword++} and the multi-scale expert problem~\citep{COLT'21:Chen}. Distinctively, different from the conventional two-layer algorithmic frameworks seen in prior studies, deploying this technique to a three-layer structure necessitates a comprehensive use and extensive adaptation of it.

\subsection{Overall Algorithm: A Multi-layer Online Ensemble Structure}
\label{subsec:algorithm}
In this part, we conclude our \mbox{\emph{three-layer online ensemble}} approach in \pref{alg:main}. In \pref{line:collect-base}, the decisions are aggregated from bottom to top for the final output. In \pref{line:observe}, the learner suffers the loss of the decision, and the environments return the gradient information of the loss function. In \pref{line:surrogate}, the algorithm constructs the surrogate losses and optimisms for the two-layer meta learner. In \pref{line:update-meta}-\ref{line:update-base-surrogate}, the update is conducted from top to bottom. Note that in \pref{line:update-base}, our algorithm requires multiple, concretely $\O(\log^2 T)$, gradient queries per round since it needs to query $\nabla f_t(\x_{t,k,i})$ for each base learner, making it inefficient when the gradient evaluation is costly. To this end, in \pref{sec:efficiency}, we improve the algorithm's gradient query complexity to 1 per round, corresponding to \pref{line:surrogate-base}-\ref{line:update-base-surrogate}, via a novel regret decomposition and carefully designed surrogate loss functions.

In \pref{fig:cascaded}, we illustrate the detailed procedure of the collaboration in our multi-layer online ensemble approach. Specifically, we aim to deal with the positive term of $\|\x_t - \x_{t-1}\|^2$, which stems from the universal optimism design proposed in \pref{subsec:universal-optimism}. Since $\x_t = \sum_k q_{t,k} \x_{t,k}$, the positive term can be further decomposed into two parts: $\|\q_t - \q_{t-1}\|_1^2$ and $\sum_k q_{t,k} \|\x_{t,k} - \x_{t-1,k}\|^2$, which can be canceled by the negative and correction terms in \msmwcM, respectively. Note that the correction comes at the expense of an extra term of $\|\x_{t,\ks} - \x_{t-1,\ks}\|^2$, which can be decomposed similarly into two parts: $\|\p_{t,\ks} - \p_{t-1,\ks}\|_1^2$ and $\sum_i p_{t,\ks,i} \|\x_{t,\ks,i} - \x_{t-1,\ks,i}\|^2$ because of $\x_{t,k} = \sum_i p_{t,k,i} \x_{t,k,i}$. The negative term and corrections in the middle layer (specifically, the $\ks$-th \msmwcB) can be leveraged for cancellation. This correction finally generate $\|\x_{t,\ks,\is} - \x_{t-1,\ks,\is}\|^2$, which can be handled by choosing optimistic OMD as base algorithms.

\begin{algorithm}[!t]
  \caption{Universal OCO with Gradient-variation Guarantees}
  \label{alg:main}
  \begin{algorithmic}[1]
  \REQUIRE Curvature coefficient pool $\H$, \msmwcB number $K$, base learner number $N$
  \STATE \textbf{Initialize}: Top layer: $\A^{\textnormal{top}}$~---~\msmwcM with $\eta_k = (C_0 \cdot 2^k)^{-1}$ and $\qh_{1,k} = \eta_k^2/\sumK \eta_k^2$ \\
  \makebox[1.44cm]{} Middle layer: $\{\A^{\textnormal{mid}}_k\}_{k \in [K]}$~---~\msmwcB with step size $2 \eta_k$ and $\ph_{1,k,i} = \nicefrac{1}{N}$ \\
  \makebox[1.44cm]{} Bottom layer: $\{\B_{k,i}\}_{k \in [K], i \in [N]}$~---~base learners as specified in \pref{sec:preliminary}
  \FOR{$t=1$ {\bfseries to} $T$}
    \STATE Receive $\x_{t,k,i}$ from $\B_{k,i}$, obtain $\x_{t,k} = \sum_i p_{t,k,i} \x_{t,k,i}$ and submit $\x_t = \sum_k q_{t,k} \x_{t,k}$ \label{line:collect-base}
    \STATE Suffer $f_t(\x_t)$ and observe the gradient information $\nabla f_t(\cdot)$ \label{line:observe}
    \STATE Construct $(\ellb_t, \m_t)$~\eqref{eq:correction-top} for $\A^{\textnormal{top}}$ and $(\ellb_{t,k}, \m_{t,k})$~\eqref{eq:correction-mid} for $\A^{\textnormal{mid}}_k$ \label{line:surrogate}
    \STATE $\A^{\textnormal{top}}$ updates to $\q_{t+1}$ and $\A^{\textnormal{mid}}_k$ updates to $\p_{t+1,k}$ \label{line:update-meta}
    \STATE \textbf{Multi-gradient feedback model:} 
    \STATE \qquad Send gradient $\nabla f_t(\cdot)$ to $\B_{k,i}$ for update \label{line:update-base} \LineComment{$\O(\log^2 T)$ gradient queries}
    \STATE \textbf{One-gradient feedback model:}
    \STATE \qquad Construct surrogates $\hsc_{t,i}(\cdot)$, $\hexp_{t,i}(\cdot)$, $h^{\text{c}}_{t,i}(\x)$ using only $\nabla f_t(\x_t)$ \label{line:surrogate-base}
    \STATE \qquad Send the surrogate functions to $\B_{k,i}$ for update \LineComment{Only \emph{one} gradient query} \label{line:update-base-surrogate}
  \ENDFOR
  \end{algorithmic}
\end{algorithm}

As a final remark, although it is possible to treat the two-layer meta algorithm as a single layer, its analysis will become much more complicated than that in one layer and is unsuitable for extending to more layers. In contrast, our layer-by-layer analytical approach paves a systematic and principled way for analyzing the dynamics of the online ensemble framework with even more layers.

\begin{figure*}[!t]
  \centering
  \includegraphics[clip, trim=4.3cm 4.3cm 3.3cm 3.4cm, width=0.9\textwidth]{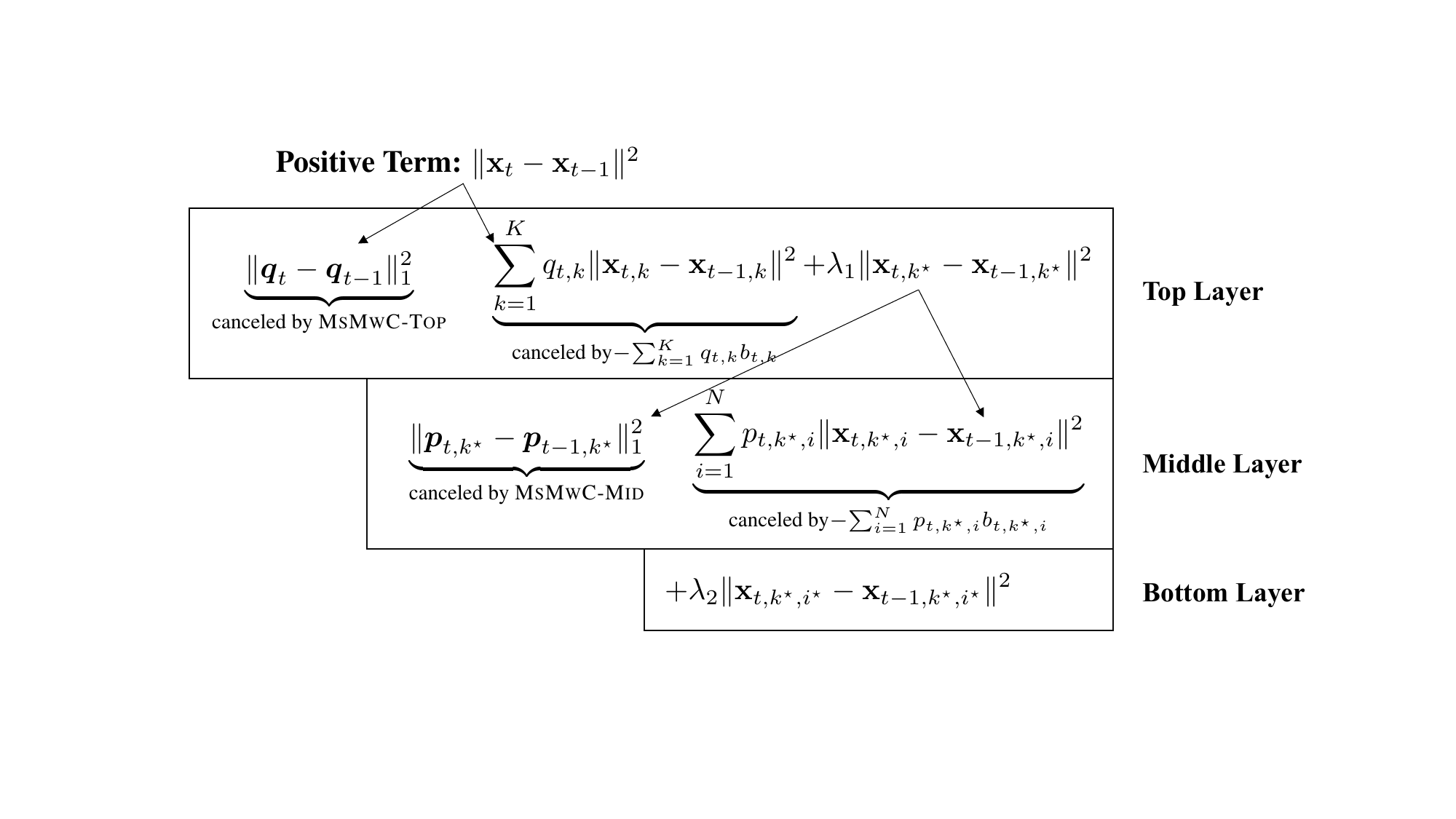}
  \caption{\small{Decomposition of the positive term $\|\x_t - \x_{t-1}\|^2$} and how it is handled by the multi-layer online ensemble via endogenous negativity from meta algorithm and exogenous negativity from cascaded corrections.}
  \label{fig:cascaded}
  \vspace*{-3mm}
\end{figure*}

\subsection{Universal Regret Guarantees}
\label{subsec:results}
In this part, we conclude our main theoretical result and provide several implications and applications to validate its importance and practical potential. \pref{thm:main} summarizes the main result, universal regret guarantees in terms of the gradient variation. The proof is deferred to \pref{app:main}.
\begin{myThm}
  \label{thm:main}
  Under Assumptions~\ref{assum:boundedness} and \ref{assum:smoothness}, \pref{alg:main} obtains $\O(\log V_T)$, $\O(d \log V_T)$ and $\Oh(\sqrt{V_T})$ regret bounds for strongly convex, exp-concave and convex functions, respectively.
\end{myThm}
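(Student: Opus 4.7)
The plan is to bound the meta and base regrets separately using the decomposition
\[
\Reg_T = \underbrace{\sumT f_t(\x_t) - \sumT f_t(\x_{t,\ks,\is})}_{\meta} + \underbrace{\sumT f_t(\x_{t,\ks,\is}) - \min_{\x \in \X} \sumT f_t(\x)}_{\base},
\]
where $(\ks,\is)$ indexes the ``correct'' base learner whose function family and discretized curvature guess most closely match the truth (so that $\alpha_{\is} \le \alpha \le 2\alpha_{\is}$ in the exp-concave case, and analogously for strong convexity; $\ks$ is the middle-layer cell containing~$\is$). I would linearize both regrets via convexity so that from now on only $r_{t,k,i} = \inner{\nabla f_t(\x_t)}{\x_t - \x_{t,k,i}}$ appears, and track the correction terms from \eqref{eq:correction-top}--\eqref{eq:correction-mid} explicitly.

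For the meta regret, I would apply \pref{lem:MsMwC-refine} twice: once to \msmwcM with corrected loss/optimism $(\ellb_t,\m_t)$, and once to the $\ks$-th \msmwcB with $(\ellb_{t,\ks},\m_{t,\ks})$. Chaining the two bounds and moving the artificial correction terms to the right-hand side produces the following template: a positive second-order term $\eta_{\ks}\sumT(\ell_{t,\ks,\is}-m_{t,\ks,\is})^2$, the meta negativity $-\sumTT(\|\q_t-\q_{t-1}\|_1^2 + q_{t,\ks}\|\p_{t,\ks}-\p_{t-1,\ks}\|_1^2)$, the residual correction $\sumT\|\x_{t,\ks,\is}-\x_{t-1,\ks,\is}\|^2$, and the per-layer $-\sum_{t,k}q_{t,k}\|\x_{t,k}-\x_{t-1,k}\|^2$ and $-\sum_{t,i}p_{t,\ks,i}\|\x_{t,\ks,i}-\x_{t-1,\ks,i}\|^2$. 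Invoking \pref{lem:universal-optimism} on the leading second-order term then yields (i)~a multiple of $\sum_t r_{t,\ks,\is}^2$ in the exp-concave/strongly convex regimes, or (ii)~$V_T + \sum_t\|\x_{t,\ks,\is}-\x_{t-1,\ks,\is}\|^2 + \sum_t\|\x_t-\x_{t-1}\|^2$ in the convex regime. Using the decomposition \eqref{eq:decompose} twice (as drawn in \pref{fig:cascaded}), the stability term $\sum_t\|\x_t-\x_{t-1}\|^2$ is exactly the sum of all the negativities listed above, so the cascaded cancellation closes up and what remains on the meta side is $\Oh(\sqrt{V_T})$ in the convex case and a term proportional to $\sum_t r_{t,\ks,\is}^2$ (plus $\Oh(1)$) in the curved cases.

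For the base regret, I would invoke the textbook guarantees for each base learner, all instantiated with optimistic updates using $\nabla f_{t-1}(\cdot)$ as the hint so that gradient-variation bounds are available. Concretely: optimistic OGD with step size $1/(\lambda_{\is} t)$ gives base regret $\O(\log V_T) - \tfrac{\lambda}{2}\sum_t\|\x_t-\x_{t,\ks,\is}\|^2$; optimistic ONS with parameter $\alpha_{\is}$ gives base regret $\O(d\log V_T) - \tfrac{\alpha}{2}\sum_t r_{t,\ks,\is}^2$; and optimistic OMD for convex functions gives $\O(\sqrt{V_T})$ along with a negative base-stability term that was already used to cancel $\sum_t\|\x_{t,\ks,i}-\x_{t-1,\ks,i}\|^2$ inside the meta analysis. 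Summing meta and base regrets, the negative curvature term $-\tfrac{\alpha}{2}\sum_t r_{t,\ks,\is}^2$ (respectively $-\tfrac{\lambda}{2}\sum_t\|\x_t-\x_{t,\ks,\is}\|^2$, after applying smoothness to pass between $r_{t,\ks,\is}^2$ and $\|\x_t-\x_{t,\ks,\is}\|^2$) absorbs the leftover $\sum_t r_{t,\ks,\is}^2$ from the meta side, yielding $\O(d\log V_T)$ and $\O(\log V_T)$, respectively. The convex case simply adds the two $\Oh(\sqrt{V_T})$ contributions.

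The main obstacle will be the \emph{bookkeeping of the cascaded cancellation}: verifying that the constants $\lambda_1,\lambda_2$ in \eqref{eq:correction-top}--\eqref{eq:correction-mid} and the step sizes $\eta_k$ of the two \msmwc layers are simultaneously small enough that every positive stability term in the sum $\meta+\base$ is dominated by a negative term generated elsewhere, while still leaving the extra optimism-induced correction $\lambda_1 \sum_t \|\x_{t,\ks}-\x_{t-1,\ks}\|^2$ and $\lambda_2\sum_t\|\x_{t,\ks,\is}-\x_{t-1,\ks,\is}\|^2$ controllable. A secondary technical point is that the second-order bound from \pref{lem:MsMwC-refine} is $16\eta_{\ks}\sum_t(\ell_{t,\ks,\is}-m_{t,\ks,\is})^2$, which in the convex case is only $\Oh(\sqrt{V_T})$ after optimizing over the discretized grid of $\eta_{\ks}$'s~-- this is where the extra $\log V_T$ factor in $\Oh(\sqrt{V_T})$ comes from and why the grid $\H$ and the exponentially spaced $\eta_k$'s must be chosen jointly. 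Once these parameter relations are pinned down, the three regret bounds fall out directly.
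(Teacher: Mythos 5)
Your proposal follows essentially the same route as the paper: decompose into meta and base regret around the best expert $(\ks,\is)$, chain \pref{lem:MsMwC-refine} over the two \msmwc layers, invoke \pref{lem:universal-optimism} on the resulting second-order term, and close the cascaded cancellation of $\|\x_t-\x_{t-1}\|^2$ via \eqref{eq:decompose}, the endogenous negativity, and the correction terms, with the base learners' optimistic guarantees and a final bookkeeping of $C_0,\lambda_1,\lambda_2$. One small misattribution: the negative curvature terms $-\tfrac{\alpha}{2}\sum_t r_{t,\ks,\is}^2$ and $-\tfrac{\lambda}{2}\sum_t\|\x_t-\x_{t,\ks,\is}\|^2$ cannot come from the base learners' guarantees (which compare $\x_{t,\ks,\is}$ to $\xs$ and never see $\x_t$); in the paper they arise from applying exp-concavity/strong convexity of $f_t$ directly to the meta regret $f_t(\x_t)-f_t(\x_{t,\ks,\is})$, which is where you should place them.
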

\pref{thm:main} improves the results of~\citet{ICML'22:universal} by not only maintaining the optimal rates for strongly convex and exp-concave functions but also taking advantage of the small gradient variation for convex functions when $V_T \ll F_T$. For example, if $f_1 = \ldots = f_T = f$ and $\min_{\x \in \X} f(\x) = 1$, our bound is much better since $V_T = 0$ while $F_T = T$. Moreover, \pref{alg:main} also provably achieves universal \emph{small-loss} guarantees \emph{without} any algorithmic modification and thus safeguards the case when $F_T \le V_T$. We conclude the result below and provide the proof in \pref{app:FT}.
\begin{myCor}
  \label{cor:FT}
  Under Assumptions~\ref{assum:boundedness} and \ref{assum:smoothness}, if $f_t(\cdot) \ge 0$, \pref{alg:main} obtains $\O(\log F_T)$, $\O(d \log F_T)$ and $\Oh(\sqrt{F_T})$ regret bounds for strongly convex, exp-concave and convex functions.
\end{myCor}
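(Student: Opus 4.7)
The plan is to reuse \pref{alg:main} verbatim and only redo the analysis of \pref{thm:main}, converting the gradient-variation bound into a small-loss bound via the classical self-bounding property of non-negative smooth functions: \pref{assum:smoothness} together with $f_t(\x) \ge 0$ implies $\|\nabla f_t(\x)\|^2 \le 4L f_t(\x)$ for every $\x \in \X$.

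First, I would revisit the proof of \pref{lem:universal-optimism}. The convex-case bound there follows from the decomposition $r_{t,i} - m_{t,i} = \langle \nabla f_t(\x_t) - \nabla f_{t-1}(\x_{t-1}), \x_t - \x_{t,i}\rangle + \langle \nabla f_{t-1}(\x_{t-1}), \x_t - \x_{t-1}\rangle + \langle \nabla f_{t-1}(\x_{t-1}), \x_{t-1,i} - \x_{t,i}\rangle$, which after squaring produces a term $\sumT \|\nabla f_t(\x_t) - \nabla f_{t-1}(\x_{t-1})\|^2$ that \pref{lem:universal-optimism} then upper-bounds by the worst-case $V_T$ plus a smoothness-induced stability term. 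For the small-loss corollary I would instead bound this quantity via $\|\nabla f_t(\x_t) - \nabla f_{t-1}(\x_{t-1})\|^2 \le 2\|\nabla f_t(\x_t)\|^2 + 2\|\nabla f_{t-1}(\x_{t-1})\|^2 \le 8L(f_t(\x_t) + f_{t-1}(\x_{t-1}))$, whose sum over $t$ is $\O(L(F_T + \Reg_T))$. The exp-concave and strongly convex cases of \pref{lem:universal-optimism} are handled analogously.

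Next, I would thread this modified bound through the three-layer regret analysis of \pref{thm:main}. The residual stability terms $\sumTT\|\x_t - \x_{t-1}\|^2$ and $\sumTT\|\x_{t,\ks,\is} - \x_{t-1,\ks,\is}\|^2$ are still absorbed by exactly the same endogenous negativity of \msmwc (\pref{lem:MsMwC-refine}) and the cascaded correction terms of \pref{subsec:cascaded}, because those cancellation mechanisms are insensitive to whether the second-order driver is controlled by $V_T$ or by $F_T + \Reg_T$. The base-layer bounds likewise convert from gradient-variation to small-loss by the same self-bounding trick. Combining the three layers produces self-bounded inequalities
\[
  \Reg_T \lesssim \log(F_T + \Reg_T),\quad d\log(F_T + \Reg_T),\quad \sqrt{(F_T + \Reg_T)\log(F_T + \Reg_T)}
\]
for strongly convex, exp-concave and convex functions respectively.

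Finally, each self-bounded inequality is solved by standard manipulation: in the convex case, splitting on whether $\Reg_T \le F_T$ and applying the quadratic formula yields $\Reg_T = \Oh(\sqrt{F_T})$ after absorbing $\log(F_T+\Reg_T)$ into the $\Oh(\cdot)$ notation; in the logarithmic cases, $\log(F_T + \Reg_T) \le \log F_T + \log(1 + \Reg_T/F_T)$ combined with the worst-case $\O(\log T)$ bound from \pref{thm:main} as a bootstrap gives $\O(\log F_T)$ and $\O(d\log F_T)$. The main obstacle will be the second step: verifying, layer by layer, that the substitution of the worst-case $V_T$ by a quantity depending on the algorithm's own iterates and losses does not disturb any of the cancellations orchestrated by the cascaded-correction design, and that every extra term generated along the way is still matched either by Bregman-divergence negativity in \msmwc or by base-layer optimistic-OMD stability.
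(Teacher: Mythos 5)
Your proposal is correct and follows essentially the same route as the paper: the paper also keeps \pref{alg:main} unchanged, replaces the empirical gradient variation by $16L\sum_t f_t(\x_t) = \O(L(F_T + \Reg_T))$ via the self-bounding inequality \eqref{eq:to-small-loss}, reuses the identical cancellation machinery for the stability terms, and then resolves the implicit inequality (by absorbing the linear-in-$\sum_t f_t(\x_t)$ term into the left-hand side for the convex case and by invoking \pref{lem:small-loss-log} for the logarithmic cases, rather than your bootstrap/quadratic-formula phrasing, but these are interchangeable standard manipulations).
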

Due to the connection of the gradient-variation with \emph{stochastic and adversarial OCO} \citep{NeurIPS'22:SEA} and \emph{game theory}~\citep{NIPS'15:Syrgkanis}, our results can be immediately applied and achieve best known universal guarantees therein. Due to page limits, we defer applications to \pref{app:applications}.

\section{Improved Gradient Query Complexity}
\label{sec:efficiency}
Though achieving favorable theoretical guarantees in \pref{sec:approach}, one caveat is that our algorithm requires $\O(\log^2 T)$ gradient queries per round since it needs to query $\nabla f_t(\x_{t,k,i})$ for all $k \in [K], i \in [N]$, making it computational-inefficient when the gradient evaluation is costly, e.g., in nuclear norm optimization~\citep{ICML'09:Ye} and mini-batch optimization~\citep{KDD'14:Li}. The same concern also appears in the approach of \citet{ICML'22:universal}, who provided small-loss and worst-case regret guarantees for universal online learning. By contrast, traditional algorithms such as OGD typically work under the \emph{one-gradient} feedback setup, namely, they only require one gradient $\nabla f_t(\x_t)$ for the update. In light of this, it is natural to ask whether there is a universal algorithm that can maintain the desired regret guarantees while using only one gradient query per round.

We answer the question affirmatively by reducing the gradient query complexity to $1$ per round. To describe the high-level idea, we first consider the case of \emph{known} $\lambda$-strong convexity within a two-layer structure, e.g., adaptive regret minimization for strongly convex functions~\citep{IJCAI'18:Wang}. The regret can be upper-bounded as $\Reg_T \le [\sum_t h_t(\x_t) - \sum_t h_t(\x_{t,\is})] + [\sum_t h_t(\x_{t,\is}) - \sum_t h_t(\xs)]$, where $\xs \in \argmin_{\x \in \X} \sum_t f_t(\x)$ and $h_t(\x) \define \inner{\nabla f_t(\x_t)}{\x} + \lambda \|\x - \x_t\|^2 / 2$ is a second-order surrogate of the original function $f_t$. This yields a \emph{homogeneous} surrogate for both meta and base algorithms --- the meta algorithm uses the same evaluation function across all the base learners, who in turn use this function to update their base-level decisions. 

In the universal online learning (i.e., \emph{unknown} $\lambda$), a natural adaptation would be a \emph{heterogeneous} surrogate $h_{t,i}(\x) \define \inner{\nabla f_t(\x_t)}{\x} + \lambda_i \|\x - \x_t\|^2 / 2$ for the $i$-th base learner, where $\lambda_i$ is a guess of the true curvature $\lambda$ from the candidate pool $\H$~\eqref{eq:candidate-pool}. Consequently, the regret decomposition with respect to this surrogate becomes $\Reg_T \le [\sum_t h_{t,\is}(\x_t) - \sum_t h_{t,\is}(\x_{t,\is})] + [\sum_t h_{t,\is}(\x_{t,\is}) - \sum_t h_{t,\is}(\xs)]$, where $\is$ denotes the index of the best base learner whose strong convexity coefficient satisfies $\lambda_\is \le \lambda \le 2\lambda_\is$. This admits \emph{heterogeneous} surrogates for both meta and base regret, which necessitates designing expert-tracking algorithms with heterogeneous inputs for different experts as required in MetaGrad~\citep{NeurIPS'16:Metagrad}, which is not flexible enough.

To this end, we propose a novel regret decomposition that admits \emph{homogeneous} surrogates for the meta regret, making our algorithm as flexible as \citet{ICML'22:universal}, and \emph{heterogeneous} surrogate for the base regret, making it as efficient as \citet{NeurIPS'16:Metagrad}. Specifically, we remain the heterogeneous surrogates $h_{t,i}(\x)$ defined above for the base regret. For the meta regret, we define a homogeneous linear surrogate $\ell_t(\x) \define \inner{\nabla f_t(\x_t)}{\x}$ such that the meta regret is bounded by $\sum_t \ell_t(\x_t) - \sum_t \ell_t(\x_{t,\is}) - \lambda_\is \sum_t \|\x_t - \x_{t,\is}\|^2/2$. As long as we can obtain a second-order bound for the regret defined on this surrogate loss, i.e., $\sum_t \ell_t(\x_t) - \sum_t \ell_t(\x_{t,\is})$, it can be canceled by the negative term from strong convexity. Overall, we decompose the regret in the following way:
\begin{equation*}
  \Reg_T \le \mbr{\sumT \ell_t(\x_t) - \sumT \ell_t(\x_{t,\is}) - \frac{\lambda_\is}{2} \sumT \|\x_t - \x_{t,\is}\|^2} + \mbr{\sumT h_{t,\is}(\x_{t,\is}) - \sumT h_{t,\is}(\xs)}.
\end{equation*}
For clarity, we denote this surrogate for strongly convex functions by $\hsc_{t,i}(\x)$. Similarly, we define the surrogates $\hexp_{t,i}(\x) \define \inner{\nabla f_t(\x_t)}{\x} + \alpha_i \inner{\nabla f_t(\x_t)}{\x - \x_t}^2 / 2$ and $h^{\text{c}}_{t,i}(\x) \define \inner{\nabla f_t(\x_t)}{\x}$ for $\alpha$-exp-concave and convex functions, respectively. These surrogates require only \emph{one} gradient $\nabla f_t(\x_t)$ within each round, thus successfully reducing the gradient query complexity. 

Note that the base regret optimization requires controlling the algorithmic stability, because the empirical gradient variation $\nabla \hsc_{t,i}(\x_{t,i}) - \nabla \hsc_{t-1,i}(\x_{t-1,i}) = \nabla f_t(\x_t) + \lambda_i (\x_{t,i} - \x_t) - \nabla f_{t-1}(\x_{t-1}) - \lambda_i (\x_{t-1,i} - \x_{t-1})$ not only contains the desired gradient variation, but also includes the positive stability terms of base and final decisions. Fortunately, as discussed earlier, these stability terms can be effectively addressed through our cancellation mechanism within the multi-layer online ensemble. This stands in contrast to previous two-layer algorithms with worst-case regret bounds~\citep{NIPS'18:Zhang-Ader,IJCAI'18:Wang}, where the algorithmic stability is not examined.

The efficient version is concluded in \pref{alg:main} with \pref{line:surrogate-base}-\ref{line:update-base-surrogate}, which uses only one gradient $\nabla f_t(\x_t)$. The only algorithmic modification is that base learners update on the carefully designed surrogate functions, not the original one. We provide the regret guarantee below, which achieves the same guarantees as \pref{thm:main} with only one gradient per round. The proof is in \pref{app:main-efficiency}.
\begin{myThm}
  \label{thm:main-efficiency}
  Under Assumptions~\ref{assum:boundedness} and \ref{assum:smoothness}, efficient \pref{alg:main} enjoys $\O(\log V_T)$, $\O(d \log V_T)$ and $\Oh(\sqrt{V_T})$ for strongly convex, exp-concave and convex functions, using only one gradient per round.
\end{myThm}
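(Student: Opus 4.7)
The plan is to combine the novel regret decomposition of \pref{sec:efficiency} with the multi-layer cancellation mechanism already established for \pref{thm:main}. For each function class I would write, for the strongly convex case,
\begin{equation*}
  \Reg_T \le \underbrace{\Big[\sumT \ell_t(\x_t) - \sumT \ell_t(\x_{t,\ks,\is}) - \tfrac{\lambda_\is}{2}\sumT \|\x_t - \x_{t,\ks,\is}\|^2\Big]}_{\text{meta regret}} + \underbrace{\Big[\sumT \hsc_{t,\is}(\x_{t,\ks,\is}) - \sumT \hsc_{t,\is}(\xs)\Big]}_{\text{base regret}},
\end{equation*}
with $\ell_t(\x) \define \inner{\nabla f_t(\x_t)}{\x}$, and analogous decompositions using $\hexp_{t,i}$ and $h^{\text{c}}_{t,i}$ for the other two cases. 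The point is that the meta part now sees the \emph{same} linear surrogate $\ell_t$ across all base learners, so the multi-layer meta analysis of \pref{thm:main} carries over verbatim, while the base part sees the curvature-matched heterogeneous surrogate.

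For the meta regret I would reuse the three-layer \msmwc analysis from \pref{subsec:meta-stability}, noting that $\ell_t(\x_t)-\ell_t(\x_{t,\ks,\is}) = \inner{\nabla f_t(\x_t)}{\x_t - \x_{t,\ks,\is}} = r_{t,\ks,\is}$ is exactly the linearized meta regret already treated in \pref{thm:main}. Feeding the loss/optimism pairs \eqref{eq:correction-top}--\eqref{eq:correction-mid} into \pref{lem:MsMwC-refine} applied at both top and middle layers and invoking \pref{lem:universal-optimism} yields an $\Oh(\sqrt{\Vs})$ second-order meta bound together with the endogenous negative stability terms $-\sumTT \|\q_t-\q_{t-1}\|_1^2$ and $-\sumTT \|\p_{t,\ks}-\p_{t-1,\ks}\|_1^2$, which together with the cascaded corrections absorb the positive $\|\x_t - \x_{t-1}\|^2$ chain exactly as in \pref{fig:cascaded}. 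For exp-concave and strongly convex functions the residual $\sumT r_{t,\ks,\is}^2$ or $\sumT \|\x_t - \x_{t,\ks,\is}\|^2$ from \pref{lem:universal-optimism} is killed by the additional negative term $-\tfrac{\lambda_\is}{2}\sumT \|\x_t - \x_{t,\ks,\is}\|^2$ (or the exp-concave analogue) produced by the new decomposition, giving an $\O(1)$ meta contribution.

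For the base regret I would have each $\B_{k,i}$ run optimistic OMD on its surrogate. The surrogate $\hsc_{t,i}$ is $\lambda_i$-strongly convex and $\hexp_{t,i}$ is $\alpha_i$-exp-concave by construction, so the standard gradient-variation base guarantees apply and yield rates of $\O(\log V^{\text{sur}}_T)$, $\O(d\log V^{\text{sur}}_T)$, and $\Oh(\sqrt{V^{\text{sur}}_T})$ where $V^{\text{sur}}_T = \sumT \|\nabla \hsc_{t,i}(\x_{t,\ks,\is})-\nabla \hsc_{t-1,i}(\x_{t-1,\ks,\is})\|^2$. Expanding $\nabla \hsc_{t,i}(\x) = \nabla f_t(\x_t) + \lambda_i(\x-\x_t)$ and using smoothness gives $V^{\text{sur}}_T \lesssim V_T + \sumTT \|\x_t-\x_{t-1}\|^2 + \sumTT \|\x_{t,\ks,\is}-\x_{t-1,\ks,\is}\|^2$; the same bound holds for $\hexp_{t,i}$ and trivially for $h^{\text{c}}_{t,i}$. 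The first summand gives the desired $V_T$ dependence, and the remaining two stability terms are precisely those routed through the cascade.

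The main obstacle, and the entire reason the three-layer structure is needed here rather than a flat two-layer one, is ensuring that the \emph{base-side} stability residual $\|\x_{t,\ks,\is}-\x_{t-1,\ks,\is}\|^2$ and the \emph{meta-side} residual $\|\x_t-\x_{t-1}\|^2$ can be simultaneously absorbed. I would handle this by chaining the decomposition $\|\x_t - \x_{t-1}\|^2 \lesssim \|\q_t-\q_{t-1}\|_1^2 + \sum_k q_{t,k}\|\x_{t,k}-\x_{t-1,k}\|^2$ against the top-layer negativity plus the correction $\lambda_1\|\x_{t,k}-\x_{t-1,k}\|^2$, then recursing the same decomposition at the middle layer against its negativity plus $\lambda_2\|\x_{t,k,i}-\x_{t-1,k,i}\|^2$, and finally letting the intrinsic negative term $-\|\x_{t,\ks,\is}-\x_{t-1,\ks,\is}\|^2$ from optimistic OMD at the base level kill what remains. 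A careful but routine tuning of $\eta_k,\lambda_1,\lambda_2$ (exactly as chosen for \pref{thm:main}) keeps all coefficients non-positive, leaving the advertised $\O(\log V_T)$, $\O(d\log V_T)$, $\Oh(\sqrt{V_T})$ rates while spending only a single gradient query $\nabla f_t(\x_t)$ per round.
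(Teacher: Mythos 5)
Your proposal is correct and follows essentially the same route as the paper: the same homogeneous-linear/heterogeneous-surrogate regret decomposition, the same use of \pref{lem:universal-optimism} and the negative-term version of \msmwc for the meta regret, the same expansion of the surrogate's empirical gradient variation into $V_T$ plus decision-stability terms, and the same cascaded cancellation with tuned $C_0,\lambda_1,\lambda_2,\gamma$. The only cosmetic difference is that for strongly convex functions the paper routes the meta-regret cancellation through the exp-concave case (via $\alpha=\lambda/G^2$) rather than using the $-\tfrac{\lambda_\is}{2}\sum_t\|\x_t-\x_{t,\ks,\is}\|^2$ term directly, which is immaterial.
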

As a byproduct, we show that this idea can be used to recover the optimal worst-case universal guarantees using one gradient with a simpler approach and analysis, with proof in \pref{app:recover-MetaGrad}.
\begin{myProp}
  \label{prop:recover-MetaGrad}
  Under Assumption~\ref{assum:boundedness}, using the above surrogate loss functions for base learners, and running \mlprod as the meta learner guarantees $\O(\log T)$, $\O(d \log T)$ and $\O(\sqrt{T})$ regret bounds for strongly convex, exp-concave and convex functions, using one gradient per round.
\end{myProp}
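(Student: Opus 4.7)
\textbf{Proof Proposal for Proposition \ref{prop:recover-MetaGrad}.}

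The plan is to follow a standard two-layer meta-base schema: discretize the unknown curvatures by the candidate pool $\H$ from \eqref{eq:candidate-pool}, spawn $N=\O(\log T)$ base learners (one per $\lambda_i\in\H$ running OGD for strongly convex surrogates, one per $\alpha_i\in\H$ running ONS for exp-concave surrogates, and a single OGD instance for the convex surrogate), and run \mlprod on the common linear score $c_{t,i}\define\inner{\nabla f_t(\x_t)}{\x_{t,i}}$ so that the meta's aggregated prediction equals $\x_t=\sum_i p_{t,i}\x_{t,i}$ and only one gradient $\nabla f_t(\x_t)$ is ever queried.

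First, I would establish three regret decompositions, one per function family, around the best base learner $\is$ (the one whose guess matches the ground truth up to a factor of $2$). For $\lambda$-strongly convex $f_t$, using $\lambda_\is\le\lambda$ and adding/subtracting $\ell_t(\x_{t,\is})\define\inner{\nabla f_t(\x_t)}{\x_{t,\is}}$ together with the quadratic $\tfrac{\lambda_\is}{2}\|\x_t-\x_{t,\is}\|^2$, one obtains
\begin{equation*}
\Reg_T \le \underbrace{\sumT\bigl(\ell_t(\x_t)-\ell_t(\x_{t,\is})\bigr)-\frac{\lambda_\is}{2}\sumT\|\x_t-\x_{t,\is}\|^2}_{\text{meta}}+\underbrace{\sumT\bigl(\hsc_{t,\is}(\x_{t,\is})-\hsc_{t,\is}(\xs)\bigr)}_{\text{base}},
\end{equation*}
and analogous identities with $\hexp_{t,\is}$ and $-\tfrac{\alpha_\is}{2}\inner{\nabla f_t(\x_t)}{\x_t-\x_{t,\is}}^2$ for exp-concave, and with $h^{\text{c}}_{t,\is}$ and no extra negative term for convex. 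These are direct algebraic identities followed by the relevant curvature inequality (strong convexity, exp-concavity, or plain convexity of $f_t$).

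Second, I would bound the base regret by standard results applied to the surrogates: OGD on the $\lambda_\is$-strongly convex $\hsc_{t,\is}$ yields $\O(\log T)$; ONS on the $\alpha_\is$-exp-concave $\hexp_{t,\is}$ yields $\O(d\log T)$; and OGD on the linear $h^{\text{c}}_{t,\is}$ yields $\O(\sqrt{T})$. Gradient norms of the surrogates are bounded under \pref{assum:boundedness}, so the constants are all uniform. For the meta regret, since every base learner is scored by the same linear functional $c_{t,i}=\ell_t(\x_{t,i})$, \mlprod's second-order guarantee gives, up to a $\log N=\O(\log\log T)$ factor,
\begin{equation*}
\sumT \ell_t(\x_t)-\sumT \ell_t(\x_{t,\is}) \le \O\!\left(\sqrt{\log\log T\cdot \sumT \tilde r_{t,\is}^{\,2}}\right)+\O(\log\log T),
\end{equation*}
with $\tilde r_{t,\is}=\inner{\nabla f_t(\x_t)}{\x_t-\x_{t,\is}}$.

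Third, I would close each case by combining the two pieces. For strongly convex, bound $\tilde r_{t,\is}^{\,2}\le G^2\|\x_t-\x_{t,\is}\|^2$ and apply AM-GM with weight $\lambda_\is$: the quadratic part is absorbed by the negative term in the decomposition, leaving an $\O(\log\log T/\lambda_\is)$ residual, which together with the base $\O(\log T)$ yields $\O(\log T)$. For exp-concave, keep $\tilde r_{t,\is}^{\,2}$ as is and AM-GM it against the $-\tfrac{\alpha_\is}{2}\sum_t\tilde r_{t,\is}^{\,2}$ to leave an $\O(\log\log T/\alpha_\is)$ residual, giving $\O(d\log T)$ overall. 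For convex, use $\tilde r_{t,\is}^{\,2}\le G^2D^2$ to obtain a meta regret of $\Otil(\sqrt{T})$ which, added to the $\O(\sqrt{T})$ base regret, gives the claimed $\O(\sqrt{T})$ bound (with the $\Otil$ hiding only $\sqrt{\log\log T}$).

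The main obstacle is the first step: verifying that the novel decomposition cleanly isolates a \emph{homogeneous} meta term depending only on $\ell_t$ plus a curvature-dependent negative term, so that \mlprod (which cannot handle heterogeneous expert losses) still suffices. Once this decomposition is in place, the remaining estimates are routine, and the analysis becomes markedly shorter than MetaGrad's surrogate-based machinery while achieving the same rates with a single gradient query per round.
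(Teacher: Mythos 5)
Your proposal is correct and follows essentially the same route as the paper: the same homogeneous linear meta-loss for \mlprod, the same heterogeneous surrogates $\hsc_{t,i}$, $\hexp_{t,i}$, $h^{\text{c}}_{t,i}$ for the base learners, the same second-order meta bound canceled via AM-GM against the curvature-induced negative term, and the same standard base-regret bounds. The only cosmetic difference is in the strongly convex case, where you apply AM-GM directly to the $-\tfrac{\lambda_\is}{2}\sum_t\|\x_t-\x_{t,\is}\|^2$ term after bounding $\tilde r_{t,\is}^2\le G^2\|\x_t-\x_{t,\is}\|^2$, whereas the paper routes through the observation that $\lambda$-strong convexity implies $(\lambda/G^2)$-exp-concavity and reuses the exp-concave meta analysis --- the two computations are identical.
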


\section{Conclusion}
\label{sec:conclusion}
In this paper, we obtain universal gradient-variation guarantees via a multi-layer online ensemble approach. We first propose a novel optimism design to unify various kinds of functions. Then we analyze the negative terms of the meta algorithm \msmwc and inject cascaded correction terms to improve the algorithmic stability to realize effective cancellations in the multi-layer structure. Furthermore, we provide a novel regret decomposition combined with carefully designed surrogate functions to achieve one gradient query per round. Finally, we deploy the our approach into two applications, including the stochastically extended adversarial (SEA) model and two-player zero-sum games, to validate its effectiveness, and obtain best known universal guarantees therein. Due to page limits, the applications are deferred to \pref{app:applications}. Two byproducts rise in our work. The first one is negative stability terms in the analysis of \msmwc. And the second one contains a simple approach and analysis for the optimal worst-case universal guarantees, using one gradient per round. 

An important open problem lies in optimality and efficiency. In the convex case, our results still exhibit an $\O(\log V_T)$ gap from the optimal $\O(\sqrt{V_T})$ result. Moreover, our algorithm necessitates $\O(\log^2 T)$ base learners, as opposed to $\O(\log T)$ base learners in two-layer structures. Whether it is possible to achieve the optimal results for all kinds of functions (strongly convex, exp-concave, convex) using a two-layer algorithm remains as an important problem for future investigation.

\newpage
\section*{Acknowledgements}
This research was supported by National Key R\&D Program of China (2022ZD0114800), NSFC (62206125, 61921006), and JiangsuSF (BK20220776). The authors would thank Reviewer~\#ZhYs of OpenReview for the thorough review and insightful suggestions.

\bibliographystyle{abbrvnat}
\bibliography{../online.bib}

\begin{thebibliography}{30}
\providecommand{\natexlab}[1]{#1}
\providecommand{\url}[1]{\texttt{#1}}
\expandafter\ifx\csname urlstyle\endcsname\relax
  \providecommand{\doi}[1]{doi: #1}\else
  \providecommand{\doi}{doi: \begingroup \urlstyle{rm}\Url}\fi

\bibitem[Cesa{-}Bianchi and Lugosi(2006)]{Bianchi06}
N.~Cesa{-}Bianchi and G.~Lugosi.
\newblock \emph{Prediction, {L}earning, and {G}ames}.
\newblock Cambridge {U}niversity {P}ress, 2006.

\bibitem[Chen et~al.(2021)Chen, Luo, and Wei]{COLT'21:Chen}
L.~Chen, H.~Luo, and C.~Wei.
\newblock Impossible tuning made possible: {A} new expert algorithm and its applications.
\newblock In \emph{Proceedings of the 34th Annual Conference Computational Learning Theory (COLT)}, pages 1216--1259, 2021.

\bibitem[Chen et~al.(2023{\natexlab{a}})Chen, Tu, Zhao, and Zhang]{ICML'23:SEA}
S.~Chen, W.-W. Tu, P.~Zhao, and L.~Zhang.
\newblock Optimistic online mirror descent for bridging stochastic and adversarial online convex optimization.
\newblock In \emph{Proceedings of the 40th International Conference on Machine Learning (ICML)}, pages 5002--5035, 2023{\natexlab{a}}.

\bibitem[Chen et~al.(2023{\natexlab{b}})Chen, Zhang, Tu, Zhao, and Zhang]{arXiv'23:SEA-Chen}
S.~Chen, Y.-J. Zhang, W.-W. Tu, P.~Zhao, and L.~Zhang.
\newblock Optimistic online mirror descent for bridging stochastic and adversarial online convex optimization.
\newblock \emph{ArXiv preprint}, arXiv:2302.04552, 2023{\natexlab{b}}.

\bibitem[Chiang et~al.(2012)Chiang, Yang, Lee, Mahdavi, Lu, Jin, and Zhu]{COLT'12:VT}
C.~Chiang, T.~Yang, C.~Lee, M.~Mahdavi, C.~Lu, R.~Jin, and S.~Zhu.
\newblock Online optimization with gradual variations.
\newblock In \emph{Proceedings of the 25th Annual Conference Computational Learning Theory (COLT)}, pages 6.1--6.20, 2012.

\bibitem[Gaillard et~al.(2014)Gaillard, Stoltz, and van Erven]{COLT'14:ML-Prod}
P.~Gaillard, G.~Stoltz, and T.~van Erven.
\newblock A second-order bound with excess losses.
\newblock In \emph{Proceedings of the 27th Annual Conference Computational Learning Theory (COLT)}, pages 176--196, 2014.

\bibitem[Hazan(2016)]{book'16:Hazan-OCO}
E.~Hazan.
\newblock Introduction to {O}nline {C}onvex {O}ptimization.
\newblock \emph{Foundations and Trends in Optimization}, 2\penalty0 (3-4):\penalty0 157--325, 2016.

\bibitem[Hazan et~al.(2007)Hazan, Agarwal, and Kale]{MLJ'07:Hazan-logT}
E.~Hazan, A.~Agarwal, and S.~Kale.
\newblock Logarithmic regret algorithms for online convex optimization.
\newblock \emph{Machine Learning}, 69\penalty0 (2-3):\penalty0 169--192, 2007.

\bibitem[Ji and Ye(2009)]{ICML'09:Ye}
S.~Ji and J.~Ye.
\newblock An accelerated gradient method for trace norm minimization.
\newblock In \emph{Proceedings of the 26th International Conference on Machine Learning (ICML)}, pages 457--464, 2009.

\bibitem[Li et~al.(2014)Li, Zhang, Chen, and Smola]{KDD'14:Li}
M.~Li, T.~Zhang, Y.~Chen, and A.~J. Smola.
\newblock Efficient mini-batch training for stochastic optimization.
\newblock In \emph{Proceedings of the 20th {ACM} {SIGKDD} International Conference on Knowledge Discovery {\&} Data Mining (KDD)}, pages 661--670, 2014.

\bibitem[Orabona(2019)]{arXiv'19:Orabona}
F.~Orabona.
\newblock A modern introduction to online learning.
\newblock \emph{ArXiv preprint}, arXiv:1912.13213, 2019.

\bibitem[Orabona et~al.(2012)Orabona, Cesa-Bianchi, and Gentile]{AISTATS'12:Orabona}
F.~Orabona, N.~Cesa-Bianchi, and C.~Gentile.
\newblock Beyond logarithmic bounds in online learning.
\newblock In \emph{Proceedings of the 15th International Conference on Artificial Intelligence and Statistics (AISTATS)}, pages 823--831, 2012.

\bibitem[Pinsker(1964)]{Pinsker}
M.~S. Pinsker.
\newblock Information and information stability of random variables and processes.
\newblock \emph{Holden-Day}, 1964.

\bibitem[Pogodin and Lattimore(2019)]{UAI'19:Pogodin}
R.~Pogodin and T.~Lattimore.
\newblock On first-order bounds, variance and gap-dependent bounds for adversarial bandits.
\newblock In \emph{Proceedings of the 35th Conference on Uncertainty in Artifiﬁcial Intelligence (UAI)}, pages 894--904, 2019.

\bibitem[Rakhlin and Sridharan(2013{\natexlab{a}})]{COLT'13:optimistic}
A.~Rakhlin and K.~Sridharan.
\newblock Online learning with predictable sequences.
\newblock In \emph{Proceedings of the 26th Annual Conference Computational Learning Theory (COLT)}, pages 993--1019, 2013{\natexlab{a}}.

\bibitem[Rakhlin and Sridharan(2013{\natexlab{b}})]{NIPS'13:optimistic}
A.~Rakhlin and K.~Sridharan.
\newblock Optimization, learning, and games with predictable sequences.
\newblock In \emph{Advances in Neural Information Processing Systems 26 (NIPS)}, pages 3066--3074, 2013{\natexlab{b}}.

\bibitem[Sachs et~al.(2022)Sachs, Hadiji, van Erven, and Guzm{\'{a}}n]{NeurIPS'22:SEA}
S.~Sachs, H.~Hadiji, T.~van Erven, and C.~Guzm{\'{a}}n.
\newblock Between stochastic and adversarial online convex optimization: Improved regret bounds via smoothness.
\newblock In \emph{Advances in Neural Information Processing Systems 35 (NeurIPS)}, pages 691--702, 2022.

\bibitem[Sachs et~al.(2023)Sachs, Hadiji, van Erven, and Guzm{\'{a}}n]{arXiv'23:SEA-Sachs}
S.~Sachs, H.~Hadiji, T.~van Erven, and C.~Guzm{\'{a}}n.
\newblock Accelerated rates between stochastic and adversarial online convex optimization.
\newblock \emph{ArXiv preprint}, arXiv:2303.03272, 2023.

\bibitem[Srebro et~al.(2010)Srebro, Sridharan, and Tewari]{NIPS'10:small-loss}
N.~Srebro, K.~Sridharan, and A.~Tewari.
\newblock Smoothness, low noise and fast rates.
\newblock In \emph{Advances in Neural Information Processing Systems 23 (NIPS)}, pages 2199--2207, 2010.

\bibitem[Syrgkanis et~al.(2015)Syrgkanis, Agarwal, Luo, and Schapire]{NIPS'15:Syrgkanis}
V.~Syrgkanis, A.~Agarwal, H.~Luo, and R.~E. Schapire.
\newblock Fast convergence of regularized learning in games.
\newblock In \emph{Advances in Neural Information Processing Systems 28 (NIPS)}, pages 2989--2997, 2015.

\bibitem[van Erven and Koolen(2016)]{NeurIPS'16:Metagrad}
T.~van Erven and W.~M. Koolen.
\newblock Metagrad: Multiple learning rates in online learning.
\newblock In \emph{Advances in Neural Information Processing Systems 29 (NIPS)}, pages 3666--3674, 2016.

\bibitem[Wang et~al.(2018)Wang, Zhao, and Zhang]{IJCAI'18:Wang}
G.~Wang, D.~Zhao, and L.~Zhang.
\newblock Minimizing adaptive regret with one gradient per iteration.
\newblock In \emph{Proceedings of the 27th International Joint Conference on Artificial Intelligence (IJCAI)}, pages 2762--2768, 2018.

\bibitem[Wang et~al.(2019)Wang, Lu, and Zhang]{UAI'19:Maler}
G.~Wang, S.~Lu, and L.~Zhang.
\newblock Adaptivity and optimality: A universal algorithm for online convex optimization.
\newblock In \emph{Proceedings of the 35th Conference on Uncertainty in Artificial Intelligence (UAI)}, pages 659--668, 2019.

\bibitem[Wei et~al.(2016)Wei, Hong, and Lu]{NIPS'16:Optimistic-ML-Prod}
C.~Wei, Y.~Hong, and C.~Lu.
\newblock Tracking the best expert in non-stationary stochastic environments.
\newblock In \emph{Advances in Neural Information Processing Systems 29 (NIPS)}, pages 3972--3980, 2016.

\bibitem[Zhang et~al.(2018)Zhang, Lu, and Zhou]{NIPS'18:Zhang-Ader}
L.~Zhang, S.~Lu, and Z.-H. Zhou.
\newblock Adaptive online learning in dynamic environments.
\newblock In \emph{Advances in Neural Information Processing Systems 31 (NeurIPS)}, pages 1330--1340, 2018.

\bibitem[Zhang et~al.(2022{\natexlab{a}})Zhang, Wang, Yi, and Yang]{ICML'22:universal}
L.~Zhang, G.~Wang, J.~Yi, and T.~Yang.
\newblock A simple yet universal strategy for online convex optimization.
\newblock In \emph{Proceedings of the 39th International Conference on Machine Learning (ICML)}, pages 26605--26623, 2022{\natexlab{a}}.

\bibitem[Zhang et~al.(2022{\natexlab{b}})Zhang, Zhao, Luo, and Zhou]{ICML'22:TVgame}
M.~Zhang, P.~Zhao, H.~Luo, and Z.-H. Zhou.
\newblock No-regret learning in time-varying zero-sum games.
\newblock In \emph{Proceedings of the 39th International Conference on Machine Learning (ICML)}, pages 26772--26808, 2022{\natexlab{b}}.

\bibitem[Zhao et~al.(2020)Zhao, Zhang, Zhang, and Zhou]{NeurIPS'20:Sword}
P.~Zhao, Y.-J. Zhang, L.~Zhang, and Z.-H. Zhou.
\newblock Dynamic regret of convex and smooth functions.
\newblock In \emph{Advances in Neural Information Processing Systems 33 (NeurIPS)}, pages 12510--12520, 2020.

\bibitem[Zhao et~al.(2021)Zhao, Zhang, Zhang, and Zhou]{arXiv'21:Sword++}
P.~Zhao, Y.-J. Zhang, L.~Zhang, and Z.-H. Zhou.
\newblock Adaptivity and non-stationarity: Problem-dependent dynamic regret for online convex optimization.
\newblock \emph{ArXiv preprint}, arXiv:2112.14368, 2021.

\bibitem[Zinkevich(2003)]{ICML'03:zinkevich}
M.~Zinkevich.
\newblock Online convex programming and generalized infinitesimal gradient ascent.
\newblock In \emph{Proceedings of the 20th International Conference on Machine Learning (ICML)}, pages 928--936, 2003.

\end{thebibliography}

\newpage
\appendix

\section{Applications}
\label{app:applications}
In this section, we validate the importance and the practical potential of our approach by applying it in two problems, including stochastically extended adversarial (SEA) model in \pref{app:SEA} and two-player zero-sum games in \pref{app:game}.

\subsection{Application I: Stochastically Extended Adversarial Model}
\label{app:SEA}
Stochastically extended adversarial (SEA) model~\citep{NeurIPS'22:SEA} serves as an interpolation between stochastic and adversarial online convex optimization, where the environments choose the loss function $f_t$ from a distribution $\D_t$ (i.e., $f_t \sim \D_t$). They proposed cumulative stochastic variance $\sigma_{1: T}^2$, the gap between $f_t$ and its expected version $F_t(\cdot) \define \E_{f_t \sim \D_t} [f_t(\cdot)]$, and cumulative adversarial variation $\Sigma_{1: T}^2$, the gap between $F_t(\cdot)$ and $F_{t-1}(\cdot)$, to bound the expected regret. Formally, 
\begin{equation*}
  \sigma_{1:T}^2 \define \sumT \max_{\x \in \X}\E_{f_t \sim \D_t} \mbr{\|\nabla f_t(\x) - \nabla F_t(\x)\|^2}, \Sigma_{1: T}^2 \define \E\mbr{\sumTT \sup_{\x \in \X} \|\nabla F_t(\x) - \nabla F_{t-1}(\x)\|^2}.
\end{equation*}
By specializing different $\sigma_{1: T}^2$ and $\Sigma_{1: T}^2$, the SEA model can recover the adversarial and stochastic OCO setup, respectively. Specifically, setting $\sigma_{1: T}^2 = 0$ recovers the adversarial setup, where $\Sigma_{1: T}^2$ equals to the gradient variation $V_T$. Besides, choosing $\Sigma_{1: T}^2 = 0$ recovers the stochastic setup, where $\sigma_{1: T}^2$ stands for the variance in stochastic optimization.

For smooth expected function $F_t(\cdot)$, \citet{NeurIPS'22:SEA} obtained an optimal $\O(\sqrt{\sigma_{1:T}^2 + \Sigma_{1:T}^2})$ expected regret for convex functions and an $\O((\sigma_{\max}^2 + \Sigma_{\max}^2) \log T)$ regret for strongly convex functions, where $\sigma_{\max}^2 \define \max_{t \in [T]} \max_{\x \in \X}\E_{f_t \sim \D_t} [\|\nabla f_t(\x) - \nabla F_t(\x)\|^2]$ and $\Sigma_{\max}^2 \define \max_{t \in [T]} \sup_{\x \in \X} \|\nabla F_t(\x) - \nabla F_{t-1}(\x)\|^2$. Later, the results are improved by \citet{arXiv'23:SEA-Chen}, where the authors achieved a better $\O((\sigma_{\max}^2 + \Sigma_{\max}^2) \log (\sigma_{1:T}^2 + \Sigma_{1:T}^2))$ regret\footnote{This bound improves the result of $\O(\min\{G^2 \log (\sigma_{1: T}^2 + \Sigma_{1: T}^2), (\sigma_{\max}^2 + \Sigma_{\max}^2) \log T\})$ in their conference version~\citep{ICML'23:SEA}.} for strongly convex functions and a new $\O(d \log (\sigma_{1: T}^2 + \Sigma_{1: T}^2))$ regret for exp-concave functions, while maintaining the optimal guarantee for convex functions. Their key observation is that the gradient variation in optimistic algorithms encodes the information of the cumulative stochastic variance $\sigma_{1: T}^2$ and cumulative adversarial variation $\Sigma_{1: T}^2$, restated in \pref{lem:SEA-VT}. 

A major open problem remains in the work of \citet{arXiv'23:SEA-Chen} that whether it is possible to get rid of different parameter configurations and obtain universal guarantees in the problem. In the following, we show that our approach can be directly applied and achieves almost the same guarantees as those in \citet{arXiv'23:SEA-Chen}, up to a logarithmic factor in leading term. Notably, our algorithm does not require different parameter setups, thus resolving the open problem proposed by the authors. We conclude our results in \pref{thm:SEA} and the proof can be found in \pref{app:SEA-proof}.
\begin{myThm}
  \label{thm:SEA}
  Under the same assumptions as \citet{arXiv'23:SEA-Chen}, the efficient version of \pref{alg:main} obtains $\O((\sigma_{\max}^2 + \Sigma_{\max}^2) \log (\sigma_{1:T}^2 + \Sigma_{1:T}^2))$ regret for strongly convex functions, $\O(d \log (\sigma_{1: T}^2 + \Sigma_{1: T}^2))$ regret for exp-concave functions and $\Oh(\sqrt{\sigma_{1:T}^2 + \Sigma_{1:T}^2})$ regret for convex functions, where the $\Oh(\cdot)$-notation omits logarithmic factors in leading terms.
\end{myThm}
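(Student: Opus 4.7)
The plan is to lift the sample-path problem-dependent bounds of \pref{thm:main-efficiency} to the SEA setting by invoking the SEA-to-gradient-variation reduction in \pref{lem:SEA-VT}, which (as the paper notes) establishes that under SEA dynamics the expected gradient variation is controlled by the two SEA-specific quantities, schematically $\E[V_T] \lesssim \sigma_{1:T}^2 + \Sigma_{1:T}^2$ up to smoothness constants. Given this reduction, the three curvature regimes require essentially three different uses of Jensen's inequality.

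For convex losses, \pref{thm:main-efficiency} gives a path-wise bound of $\Oh(\sqrt{V_T})$. Taking expectations and using concavity of the square root yields $\E[\sqrt{V_T}] \le \sqrt{\E[V_T]} \le \sqrt{\sigma_{1:T}^2 + \Sigma_{1:T}^2}$, and the logarithmic factors hidden in the $\Oh(\cdot)$ symbol are similarly pushed inside the expectation by concavity of $\log$, producing the claimed $\Oh(\sqrt{\sigma_{1:T}^2 + \Sigma_{1:T}^2})$ expected regret. For exp-concave losses, \pref{thm:main-efficiency} delivers $\O(d \log V_T)$ per sample path, and concavity of $\log$ gives $\E[\log V_T] \le \log \E[V_T] \le \log(\sigma_{1:T}^2 + \Sigma_{1:T}^2)$, matching the target.

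The strongly convex case is the delicate one, because the target bound $\O((\sigma_{\max}^2 + \Sigma_{\max}^2)\log(\sigma_{1:T}^2 + \Sigma_{1:T}^2))$ features a peak prefactor that does not appear in the abstract $\O(\log V_T)$ statement of \pref{thm:main-efficiency}. Here I would not cite \pref{thm:main-efficiency} as a black box, but instead open up its underlying analysis: as is standard for log-regret algorithms under strong convexity, the analysis produces a refined bound of the form $\O(M \log V_T)$, where $M$ is a round-wise upper bound on the incremental gradient-variation contribution $\|\nabla f_t(\x_t) - \nabla f_{t-1}(\x_{t-1})\|^2$ (plus stability pieces already canceled by the multi-layer mechanism of \pref{sec:approach}). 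Under SEA, this per-round quantity is in turn controlled in expectation by $\sigma_t^2 + \Sigma_t^2$, so taking the max over $t$ yields $M \lesssim \sigma_{\max}^2 + \Sigma_{\max}^2$; combining with the logarithmic reduction from \pref{lem:SEA-VT} via Jensen finishes the bound.

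The main obstacle is exactly this strongly convex piece: one must extract a round-wise peak factor from the strongly convex regret analysis (rather than quoting \pref{thm:main-efficiency} directly) and then argue that, because the stability terms appearing alongside $\sigma_t^2 + \Sigma_t^2$ have already been absorbed by the endogenous negativity of \msmwc and the cascaded corrections, only the SEA-native quantities survive to form $\sigma_{\max}^2 + \Sigma_{\max}^2$. The exp-concave and convex cases, by contrast, reduce to a clean application of Jensen's inequality once \pref{lem:SEA-VT} is in hand, and should require essentially no new work beyond the bounds already established.
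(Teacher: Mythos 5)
Your treatment of the strongly convex case is essentially the paper's: the $\sigma_{\max}^2+\Sigma_{\max}^2$ prefactor indeed comes from opening up the base-learner analysis, bounding each per-round increment $\|\nabla \hsc_{t,\is}(\x_{t,\ks,\is})-\nabla \hsc_{t-1,\is}(\x_{t-1,\ks,\is})\|^2$ in expectation by $\O(\sigma_t^2+\Sigma_t^2)$ plus stability pieces, and feeding the result into a weighted-sum lemma (\pref{lem:sum}) that produces $a_{\max}(1+\ln A)$. However, the convex and exp-concave cases do \emph{not} reduce to ``black-box \pref{thm:main-efficiency} plus Jensen,'' and the schematic reduction $\E[V_T]\lesssim \sigma_{1:T}^2+\Sigma_{1:T}^2$ you attribute to \pref{lem:SEA-VT} is not what that lemma says. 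Two distinct issues. First, the $V_T$ in \pref{thm:main-efficiency} is the \emph{worst-case} variation $\sum_t\sup_{\x}\|\nabla f_t(\x)-\nabla f_{t-1}(\x)\|^2$, whereas $\sigma_t^2=\max_{\x}\E[\|\nabla f_t(\x)-\nabla F_t(\x)\|^2]$ has the max \emph{outside} the expectation; since $\E[\sup_\x(\cdot)]\ge\sup_\x\E[(\cdot)]$, the expected worst-case variation is not controlled by $\sigma_{1:T}^2+\Sigma_{1:T}^2$ in general. Second, \pref{lem:SEA-VT} only controls the \emph{empirical} variation along the trajectory, and its right-hand side carries the residual term $4L^2\sumTT\|\x_t-\x_{t-1}\|^2$, which you have dropped. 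That residual must be absorbed by the endogenous negative stability terms of \msmwc and the cascaded corrections---but those negative terms are consumed inside the proof of \pref{thm:main-efficiency} and are no longer available once you quote its conclusion.

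Consequently, the correct route (and the one the paper takes) is to re-enter the proof of \pref{thm:main-efficiency} for \emph{all three} function classes and substitute \pref{lem:SEA-VT} at the point where the base learners' empirical gradient variations $\Vb_{T,\ks,\is}$ (and the meta-level quantity $\Vs$) are decomposed: for exp-concave functions one first extracts the stability terms out of the logarithm via $\ln(1+a+b)\le\ln(1+a)+b$ so that only $\sigma_{1:T}^2+\Sigma_{1:T}^2$ remains inside, and for convex functions the stability residue must be peeled off before the square root, in both cases leaving terms like $c\,S^\x_T$ and $c\,S^\x_{T,\ks,\is}$ that are then canceled by the negative terms under the parameter conditions on $C_0,\lambda_1,\lambda_2,\gamma$. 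Your instinct that ``only the strongly convex piece requires opening the analysis'' is therefore the main flaw: the whole theorem requires it.
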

Note that \citet{arXiv'23:SEA-Sachs} also considered the problem of universal learning and obtained an $\O(\sqrt{T \log T})$ regret for convex functions and an $\O((\sigma_{\max}^2 + \Sigma_{\max}^2 + D^2 L^2) \log^2 T)$ regret for strongly convex functions simultaneously. We conclude the existing results in \pref{table:SEA}. Our results are better than theirs in two aspects: \textit{(i)} for strongly convex and convex functions, our guarantees are adaptive with the problem-dependent quantities $\sigma_{1:T}$ and $\Sigma_{1:T}$ while theirs depends on the time horizon $T$; and \textit{(ii)} our algorithm achieves an additional guarantee for exp-concave functions.

\begin{table}[h]
  \centering
  \caption{\small{Comparisons of our results with existing ones. The second column presents the regret bounds for various kinds of loss functions, where $\sigma_{1:T}^2$ and $\Sigma_{1:T}^2$ are the cumulative stochastic variance and adversarial variation, which are at most $\O(T)$ and can be much smaller in benign environments. The $\Oh(\cdot)$-notation omits logarithmic factors on leading terms. The third column indicates whether the results for different kinds of functions are achieved by a single algorithm.}}
  \vspace{1mm}
  \label{table:SEA}
  \renewcommand*{\arraystretch}{1.6}
  \resizebox{0.98\textwidth}{!}{
  \begin{tabular}{c|ccc|c}
    \hline

    \hline
    \multirow{2}{*}{\textbf{Works}} & \multicolumn{3}{c|}{\textbf{Regret Bounds}} & \multirow{2}{*}{\makecell[c]{\textbf{Single} \\ \textbf{Algorithm?}}}\\  \cline{2-4}
     & Strongly Convex & Exp-concave & Convex \\ \hline
     \citet{NeurIPS'22:SEA} & $\O((\sigma_{\max}^2 + \Sigma_{\max}^2) \log T)$ & \ding{55} & $\O(\sqrt{\sigma_{1:T}^2 + \Sigma_{1:T}^2})$ & No \\ \hline
     \citet{arXiv'23:SEA-Chen} & $\O((\sigma_{\max}^2 + \Sigma_{\max}^2) \log (\sigma_{1:T}^2 + \Sigma_{1:T}^2))$ & $\O(d \log (\sigma_{1: T}^2 + \Sigma_{1: T}^2))$ & $\O(\sqrt{\sigma_{1:T}^2 + \Sigma_{1:T}^2})$ & No \\ \hline \hline
    \citet{arXiv'23:SEA-Sachs} & $\O((\sigma_{\max}^2 + \Sigma_{\max}^2 + D^2 L^2) \log^2 T)$ & \ding{55} & $\Oh(\sqrt{T})$ & Yes\\ \hline 
    \textbf{Ours} & $\O((\sigma_{\max}^2 + \Sigma_{\max}^2) \log (\sigma_{1:T}^2 + \Sigma_{1:T}^2))$ & $\O(d \log (\sigma_{1: T}^2 + \Sigma_{1: T}^2))$ & $\Oh(\sqrt{\sigma_{1:T}^2 + \Sigma_{1:T}^2})$ & Yes\\
    \hline 

    \hline
  \end{tabular}}
  \vspace{-5mm}
\end{table}

\subsection{Application II: Two-player Zero-sum Games}
\label{app:game}
Multi-player online games~\citep{Bianchi06} is a versatile model that depicts the interaction of multiple players over time. Since each player is facing similar players like herself, the theoretical guarantees, e.g., the summation of all players' regret, can be better than the minimax optimal $\sqrt{T}$ bound in adversarial environments, thus achieving fast rates.

The pioneering works of \citet{NIPS'13:optimistic, NIPS'15:Syrgkanis} investigated optimistic algorithms in multi-player online games and illuminated the importance of the gradient variation. Specifically, \citet{NIPS'15:Syrgkanis} proved that when each player runs an optimistic algorithm (optimistic OMD or optimistic follow the regularized leader), the summation of the regret, which serves as an upper bound for some performance measures in games, can be bounded by $\O(1)$. The above results assume that the players are honest, i.e., they agree to run the same algorithm distributedly. In the dishonest case, where there exist players who do not follow the agreed protocol, the guarantees will degenerate to the minimax result of $\O(\sqrt{T})$.

In this part, we consider the simple two-player zero-sum games as an example to validate the effectiveness of our proposed algorithm. The game can be formulated as a min-max optimization problem of $\min_{\x \in \X} \max_{\y \in \Y} f(\x,\y)$. We consider the case that the game is either bilinear (i.e., $f(\x,\y) = \x^\top A \y$), or strongly convex-concave (i.e., $f(\x,\y)$ is $\lambda$-strongly convex in $\x$ and $\lambda$-strongly concave in $\y$). Our algorithm can guarantee the regret summation in the honest case and the individual regret of each player in the dishonest case, without knowing the type of games in advance. We conclude our results in \pref{thm:game}, and the proof can be found in \pref{app:game-proof}.
\begin{myThm}
  \label{thm:game}
  Under Assumptions~\ref{assum:boundedness} and \ref{assum:smoothness}, for bilinear and strongly convex-concave games, the efficient version of \pref{alg:main} enjoys $\O(1)$ regret summation in the honest case, $\Oh(\sqrt{T})$ and $\O(\log T)$ bounds in the dishonest case, where the $\Oh(\cdot)$-notation omits logarithmic factors in leading terms.
\end{myThm}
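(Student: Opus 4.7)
The plan is to apply \pref{thm:main-efficiency} to each player separately, treating the opponent's trajectory as an adversarial loss sequence, and then to close the analysis in the honest case through a self-bounding coupling. From the $\x$-player's viewpoint the loss sequence is $f_t^{\x}(\x)\define f(\x,\y_t)$, and symmetrically $f_t^{\y}(\y)\define -f(\x_t,\y)$ for the $\y$-player. In the bilinear case both induced losses are linear, hence convex; in the strongly convex-concave case both are $\lambda$-strongly convex. In either setting, \pref{assum:smoothness} yields the key coupling
\[
V_T^{\x} \;\le\; L^2 \sumTT \|\y_t-\y_{t-1}\|^2, \qquad V_T^{\y} \;\le\; L^2 \sumTT \|\x_t-\x_{t-1}\|^2,
\]
so each player's gradient variation is driven entirely by the other player's one-step movement.

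In the honest case, both players run the efficient version of \pref{alg:main}. I would invoke a refined form of \pref{thm:main-efficiency} that retains the negative stability term $-c\sumTT\|\x_t-\x_{t-1}\|^2$ on the \emph{final} aggregated decision, jointly produced by the endogenous negativity of \pref{lem:MsMwC-refine} and the cascaded corrections of \pref{subsec:cascaded}; this is precisely the term that kills the universal-optimism residual $\|\x_t-\x_{t-1}\|^2$ depicted in \pref{fig:cascaded}. For the bilinear game this gives $\Reg_T^{\x}\le \Oh(\sqrt{V_T^{\x}})-c\sumTT\|\x_t-\x_{t-1}\|^2$ and its $\y$-counterpart; adding the two bounds, substituting the coupling above, and applying $C\sqrt{S}\le C^2/(4c)+cS$ lets the negative stability terms absorb the variation, pinning $S\define\sumTT\bigl(\|\x_t-\x_{t-1}\|^2+\|\y_t-\y_{t-1}\|^2\bigr)$ at $\O(1)$ and hence $\Reg_T^{\x}+\Reg_T^{\y}=\O(1)$. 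For the strongly convex-concave game \pref{thm:main-efficiency} yields $\O(\log V_T)$ bounds per player, and the same coupling reduces the summation to $2C\log(L^2 S)-cS$, which is bounded by a universal constant through the elementary fact $\log S-\Theta(S)\le \O(1)$.

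In the dishonest case the opponent is arbitrary, so each player simply invokes \pref{thm:main-efficiency} with the worst-case $V_T\le 4G^2T$ implied by \pref{assum:boundedness}, yielding $\Oh(\sqrt{T})$ for the bilinear (convex) game and $\O(\log T)$ for the strongly convex-concave game. The main obstacle is the first step of the honest-case argument: explicitly exhibiting a \emph{negative} coefficient in front of $\sumTT\|\x_t-\x_{t-1}\|^2$ at the level of the aggregated decisions after all cancellations of the three-layer ensemble have been carried out. The universal optimism of \pref{subsec:universal-optimism} itself injects a positive $\|\x_t-\x_{t-1}\|^2$ that must be \emph{overcompensated} rather than merely neutralized by the endogenous and exogenous negativity, so the constants in every layer must be tracked carefully to guarantee that a strictly positive fraction survives for the game-theoretic coupling. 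Once this refined form of \pref{thm:main-efficiency} is in place, both the honest and dishonest conclusions follow by the self-bounding manipulations above.
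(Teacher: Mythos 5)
Your proposal is correct and follows essentially the same route as the paper's proof: the dishonest case is a direct application of \pref{thm:main-efficiency}, and the honest case couples each player's gradient variation to the opponent's decision stability $\sumTT\|\y_t-\y_{t-1}\|^2$ so that the surviving negative stability terms of the two ensembles cancel each other's variation and pin the regret summation at $\O(1)$. Your explicit flagging of the need for a strictly positive leftover coefficient on $-\sumTT\|\x_t-\x_{t-1}\|^2$ after the internal cancellations, and your self-bounding treatment of the strongly convex-concave case via $\log(L^2 S)-\Theta(S)\le\O(1)$, are exactly the (tersely stated) ingredients of the paper's argument.
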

\pref{table:game} compares our approach with \citet{NIPS'15:Syrgkanis,ICML'22:universal}. Specifically, ours is better than \citet{NIPS'15:Syrgkanis} in the strongly convex-concave games in the dishonest case due to its universality, and better than \citet{ICML'22:universal} in the honest case since our approach enjoys gradient-variation bounds that are essential in achieving fast rates for regret summation.

\begin{table}[!t]
  \centering
  \caption{\small{Comparisons of our results with existing ones. In the honest case, the results are measured by the summation of all players' regret and in the dishonest case, the results are in terms of the individual regret of each player. Bilinear and strongly convex-concave games are considered inside each case. $\bullet$ denotes the best result in each case (row).}}
  \vspace{1mm}
  \label{table:game}
  \renewcommand*{\arraystretch}{1.5}
  \resizebox{0.85\textwidth}{!}{
  \begin{tabular}{c|c|cc p{2cm}<{\centering}}
  \hline 

  \hline
   & \textbf{Games} & \citet{NIPS'15:Syrgkanis} & \citet{ICML'22:universal} & \textbf{Ours} \\ \hline
  \multirow{2}{*}{\textbf{Honest}} & bilinear & \makebox[5ex][r]{$\O(1)\ \bullet$} & \makebox[5ex][r]{$\O(\sqrt{T})$\ \ \ } & \makebox[5ex][r]{$\O(1)\ \bullet$} \\
   & strongly convex-concave & \makebox[5ex][r]{$\O(1)\ \bullet$} & \makebox[5ex][r]{$\O(\log T)$\ \ \ } & \makebox[5ex][r]{$\O(1)\ \bullet$} \\ \hline
  \multirow{2}{*}{\textbf{Dishonest}} & bilinear & \makebox[5ex][r]{$\O(\sqrt{T}) \ \bullet$} & \makebox[5ex][r]{$\O(\sqrt{T})\ \bullet$} & \makebox[5ex][r]{$\Oh(\sqrt{T})\ \bullet$} \\
   & strongly convex-concave & \makebox[5ex][r]{$\O(\sqrt{T})$\ \ \ } & \makebox[5ex][r]{$\O(\log T)\ \bullet$} & \makebox[5ex][r]{$\O(\log T)\ \bullet$} \\
   \hline 

  \hline
  \end{tabular}}
  \vspace{-5mm}
\end{table}

\section{Proofs for \pref{sec:approach}}
\label{app:approach}
In this section, we provide proofs for \pref{sec:approach}, including \pref{lem:universal-optimism}, \pref{lem:MsMwC-refine} and \pref{lem:two-layer-MsMwC}. For simplicity, we introduce the following notations denoting the stability of the final and intermediate decisions of the algorithm. Specifically, for any $k \in [K], i \in [N]$, we define 
\begin{gather}
  S^\x_T \define \sumTT \|\x_t - \x_{t-1}\|^2,\ S^\x_{T,k} \define \sumTT \|\x_{t,k} - \x_{t-1,k}\|^2,\ S^\x_{T,k,i} \define \sumTT \|\x_{t,k,i} - \x_{t-1,k,i}\|^2, \notag\\
  S^\q_T \define \sumTT \|\q_t - \q_{t-1}\|_1^2, \text{ and } S^\p_{T,k} \define \sumTT \|\p_{t,k} - \p_{t-1,k}\|_1^2. \label{eq:short}
\end{gather}

\subsection{Proof of \pref{lem:universal-optimism}}
\label{app:universal-optimism}
\begin{proof}
  For simplicity, we define $\g_t \define \nabla f_t(\x_t)$. For exp-concave functions, we have
  \begin{align*}
    \sumT (r_{t, \is} - m_{t, \is})^2 = {} & \sumT (\inner{\g_t}{\x_t - \x_{t,\is}} - \inner{\g_{t-1}}{\x_{t-1} - \x_{t-1,\is}})^2\\
    \le {} & 2 \sumT \inner{\g_t}{\x_t - \x_{t,\is}}^2 + 2 \sumT \inner{\g_{t-1}}{\x_{t-1} - \x_{t-1,\is}}^2\\
    \le {} & 4 \sumT \inner{\g_t}{\x_t - \x_{t,\is}}^2 + \O(1).
  \end{align*}
  For convex function, it holds that
  \begin{align*}
    & \sumT (r_{t, \is} - m_{t, \is})^2 = \sumT (\inner{\g_t}{\x_t - \x_{t,\is}} - \inner{\g_{t-1}}{\x_{t-1} - \x_{t-1,\is}})^2\\
    \le {} & 2 \sumT \inner{\g_t - \g_{t-1}}{\x_t - \x_{t,\is}}^2 + 2 \sumT \inner{\g_{t-1}}{\x_t - \x_{t-1} + \x_{t,\is} - \x_{t-1,\is}}^2\\
    \le {} & 2 D^2 \sumT \|\g_t - \g_{t-1}\|^2 + 2 G^2 \sumT \|\x_t - \x_{t-1} + \x_{t,\is} - \x_{t-1,\is}\|^2 \tag*{(by \pref{assum:boundedness})}\\
    \le {} & 4D^2 V_T + 4 (D^2 L^2 + G^2) \sumTT \|\x_t - \x_{t-1}\|^2 + 4 G^2 \sumTT \|\x_{t,\is} - \x_{t-1,\is}\|^2,
  \end{align*}
  where the last step is due to the definition of the gradient variation, finishing the proof.
\end{proof}

\subsection{Proof of \pref{lem:MsMwC-refine}}
\label{app:msmwc}
In this part, we analyze the negative stability terms in the \msmwc algorithm~\citep{COLT'21:Chen}. For self-containedness, we restate the update rule of \msmwc in the following:
\begin{equation*}
  \p_t = \argmin_{\p \in \Delta_d}\ \bbr{\inner{\m_t}{\p} + \D_{\psi_t}(\p, \pbh_t)},\quad \pbh_{t+1} = \argmin_{\p \in \Delta_d}\ \bbr{\inner{\ellb_t + \a_t}{\p} + \D_{\psi_t}(\p, \pbh_t)},
\end{equation*}
where the bias term $a_{t,i} = 16 \eta_{t,i} (\ell_{t,i} - m_{t,i})^2$. Below, we give a detailed proof of \pref{lem:MsMwC-refine}, following a similar logic flow as Lemma 1 of \citet{COLT'21:Chen}, while illustrating the negative stability terms. Moreover, for generality, we investigate a more general setting of an arbitrary comparator $\u \in \Delta_d$ and changing step sizes $\eta_{t,i}$. This was done hoping that the negative stability term analysis would be comprehensive enough for readers interested solely in the MsMwC algorithm. 

\begin{proof}[Proof of \pref{lem:MsMwC-refine}]
  To begin with, the regret with correction can be analyzed as follows:
  \begin{align*}
      & \sumT \inner{\ellb_t + \a_t}{\p_t - \e_\is} \le \sumT \sbr{\D_{\psi_t}(\u,\pbh_t) - \D_{\psi_t}(\u,\pbh_{t+1})} + \sumT \inner{\ellb_t + \a_t - \m_t}{\p_t - \pbh_{t+1}}\\
      & \makebox[3.5cm]{} - \sumT \sbr{\D_{\psi_t}(\pbh_{t+1}, \p_t) + \D_{\psi_t}(\p_t, \pbh_t)}\\
      & \le \underbrace{\sumT \sbr{\D_{\psi_t}(\u,\pbh_t) - \D_{\psi_t}(\u,\pbh_{t+1})}}_{\term{a}} + \underbrace{\sumT \sbr{\inner{\ellb_t + \a_t - \m_t}{\p_t - \pbh_{t+1}} - \frac{1}{2} \D_{\psi_t}(\pbh_{t+1}, \p_t)}}_{\term{b}}\\
      & \makebox[3.5cm]{} - \frac{1}{2} \underbrace{\sumT \sbr{\D_{\psi_t}(\pbh_{t+1}, \p_t) + \D_{\psi_t}(\p_t, \pbh_t)}}_{\term{c}},
  \end{align*}
  where the first step follows the standard analysis of optimistic OMD, e.g., Theorem 1 of~\citet{arXiv'21:Sword++}. One difference of our analysis from the previous one lies in the second step, where previous work dropped the $\D_{\psi_t}(\p_t, \pbh_t)$ term while we keep it to generate the desired negative terms.

  To begin with, we require an upper bound of $\eta_{t,i} \le \nicefrac{1}{32}$ for the step sizes. To give a lower bound for $\term{c}$, we notice that for any $\a, \b \in \Delta_d$,
  \begin{align*}
      \D_{\psi_t}(\a, \b) = {} & \sum_{i=1}^d \frac{1}{\eta_{t,i}} \sbr{a_i \ln \frac{a_i}{b_i} -a_i + b_i} = \sum_{i=1}^d \frac{b_i}{\eta_{t,i}} \sbr{\frac{a_i}{b_i} \ln \frac{a_i}{b_i} - \frac{a_i}{b_i} + 1}\\
      \ge {} & \min_{t, i} \frac{1}{\eta_{t,i}}  \sum_{i=1}^d \sbr{a_i \ln \frac{a_i}{b_i} -a_i + b_i} \ge 32 \KL(\a, \b), \tag*{(by~$\eta_{t,i} \le \nicefrac{1}{32}$)}
  \end{align*}
  where the first inequality is due to $x\ln x - x + 1 \ge 0$ for all $x > 0$, leading to 
  \begin{align*}
      \term{c} \ge {} & 32 \sumT (\KL(\pbh_{t+1}, \p_t) + \KL(\p_t, \pbh_t)) \ge \frac{32}{2\ln 2} \sumT \sbr{\|\pbh_{t+1} - \p_t\|_1^2 + \|\p_t - \pbh_t\|_1^2}\\
      \ge {} & 16 \sumTT \sbr{\|\pbh_t - \p_{t-1}\|_1^2 + \|\p_t - \pbh_t\|_1^2} \ge 8 \sumTT \|\p_t - \p_{t-1}\|_1^2,
  \end{align*}
  where the first step is from the above derivation, the second step is due to the Pinsker's inequality~\citep{Pinsker}: $\KL(\a, \b) \ge \frac{1}{2 \ln 2} \|\a - \b\|_1^2$ for any $\a, \b \in \Delta_d$ and the last step is true because of $2x^2 + 2y^2 \ge (x+y)^2$ for any $x,y \in \R$ and $\|\a + \b\| \le \|\a\| + \|\b\|$ for any $\a, \b \in \R^d$.

  For $\term{b}$, the proof is similar to the previous work, where only some constants are modified. For self-containedness, we give the analysis below. Treating $\pbh_{t+1}$ as a free variable and defining
  \begin{equation*}
    \p^\star \in \argmax_{\p} \inner{\ellb_t + \a_t - \m_t}{\p_t - \p} - \frac{1}{2} \D_{\psi_t}(\p, \p_t),
  \end{equation*}
  by the optimality of $\p^\star$, we have 
  \begin{equation*}
    \ellb_t + \a_t - \m_t = \frac{1}{2}(\nabla \psi_t(\p_t) - \nabla \psi_t(\p^\star)).
  \end{equation*}
  Since $[\nabla \psi_t(\p)]_i = \frac{1}{\eta_{t,i}} (\ln p_i + 1)$, it holds that
  \begin{equation*}
    \ell_{t,i} - m_{t,i} + a_{t,i} = \frac{1}{2 \eta_{t,i}} \ln \frac{p_{t,i}}{p^\star_i} \Leftrightarrow p_i^\star = p_{t,i} \exp(-2 \eta_{t,i} (\ell_{t,i} - m_{t,i} + a_{t,i})).
  \end{equation*}
  Therefore we have 
  \begin{align*}
    & \inner{\ellb_t + \a_t - \m_t}{\p_t - \pbh_{t+1}} - \frac{1}{2} \D_{\psi_t}(\pbh_{t+1}, \p_t) \le \inner{\ellb_t + \a_t - \m_t}{\p_t - \p^\star} - \frac{1}{2} \D_{\psi_t}(\p^\star, \p_t)\\
    = {} & \frac{1}{2} \inner{\nabla \psi_t(\p_t) - \nabla \psi_t(\p^\star)}{\p_t - \p^\star} - \frac{1}{2} \D_{\psi_t}(\p^\star, \p_t) = \frac{1}{2} \D_{\psi_t}(\p_t, \p^\star) \tag*{(by definition)}\\
    = {} & \frac{1}{2} \sum_{i=1}^d \frac{1}{\eta_{t,i}} \sbr{p_{t,i} \ln \frac{p_{t,i}}{p^\star_i} - p_{t,i} + p^\star_i}\\
    = {} & \frac{1}{2} \sum_{i=1}^d \frac{p_{t,i}}{\eta_{t,i}} \sbr{ 2 \eta_{t,i} (\ell_{t,i} - m_{t,i} + a_{t,i}) - 1 + \exp(-2 \eta_{t,i} (\ell_{t,i} - m_{t,i} + a_{t,i}))}\\
    \le {} & \frac{1}{2} \sum_{i=1}^d \frac{p_{t,i}}{\eta_{t,i}} 4 \eta_{t,i}^2 (\ell_{t,i} - m_{t,i} + a_{t,i})^2 = 2 \sum_{i=1}^d \eta_{t,i} p_{t,i} (\ell_{t,i} - m_{t,i} + a_{t,i})^2,
  \end{align*}
    where the first and second steps use the optimality of $\p^\star$, the last inequality uses $e^{-x} - 1 + x \le x^2$ for all $x \ge -1$, requiring $|2 \eta_{t,i} (\ell_{t,i} - m_{t,i} + a_{t,i})| \le 1$. It can be satisfied by $\eta_{t,i} \le \nicefrac{1}{32}$ and $|\ell_{t,i} - m_{t,i} + a_{t,i}| \le 16$, where the latter requirement can be satisfied by setting $a_{t,i} = 16 \eta_{t,i} (\ell_{t,i} - m_{t,i})^2$:
    \begin{equation*}
      |\ell_{t,i} - m_{t,i} + a_{t,i}| \le 2 + 16 \cdot \frac{1}{32} (2 \ell_{t,i}^2 + 2 m_{t,i}^2) \le 4 \le 16.
    \end{equation*}
    As a result, we have 
    \begin{equation*}
      (\ell_{t,i} - m_{t,i} + a_{t,i})^2 = \sbr{\ell_{t,i} - m_{t,i} + 16 \eta_{t,i} (\ell_{t,i} - m_{t,i})^2}^2 \le 4 (\ell_{t,i} - m_{t,i})^2,
    \end{equation*}
    where the last step holds because $|\ell_{t,i}|, |m_{t,i}| \le 1$ and $\eta_{t,i} \le \nicefrac{1}{32}$. Finally, it holds that 
    \begin{equation*}
        \term{b} \le 2 \sumT \sum_{i=1}^d \eta_{t,i} p_{t,i} (\ell_{t,i} - m_{t,i} + a_{t,i})^2 \le 8 \sumT \sum_{i=1}^d \eta_{t,i} p_{t,i} (\ell_{t,i} - m_{t,i})^2.
    \end{equation*}
    As for $\term{a}$, following the same argument as Lemma 1 of \citet{COLT'21:Chen}, we have 
    \begin{equation*}
        \term{a} \le \sum_{i=1}^d \frac{1}{\eta_{1,i}} f_\KL(u_i, \ph_{1,i}) + \sumTT \sum_{i=1}^d \sbr{\frac{1}{\eta_{t,i}} - \frac{1}{\eta_{t-1,i}}} f_\KL(u_i, \ph_{t,i}),
    \end{equation*}
    where $f_\KL(a,b) \define a \ln (a/b) - a + b$. Combining all three terms, we have 
    \begin{align*}
        \sumT \inner{\ellb_t + \a_t}{\p_t - \u} \le {} & \sum_{i=1}^d \frac{1}{\eta_{1,i}} f_\KL(u_i, \ph_{1,i}) + \sumTT \sum_{i=1}^d \sbr{\frac{1}{\eta_{t,i}} - \frac{1}{\eta_{t-1,i}}} f_\KL(u_i, \ph_{t,i}) \\
        & + 8 \sumT \sum_{i=1}^d \eta_{t,i} p_{t,i} (\ell_{t,i} - m_{t,i})^2 - 4 \sumTT \|\p_t - \p_{t-1}\|_1^2.
    \end{align*}
    Moving the correction term $\sumT \inner{\a_t}{\p_t - \u}$ to the right-hand side gives: 
    \begin{align*}
        & \sumT \inner{\ellb_t}{\p_t - \u} \le \sum_{i=1}^d \frac{1}{\eta_{1,i}} f_\KL(u_i, \ph_{1,i}) + \sumTT \sum_{i=1}^d \sbr{\frac{1}{\eta_{t,i}} - \frac{1}{\eta_{t-1,i}}} f_\KL(u_i, \ph_{t,i}) \\
        & - 8 \sumT \sum_{i=1}^d \eta_{t,i} p_{t,i} (\ell_{t,i} - m_{t,i})^2 + 16 \sumT \sum_{i=1}^d \eta_{t,i} u_i (\ell_{t,i} - m_{t,i})^2 - 4 \sumTT \|\p_t - \p_{t-1}\|_1^2.
    \end{align*}
    Finally, choosing $\u = \e_\is$ and $\eta_{t,i} = \eta_i$ for all $t \in [T]$ finishes the proof.
\end{proof}

\subsection{Proof of \pref{lem:two-layer-MsMwC}}
\label{app:two-layer-MsMwC}
In this part, we give a self-contained analysis of the two-layer meta learner, which mainly follows the Theorem 4 and Theorem 5 of \citet{COLT'21:Chen}, but with additional negative stability terms.
\begin{myLemma}
  \label{lem:two-layer-MsMwC}
  If $|\ell_{t,k}|, |m_{t,k}|, |\ell_{t,k,i}|, |m_{t,k,i}| \le 1$ and $(\ell_{t,k} - m_{t,k})^2 = \inner{\ellb_{t,k}-\m_{t,k}}{\p_{t,k}}^2$ for any $k \in [K]$ and $i \in [N]$, then the two-layer \msmwc algorithm satisfies
  \begin{equation*}
    \sumT \inner{\ellb_t}{\q_t - \e_\ks} + \sumT \inner{\ellb_{t,\ks}}{\p_{t,\ks} - \e_\is} \le \frac{1}{\eta_{\ks}} \ln \frac{N}{3 C_0^2 \eta_{\ks}^2} + 32 \eta_{\ks} \Vs - \frac{C_0}{2} S^\q_T - \frac{C_0}{4} S^\p_{T,\ks},
  \end{equation*}
  where $\Vs \define \sum_t (\ell_{t,\ks,\is} - m_{t,\ks,\is})^2$, $S^\q_T \define \sumTT \|\q_t - \q_{t-1}\|_1^2$ and $S^\p_{T,k} \define \sumTT \|\p_{t,k} - \p_{t-1,k}\|_1^2$ measure the stability of \msmwcM and \msmwcB.
\end{myLemma}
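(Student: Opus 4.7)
The plan is to apply \pref{lem:MsMwC-refine} separately to the top-layer \msmwcM and to the $\ks$-th middle-layer instance $\A^{\textnormal{mid}}_\ks$, then add the two guarantees. The structural hypothesis $(\ell_{t,\ks} - m_{t,\ks})^2 = \inner{\ellb_{t,\ks}-\m_{t,\ks}}{\p_{t,\ks}}^2$ is what enables the essential cross-layer cancellation via Jensen's inequality, and everything else is algebraic consolidation.

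First, for the top layer, I would invoke \pref{lem:MsMwC-refine} with coordinate-varying step sizes $\eta_k = (C_0 \cdot 2^k)^{-1}$, initialization $\qh_{1,k} = \eta_k^2/\sum_j \eta_j^2$, and comparator $\e_{\ks}$. Using the geometric-series estimate $\sum_j \eta_j^2 \le 1/(3C_0^2)$, the $\tfrac{1}{\eta_\ks}\ln(1/\qh_{1,\ks})$ contribution becomes $\tfrac{1}{\eta_\ks}\ln(1/(3C_0^2 \eta_\ks^2))$, while $\sum_k \qh_{1,k}/\eta_k$ collapses to an absorbable constant. This step produces the negative stability $-\tfrac{C_0}{2} S^\q_T$ and a residual positive $16\eta_\ks \sumT (\ell_{t,\ks} - m_{t,\ks})^2$ which is not yet in useful form. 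Next, I would apply \pref{lem:MsMwC-refine} to $\A^{\textnormal{mid}}_\ks$ with uniform step size $2\eta_\ks$, uniform initialization $\ph_{1,\ks,i} = 1/N$, and comparator $\e_{\is}$; this produces the log term $\tfrac{1}{2\eta_\ks}\ln N$, the target positive second-order term $32\eta_\ks \Vs$ (the factor $32 = 16 \cdot 2$ coming from the doubled step size), the negative stability $-\tfrac{C_0}{4} S^\p_{T,\ks}$, and crucially the coordinate-wise negative $-16\eta_\ks \sumT \sum_i p_{t,\ks,i}(\ell_{t,\ks,i} - m_{t,\ks,i})^2$.

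The decisive step is the cross-layer cancellation. By the hypothesis and Jensen's inequality applied to the convex map $x\mapsto x^2$ against the distribution $\p_{t,\ks}$,
\[
  (\ell_{t,\ks} - m_{t,\ks})^2 = \inner{\ellb_{t,\ks}-\m_{t,\ks}}{\p_{t,\ks}}^2 \le \sum_i p_{t,\ks,i} (\ell_{t,\ks,i} - m_{t,\ks,i})^2,
\]
so multiplying by $16\eta_\ks$ and summing over $t$ shows that the top-layer residual is dominated term-by-term by the middle-layer coordinate-wise negative of matching size, and the two cancel exactly. All that remains is to consolidate the log contributions: $\tfrac{1}{\eta_\ks}\ln(1/(3C_0^2 \eta_\ks^2)) + \tfrac{1}{2\eta_\ks}\ln N$ together with the $O(1/\eta_\ks)$ residual is upper bounded by $\tfrac{1}{\eta_\ks}\ln(N/(3C_0^2 \eta_\ks^2))$, since the unused $\tfrac{1}{2\eta_\ks}\ln N$ slack in the top-layer expression absorbs the remaining constants.

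The main technical obstacle I anticipate is the constant tracking for the stability negatives. The generic statement of \pref{lem:MsMwC-refine} yields a universal constant $-4$ in front of $S^\q_T$ and $S^\p_{T,\ks}$; to obtain coefficients that scale with $C_0$ as the lemma demands, one needs to invoke the sharper Bregman lower bound $\D_{\psi_t}(\a,\b) \ge (1/(2\max_i \eta_{t,i}))\|\a-\b\|_1^2$ before applying Pinsker's inequality inside the proof of \pref{lem:MsMwC-refine}, and then plug in the explicit scaling $\eta_k = (C_0 2^k)^{-1}$ (so that $1/(2\max_i \eta_{t,i}) = C_0$ for the top layer and is even larger in the middle layer). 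Once this refined accounting of the Bregman stability term in the proof of \pref{lem:MsMwC-refine} is in place, the remainder of the argument is routine bookkeeping of the additive contributions of the two layers.
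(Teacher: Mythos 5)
Your proposal matches the paper's proof essentially step for step: two invocations of \pref{lem:MsMwC-refine} (top layer with the $\eta_k^2$-proportional initialization, $\ks$-th middle layer with step size $2\eta_\ks$ and uniform initialization), the Cauchy--Schwarz/Jensen step $(\ell_{t,\ks}-m_{t,\ks})^2 = \inner{\ellb_{t,\ks}-\m_{t,\ks}}{\p_{t,\ks}}^2 \le \sum_i p_{t,\ks,i}(\ell_{t,\ks,i}-m_{t,\ks,i})^2$ to cancel the top-layer residual against the middle layer's coordinate-wise negative term, and consolidation of the log terms. You also correctly identify the one subtlety, namely that the $C_0$-scaled stability negatives require retaining the step-size-dependent Bregman lower bound inside \pref{lem:MsMwC-refine} rather than the flat constant $-4$, which is exactly what the paper does (via $-\min_k \tfrac{1}{4\eta_k} S^\q_T$).
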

Note that we leverage a condition of $(\ell_{t,k} - m_{t,k})^2 = \inner{\ellb_{t,k}-\m_{t,k}}{\p_{t,k}}^2$ in \pref{lem:two-layer-MsMwC} only to ensure a self-contained result. When using \pref{lem:two-layer-MsMwC} (more specifically, in \pref{thm:main-efficiency}), we will verify that this condition is inherently satisfied by our algorithm.

\begin{proof}
  Using \pref{lem:MsMwC-refine}, the regret of \msmwcM can be bounded as
  \begin{align*}
    \sumT \inner{\ellb_t}{\q_t - \e_\ks} \le {} & \sbr{\frac{1}{\eta_{\ks}} \ln \frac{1}{\qh_{1, \ks}} + \sumK \frac{\qh_{1, k}}{\eta_k}} + 0 + 16 \eta_\ks \sumT (\ell_{t,\ks} - m_{t,\ks})^2 - \min_{k \in [K]} \frac{1}{4\eta_k} S^\q_T,
  \end{align*}
  where the first step comes from $f_\KL(a,b) = a \ln \nicefrac{a}{b} - a + b \le a \ln \nicefrac{a}{b} + b$ for $a,b > 0$. The second term above (corresponding to the second term in \pref{lem:MsMwC-refine}) is zero since the step size is time-invariant. The first term above can be further bounded as
  \begin{equation*}
      \frac{1}{\eta_{\ks}} \ln \frac{1}{\qh_{1, \ks}} + \sumK \frac{\qh_{1, k}}{\eta_k} = \frac{1}{\eta_{\ks}} \ln \frac{\sumK \eta_k^2}{\eta_{\ks}^2} + \frac{\sumK \eta_k}{\sumK \eta_k^2} \le \frac{1}{\eta_{\ks}} \ln \frac{1}{3C_0^2 \eta_{\ks}^2} + 4C_0,
  \end{equation*}
  where the first step is due to the initialization of $\qh_{1,k} = \eta_k^2/\sumK \eta_k^2$. Plugging in the setting of $\eta_k = 1/(C_0 \cdot 2^k)$, the second step holds since
  \begin{equation*}
      \frac{1}{4C_0^2}\le \sumK \eta_k^2 \le \sumK \frac{1}{C_0^2 \cdot 4^k} \le \frac{1}{3C_0^2}, \quad \sumK \eta_k \le \frac{1}{C_0 \cdot 2^k} \le \frac{1}{C_0}.
  \end{equation*}
  Since $1/\eta_k = C_0 \cdot 2^k \ge 2 C_0$, the regret of \msmwcM can be bounded by 
  \begin{equation*}
    \sumT \inner{\ellb_t}{\q_t - \e_\ks} \le \frac{1}{\eta_{\ks}} \ln \frac{1}{3 C_0^2 \eta_{\ks}^2} + 16 \eta_\ks \sumT (\ell_{t,\ks} - m_{t,\ks})^2 - \frac{C_0}{2} S^\q_T + \O(1).
  \end{equation*}
  Next, using \pref{lem:MsMwC-refine} again, the regret of the $\ks$-th \msmwcB, whose step size is $2 \eta_\ks$, can be bounded as
  \begin{align*}
    \sumT \inner{\ellb_{t,\ks}}{\p_{t,\ks} - \e_\is} \le {} & \frac{1}{2 \eta_{\ks}} \ln N + 32 \eta_{\ks} \sumT (\ell_{t,\ks,\is} - m_{t,\ks,\is})^2 - \frac{C_0}{4} S^\p_{T,\ks}\\
    & - 16 \eta_{\ks} \sumT \sumN p_{t,\ks,i} (\ell_{t,k,i} - m_{t,k,i})^2,
  \end{align*}
  where the first step is due to the initialization of $\ph_{1,k,i} = 1/N$. Based on the observation of
  \begin{equation*}
    (\ell_{t,\ks} - m_{t,\ks})^2 = \inner{\ellb_{t,\ks}-\m_{t,\ks}}{\p_{t,\ks}}^2 \le \sumN p_{t,\ks,i} (\ell_{t,k,i} - m_{t,k,i})^2,
  \end{equation*}
  where the last step use Cauchy-Schwarz inequality, combining the regret of \msmwcM and the $\ks$-th \msmwcB finishes the proof.
\end{proof}

\subsection{Proof of \pref{thm:main}}
\label{app:main}
In \pref{app:main-efficiency}, we provide the proof of \pref{thm:main-efficiency}, the gradient-variation guarantees of the efficient version of \pref{alg:main}. Therefore, the proof of \pref{thm:main} can be directly extracted from that of \pref{thm:main-efficiency}. In the following, we give a proof sketch.
\begin{proof}[Proof Sketch]
  The proof of the meta regret is the same as \pref{thm:main-efficiency} since the meta learners (top and middle layer) are both optimizing the linearized regret of $\sumT \inner{\nabla f_t(\x_t)}{\x_t - \x_{t,\ks,\is}}$. The only difference lies in the base regret. In \pref{thm:main}, since the base regret is defined on the original function $f_t$, the gradient-variation bound of the base learner can be directly obtained via a black-box analysis of the optimistic OMD algorithm, e.g., Theorem B.1 of \citet{ICML'22:universal} (strongly convex), Theorem 15 (exp-concave) and Theorem 11 (convex) of \citet{COLT'12:VT}. The only requirement on the base learners is that they need to cancel the positive term of $\|\x_{t,\ks,\is} - \x_{t-1,\ks,\is}\|^2$ caused by the universal optimism design (\pref{lem:universal-optimism}) and cascaded correction terms (\pref{fig:cascaded}). The above base algorithms satisfy the requirement. The negative stability term in the optimistic OMD analysis is provided in \pref{app:negative-base}.
\end{proof}

\subsection{Proof of \pref{cor:FT}}
\label{app:FT}
In \pref{app:FT-efficient}, we provide \pref{cor:FT-efficient}, the small-loss guarantees of the efficient version of \pref{alg:main}. The proof of \pref{cor:FT} can be directly extracted from that of \pref{cor:FT-efficient}. We give a proof sketch below.
\begin{proof}[Proof Sketch]
  The proof sketch is similar to that of \pref{thm:main} in \pref{app:main}. Specifically, the meta regret can be bounded in the same way. Since gradient-variation bounds naturally implies small-loss ones due to \eqref{eq:to-small-loss}, the small-loss bound of the base learner can be directly obtained via a black-box analysis of the base algorithms mentioned in \pref{app:main}, for different kinds of functions. The requirement of being capable of handling $\|\x_{t,\ks,\is} - \x_{t-1,\ks,\is}\|^2$ can be still satisfied, which finishes the proof sketch.
\end{proof}

\section{Proofs for \pref{sec:efficiency}}
\label{app:efficiency}
In this section, we give the proofs for \pref{sec:efficiency}, including \pref{prop:recover-MetaGrad}, \pref{thm:main-efficiency} and \pref{cor:FT-efficient}.

\subsection{Proof of \pref{prop:recover-MetaGrad}}
\label{app:recover-MetaGrad}
\begin{proof}
  To handle the meta regret, we use \mlprod~\citep{COLT'14:ML-Prod} to optimize the linear loss $\ellb_t = (\ell_{t,1}, \ldots, \ell_{t,N})$, where $\ell_{t,i} \define \inner{\nabla f_t(\x_t)}{\x_{t,i}}$, and obtain the following second-order bound by Corollary 4 of \citet{COLT'14:ML-Prod},
  \begin{equation*}
    \sumT \inner{\nabla f_t(\x_t)}{\x_t - \x_{t,\is}} \lesssim \sqrt{\ln \ln T \sumT \inner{\nabla f_t(\x_t)}{\x_t - \x_{t,\is}}^2}.
  \end{equation*}
  For $\alpha$-exp-concave functions, it holds that
  \begin{align*}
      \meta \lesssim {} & \sqrt{\ln \ln T \sumT \inner{\nabla f_t(\x_t)}{\x_t - \x_{t,\is}}^2} - \frac{\alpha_\is}{2} \sumT \inner{\nabla f_t(\x_t)}{\x_t - \x_{t,\is}}^2\\
      \le {} & \frac{\ln \ln T}{2 \alpha_\is} \le \frac{\ln \ln T}{\alpha} \tag*{(by $\alpha_\is \le \alpha \le 2 \alpha_\is$)},
  \end{align*}
  where the second step uses AM-GM inequality: $\sqrt{xy} \le \frac{ax}{2} + \frac{y}{2a}$ for any $x,y,a > 0$ with $a = \alpha_\is$. To handle the base regret, by optimizing the surrogate loss function $\hexp_{t, \is}$ using online Newton step~\citep{MLJ'07:Hazan-logT}, it holds that
  \begin{equation*}
      \base \lesssim \frac{d D G_{\text{exp}}}{\alpha_\is} \log T \le \frac{2 d D (G + GD)}{\alpha} \log T,
  \end{equation*}
  where $G_{\text{exp}} \define \max_{\x \in \X, t \in [T], i \in [N]} \|\nabla \hexp_{t,i}(\x)\| \le G + GD$ represents the maximum gradient norm the last step is because $\alpha \le 2 \alpha_\is$. Combining the meta and base regret, the regret can be bounded by $\O(d\log T)$. For $\lambda$-strongly convex functions, since it is also $\alpha = \lambda/G^2$ exp-concave under \pref{assum:boundedness}~{\citep[Section 2.2]{MLJ'07:Hazan-logT}}, the above meta regret analysis is still applicable. To optimize the base regret, by optimizing the surrogate loss function $\hsc_{t,\is}$ using online gradient descent~\citep{MLJ'07:Hazan-logT}, it holds that
  \begin{equation*}
    \base \le \frac{\Gsc^2}{\lambda_\is} (1+\log T) \le \frac{2 (G+D)^2}{\lambda} (1+\log T),
  \end{equation*}
  where $\Gsc \define \max_{\x \in \X, t \in [T], i \in [N]} \|\nabla \hsc_{t,i}(\x)\| \le G+D$ represents the maximum gradient norm and the last step is because $\lambda \le 2 \lambda_\is$. Thus the overall regret can be bounded by $\O(\log T)$. For convex functions, the meta regret can be bounded by $\O(\sqrt{T \ln \ln T})$, where the $\ln \ln T$ factor can be omitted in the $\O(\cdot)$-notation, and the base regret can be bounded by $\O(\sqrt{T})$ using OGD, resulting in an $\O(\sqrt{T})$ regret overall, which completes the proof.
\end{proof}

\subsection{Proof of \pref{thm:main-efficiency}}
\label{app:main-efficiency}
\begin{proof}
  We first give different decompositions for the regret, then analyze the meta regret, and finally provide the proofs for different kinds of loss functions. Some abbreviations of the stability terms are defined in \eqref{eq:short}.

  \textbf{Regret Decomposition.}~~
    Denoting by $\xs \in \argmin_{\x \in \X} \sumT f_t(\x)$, for exp-concave functions,
  \begin{align*}
    & \sumT f_t(\x_t) - \sumT f_t(\xs) \le \sumT \inner{\nabla f_t(\x_t)}{\x_t - \xs} - \frac{\alpha}{2} \sumT \inner{\nabla f_t(\x_t)}{\x_t - \xs}^2\\
    \le {} & \sumT \inner{\nabla f_t(\x_t)}{\x_t - \xs} - \frac{\alpha_\is}{2} \sumT \inner{\nabla f_t(\x_t)}{\x_t - \xs}^2 \tag*{(by $\alpha_\is \le \alpha \le 2 \alpha_\is$)}\\
    = {} & \underbrace{\sumT \inner{\nabla f_t(\x_t)}{\x_t - \x_{t,\ks,\is}} - \frac{\alpha_\is}{2} \inner{\nabla f_t(\x_t)}{\x_t - \x_{t,\ks,\is}}^2}_{\meta} + \underbrace{\sumT \hexp_{t,\is}(\x_{t,\ks,\is}) - \sumT \hexp_{t,\is}(\xs)}_{\base},
  \end{align*}
  where the second step uses the strong exp-concavity and the last step holds by defining surrogate loss functions $\hexp_{t,i}(\x) \define \inner{\nabla f_t(\x_t)}{\x} + \frac{\alpha_i}{2} \inner{\nabla f_t(\x_t)}{\x - \x_t}^2$. Similarly, for strongly convex functions, the regret can be upper-bounded by  
  \begin{align*}
    \underbrace{\sumT \inner{\nabla f_t(\x_t)}{\x_t - \x_{t,\ks,\is}} - \frac{\lambda_\is}{2} \|\x_t - \x_{t,\ks,\is}\|^2}_{\meta} + \underbrace{\sumT \hsc_{t,\is}(\x_{t,\ks,\is}) - \sumT \hsc_{t,\is}(\xs)}_{\base},
  \end{align*}
  by defining surrogate loss functions $\hsc_{t,i}(\x) \define \inner{\nabla f_t(\x_t)}{\x} + \frac{\lambda_i}{2} \|\x - \x_t\|^2$. For convex functions, the regret can be decomposed as:
  \begin{equation*}
    \Reg_T \le \underbrace{\sumT \inner{\nabla f_t(\x_t)}{\x_t - \x_{t,\ks,\is}}}_{\meta} + \underbrace{\sumT \inner{\nabla f_t(\x_t)}{\x_{t,\ks,\is} - \xs}}_{\base}.
  \end{equation*}

  \textbf{Meta Regret Analysis.}~~
  For the meta regret, we focus on the linearized term since the negative term by exp-concavity or strong convexity only exists in analysis. Specifically, 
  \begin{align*}
    & \sumT \inner{\nabla f_t(\x_t)}{\x_t - \x_{t,\ks,\is}} = \sumT \inner{\nabla f_t(\x_t)}{\x_t - \x_{t,\ks}} + \sumT \inner{\nabla f_t(\x_t)}{\x_{t,\ks} - \x_{t,\ks,\is}}\\
    = {} & \sumT \inner{\ellb_t}{\q_t - \e_\ks} + \sumT \inner{\ellb_{t,\ks}}{\p_{t,\ks} - \e_\is} - \lambda_1 \sumT \sumK q_{t,k} \|\x_{t,k} - \x_{t-1,k}\|^2\\
    & - \lambda_2 \sumT \sumN p_{t,\ks,i} \|\x_{t,\ks,i} - \x_{t-1,\ks,i}\|^2 + \lambda_1 S^\x_{T,\ks} + \lambda_2 S^\x_{T,\ks,\is} \tag*{(by definition of $\ellb_t, \ellb_{t,k}$)}\\
    \le {} & \sumT \inner{\ellb_t}{\q_t - \e_\ks} + \sumT \inner{\ellb_{t,\ks}}{\p_{t,\ks} - \e_\is} - \lambda_1 \sumT \sumK q_{t,k} \|\x_{t,k} - \x_{t-1,k}\|^2 + \lambda_2 S^\x_{T,\ks,\is}\\
    & \qquad + (2\lambda_1 - \lambda_2) \sumT \sumN p_{t,\ks,i} \|\x_{t,\ks,i} - \x_{t-1,\ks,i}\|^2 + 2 \lambda_1 D^2 S^\p_{T,\ks} \tag*{(by~\pref{lem:decompose})}\\
    \le {} & \sumT \inner{\ellb_t}{\q_t - \e_\ks} + \sumT \inner{\ellb_{t,\ks}}{\p_{t,\ks} - \e_\is} - \lambda_1 \sumT \sumK q_{t,k} \|\x_{t,k} - \x_{t-1,k}\|^2\\
    & \qquad + 2 \lambda_1 D^2 S^\p_{T,\ks} + \lambda_2 S^\x_{T,\ks,\is} \tag*{(requiring $\lambda_2 \ge 2\lambda_1$)}.
  \end{align*}
  Next, we investigate the regret of the two-layer meta algorithm (i.e., the first two terms above). To begin with, we validate the conditions of \pref{lem:two-layer-MsMwC}. First, since
  \begin{gather*}
    |\ell_{t,k,i}| \le GD + \lambda_2 D^2,\quad |m_{t,k,i}| \le 2GD + \lambda_2 D^2,\\
    |\ell_{t,k}| \le GD + \lambda_1 D^2,\quad |m_{t,k}| \le 2GD + (\lambda_1 + \lambda_2) D^2,
  \end{gather*}
  rescaling the ranges of losses and optimisms to $[-1, 1]$ only add a constant multiplicative factors on the final result ($\lambda_1$ and $\lambda_2$ only consist of constants). Second, by the definition of the losses and the optimisms, it holds that
  \begin{align*}
    (\ell_{t,\ks} - m_{t,\ks})^2 = {} & (\inner{\nabla f_t(\x_t)}{\x_{t,\ks}} - \inner{\hat{\m}_{t,\ks}}{\p_{t,\ks}})^2\\
    = {} & \inner{\hat{\ellb}_{t,\ks}-\hat{\m}_{t,\ks}}{\p_{t,\ks}}^2 = \inner{\ellb_{t,\ks} - \m_{t,\ks}}{\p_{t,\ks}}^2
  \end{align*}
  where $\hat{\ell}_{t,k,i} \define \inner{\nabla f_t(\x_t)}{\x_{t,k,i}}$. Using \pref{lem:two-layer-MsMwC}, it holds that 
  \begin{align*}
    & \sumT \inner{\nabla f_t(\x_t)}{\x_t - \x_{t,\ks,\is}}\le \frac{1}{\eta_{\ks}} \ln \frac{N}{\eta_{\ks}^2} + 32 \eta_{\ks} \Vs + \lambda_2 S^\x_{T,\ks,\is} \tag*{(requiring $C_0 \ge 1$)}\\
    & \qquad - \frac{C_0}{2} S^\q_T - \lambda_1 \sumTT \sumK q_{t,k} \|\x_{t,k} - \x_{t-1,k}\|^2 + \sbr{2 \lambda_1 D^2 - \frac{C_0}{4}} S^\p_{T,\ks} \\
    \le {} & \frac{1}{\eta_{\ks}} \ln \frac{N}{\eta_{\ks}^2} + 32 \eta_{\ks} \Vs + \lambda_2 S^\x_{T,\ks,\is} - \frac{C_0}{2} S^\q_T - \lambda_1 \sumTT \sumK q_{t,k} \|\x_{t,k} - \x_{t-1,k}\|^2,
  \end{align*}
  where the last step holds by requiring $C_0 \ge 8 \lambda_1 D^2$. 

  \textbf{Exp-concave Functions.}~~
  Using \pref{lem:universal-optimism}, it holds that
  \begin{align*}
    \frac{1}{\eta_{\ks}} \ln \frac{N}{\eta_{\ks}^2} + 32 \eta_{\ks} \Vs \le \frac{1}{\eta_{\ks}} \ln \frac{N}{\eta_{\ks}^2} + 128 \eta_{\ks} \sumT \inner{\nabla f_t(\x_t)}{\x_{t,\ks,\is} - \x_t}^2 + \O(1).
  \end{align*}
  As a result, the meta regret can be bounded by 
  \begin{align}
    & \meta \le \frac{1}{\eta_{\ks}} \ln \frac{N}{\eta_{\ks}^2} + \sbr{128 \eta_{\ks} - \frac{\alpha_\is}{2}} \sumT \inner{\nabla f_t(\x_t)}{\x_{t,\ks,\is} - \x_t}^2 + \lambda_2 S^\x_{T,\ks,\is} \notag\\
    & \qquad - \frac{C_0}{2} S^\q_T - \lambda_1 \sumTT \sumK q_{t,k} \|\x_{t,k} - \x_{t-1,k}\|^2 \notag\\
    \le {} & 2C_0 \ln (4C_0^2 N) + \frac{512}{\alpha} \ln \frac{2^{20}N}{\alpha^2} + \lambda_2 S^\x_{T,\ks,\is} - \frac{C_0}{2} S^\q_T \tag*{(by~\pref{lem:tune-eta-2} with $\eta_\star \define \frac{\alpha_\is}{256}$)}\\
    & \qquad - \lambda_1 \sumTT \sumK q_{t,k} \|\x_{t,k} - \x_{t-1,k}\|^2 \label{eq:meta-exp},
  \end{align}
  where the last step holds by $\alpha_\is \le \alpha \le 2 \alpha_\is$. For the base regret, according to \pref{lem:exp-concave-base}, the $\is$-th base learner guarantees 
  \begin{equation}
    \label{eq:base-exp}
    \sumT \hexp_{t,\is}(\x_{t,\ks,\is}) - \sumT \hexp_{t,\is}(\xs) \le \frac{16d}{\alpha_\is} \ln \sbr{1 + \frac{\alpha_\is}{8 \gamma d} \Vb_{T,\ks,\is}} + \frac{1}{2} \gamma D^2 - \frac{\gamma}{4} S^\x_{T,\ks,\is},
  \end{equation}
  where $\Vb_{T,\ks,\is} = \sumTT \|\nabla \hexp_{t,\is}(\x_{t,\ks,\is}) - \nabla \hexp_{t-1,\is}(\x_{t-1,\ks,\is})\|^2$ denotes the empirical gradient variation of the $\is$-th base learner. For simplicity, we denote by $\g_t \define \nabla f_t(\x_t)$, and this term can be further decomposed and bounded as:
  \begin{align*}
    \Vb_{T,\ks,\is} = {} & \sumTT \|(\g_t + \alpha_\is \g_t \g_t^\top (\x_{t,\ks,\is} - \x_t)) - (\g_{t-1} + \alpha_\is \g_{t-1} \g_{t-1}^\top (\x_{t-1,\ks,\is} - \x_{t-1}))\|^2\\
    \le {} & 2 \sumTT \|\g_t - \g_{t-1}\|^2 + 2 \alpha_\is^2 \sumTT \|\g_t \g_t^\top (\x_{t,\ks,\is} - \x_t) - \g_{t-1} \g_{t-1}^\top (\x_{t-1,\ks,\is} - \x_{t-1})\|^2\\
    \le {} & 4V_T + 4L^2 S_T^\x + 4 D^2 \sumTT \|\g_t \g_t^\top - \g_{t-1} \g_{t-1}^\top\|^2 \tag*{(by~\eqref{eq:extract-VT})}\\
    & \qquad + 4 G^4 \sumTT \|(\x_{t,\ks,\is} - \x_t) - (\x_{t-1,\ks,\is} - \x_{t-1})\|^2 \tag*{(by~$\alpha \in [\nicefrac{1}{T}, 1]$)}\\
    \le {} & C_1 V_T + C_2 S_T^\x + 8G^4 S^\x_{T,\ks,\is},
  \end{align*}
  where the first step uses the definition of $\nabla \hexp_{t,i}(\x) = \g_t + \alpha_i \g_t \g_t^\top (\x - \x_t)$ and the last step holds by setting $C_1 = 4 + 32D^2 G^2$ and $C_2 = 4L^2 + 32D^2 G^2 L^2 + 8G^4$. Then we obtain
  \begin{align*}
    & \ln \sbr{1 + \frac{\alpha_\is}{8 \gamma d} \Vb_{T,\ks,\is}} \le \ln \sbr{1 + \frac{\alpha_\is C_1}{8 \gamma d} V_T + \frac{\alpha_\is C_2}{8 \gamma d} S_T^\x + \frac{\alpha_\is G^4}{\gamma d} S^\x_{T,\ks,\is}}\\
    = {} & \ln \sbr{1 + \frac{\alpha_\is C_1}{8 \gamma d} V_T} + \ln \sbr{1 + \frac{\frac{\alpha_\is C_2}{8 \gamma d} S_T^\x + \frac{\alpha_\is G^4}{\gamma d} S^\x_{T,\ks,\is}}{1 + \frac{\alpha_\is C_1}{8 \gamma d} V_T}}\\
    \le {} & \ln \sbr{1 + \frac{\alpha_\is C_1}{8 \gamma d} V_T} + \frac{\alpha_\is C_2}{8 \gamma d} S_T^\x + \frac{\alpha_\is G^4}{\gamma d} S^\x_{T,\ks,\is} \tag*{($\ln (1+x) \le x$ for $x \ge 0$)}.
  \end{align*}
  Plugging the above result and due to $\alpha_\is \le \alpha \le 2 \alpha_\is$, the base regret can be bounded by 
  \begin{equation*}
    \base \le \frac{32d}{\alpha} \ln \sbr{1 + \frac{\alpha C_1}{8 \gamma d} V_T} + \frac{2 C_2}{\gamma} S_T^\x + \sbr{\frac{16 G^4}{\gamma} - \frac{\gamma}{4}} S^\x_{T,\ks,\is} + \frac{1}{2} \gamma D^2.
  \end{equation*}
  Combining the meta regret and base regret, the overall regret can be bounded by
  \begin{align*}
    \Reg_T \le {} & \frac{512}{\alpha} \ln \frac{2^{20}N}{\alpha^2} + 2C_0 \ln (4C_0^2 N) + \frac{32d}{\alpha} \ln \sbr{1 + \frac{\alpha C_1}{8 \gamma d} V_T}\\
    & + \sbr{\frac{4 C_2 D^2}{\gamma} - \frac{C_0}{2}} S^\q_T + \sbr{\frac{4 C_2}{\gamma} - \lambda_1} \sumTT \sumK q_{t,k} \|\x_{t,k} - \x_{t-1,k}\|^2\\
    & + \sbr{\lambda_2 + \frac{16 G^4}{\gamma} - \frac{\gamma}{4}} S^\x_{T,\ks,\is} + \frac{1}{2} \gamma D^2\\
    \le {} & \O\sbr{\frac{d}{\alpha} \ln V_T} \tag*{(requiring $C_0 \ge 8 C_2 D^2 / \gamma$, $\lambda_1 \ge 4C_2 / \gamma$, $\gamma \ge 8G^2 + \lambda_2$)}.
  \end{align*}

  \textbf{Strongly Convex Functions.}~~
  Since a $\lambda$-strongly convex function is also $\alpha = \lambda/G^2$ exp-concave under \pref{assum:boundedness}~{\citep[Section 2.2]{MLJ'07:Hazan-logT}}, plugging $\alpha = \lambda/G^2$ into the above analysis, the meta regret can be bounded by 
  \begin{equation}
    \label{eq:meta-sc}
    \begin{aligned}
      \meta \le {} & \frac{512G^2}{\lambda} \ln \frac{2^{20}NG^4}{\lambda^2} + 2C_0 \ln (4C_0^2 N) + \lambda_2 S^\x_{T,\ks,\is}\\
      & - \frac{C_0}{2} S^\q_T - \lambda_1 \sumTT \sumK q_{t,k} \|\x_{t,k} - \x_{t-1,k}\|^2.
    \end{aligned}
  \end{equation}
  For the base regret, according to \pref{lem:str-convex-base}, the $\is$-th base learner guarantees 
  \begin{equation}
    \label{eq:base-sc}
    \sumT \hsc_{t,\is}(\x_{t,\ks,\is}) - \sumT \hsc_{t,\is}(\xs) \le \frac{16 \Gsc^2}{\lambda_\is} \ln \sbr{1 + \lambda_\is \Vb_{T,\ks,\is}} + \frac{1}{4} \gamma D^2 - \frac{\gamma}{8} S^\x_{T,\ks,\is},
  \end{equation}
  where $\Vb_{T,\ks,\is} = \sumTT \|\nabla \hsc_{t,\is}(\x_{t,\ks,\is}) - \nabla \hsc_{t-1,\is}(\x_{t-1,\ks,\is})\|^2$ is the empirical gradient variation and $\Gsc \define \max_{\x \in \X, t \in [T], i \in [N]} \|\nabla \hsc_{t,i}(\x)\| \le G+D$ represents the maximum gradient norm. For simplicity, we denote by $\g_t \define \nabla f_t(\x_t)$, and the empirical gradient variation $\Vb_{T,\ks,\is}$ can be further decomposed as 
  \begin{align*}
    \Vb_{T,\ks,\is} = {} & \sumTT \|(\g_t + \lambda_\is (\x_{t,\ks,\is} - \x_t)) - (\g_{t-1} + \lambda_\is (\x_{t-1,\ks,\is} - \x_{t-1}))\|^2\\
    \le {} & 2 \sumTT \|\g_t - \g_{t-1}\|^2 + 2 \lambda_\is^2 \sumTT \|(\x_{t,\ks,\is} - \x_t) - (\x_{t-1,\ks,\is} - \x_{t-1})\|^2\\
    \le {} & 4V_T + (4 + 4L^2) S_T^\x + 4 S^\x_{T,\ks,\is} \tag*{(by~$\lambda \in [\nicefrac{1}{T}, 1]$)},
  \end{align*}
  where the first step uses the definition of $\nabla \hsc(\x) = \g_t + \lambda_i (\x - \x_t)$. Consequently, 
  \begin{align*}
    & \ln \sbr{1 + \lambda_\is \Vb_{T,\ks,\is}} \le \ln (1 + 4 \lambda_\is V_T + (4 + 4L^2) \lambda_\is S_T^\x + 4 \lambda_\is S^\x_{T,\ks,\is})\\
    = {} & \ln (1 + 4 \lambda_\is V_T) + \ln \sbr{1 + \frac{(4 + 4L^2) \lambda_\is S_T^\x + 4 \lambda_\is S^\x_{T,\ks,\is}}{1 + 4 \lambda_\is V_T}}\\
    \le {} & \ln (1 + 4 \lambda_\is V_T) + (4 + 4L^2) \lambda_\is S_T^\x + 4 \lambda_\is S^\x_{T,\ks,\is} \tag*{($\ln (1+x) \le x$ for $x \ge 0$)}.
  \end{align*}
  Plugging the above result and due to $\lambda_\is \le \lambda \le 2 \lambda_\is$, the base regret can be bounded by 
  \begin{equation*}
    \base \le \frac{32 \Gsc^2}{\lambda} \ln (1 + 4 \lambda V_T) + 16 C_3 S_T^\x + \sbr{64 \Gsc^2 - \frac{\gamma}{8}} S^\x_{T,\ks,\is} + \frac{1}{4} \gamma D^2,
  \end{equation*}
  where $C_3 = (4 + 4L^2) \Gsc^2$. Combining the meta regret and base regret, the overall regret satisfies
  \begin{align*}
    \Reg_T \le {} & \frac{512G^2}{\lambda} \ln \frac{2^{20}NG^4}{\lambda^2} + 2C_0 \ln (4C_0^2 N) + \frac{32 (G+D)^2}{\lambda} \ln (1 + 4 \lambda V_T)\\
    & + \sbr{32D^2 C_3 - \frac{C_0}{2}} S^\q_T + (32 C_3 - \lambda_1) \sumTT \sumK q_{t,k} \|\x_{t,k} - \x_{t-1,k}\|^2\\
    & + \sbr{\lambda_2 + 64 \Gsc^2 - \frac{\gamma}{8}} S^\x_{T,\ks,\is} + \frac{1}{4} \gamma D^2\\
    \le {} & \O\sbr{\frac{1}{\lambda} \ln V_T} \tag*{(requiring $C_0 \ge 64D^2 C_3$, $\lambda_1 \ge 32 C_3$, $\gamma \ge 8 \lambda_2 + 512 \Gsc^2$)}.
  \end{align*}

  \textbf{Convex Functions.}~~
  Using \pref{lem:universal-optimism}, it holds that 
  \begin{align*}
    & \frac{1}{\eta_{\ks}} \ln \frac{N}{\eta_{\ks}^2} + 32 \eta_{\ks} \Vs \le \frac{1}{\eta_{\ks}} \ln \frac{N}{\eta_{\ks}^2} + 128 \eta_{\ks} D^2  V_T + 2(D^2 L^2 + G^2) S_T^\x + 2G^2 S^\x_{T,\ks,\is}\\
    \le {} & 2C_0 \ln (4C_0^2 N) + 32D \sqrt{2V_T \ln (512ND^2 V_T)} + 2(D^2 L^2 + G^2) S_T^\x + 2G^2 S^\x_{T,\ks,\is}
  \end{align*}
  where the first step is due to $\eta_k = 1/(C_0 \cdot 2^k) \le 1/(2C_0)$ and requiring $C_0 \ge 1$ and the last step is by \pref{lem:tune-eta-1} and requiring $C_0 \ge 8D$. Thus the meta regret can be bounded by 
  \begin{align*}
    \meta \le {} & 2C_0 \ln (4C_0^2 N) + \O(\sqrt{V_T \ln V_T}) + 2(D^2 L^2 + G^2) S_T^\x \\
    & + (\lambda_2 + 2G^2) S^\x_{T,\ks,\is} - \frac{C_0}{2} S^\q_T - \lambda_1 \sumTT \sumK q_{t,k} \|\x_{t,k} - \x_{t-1,k}\|^2
  \end{align*}
  For the base regret, according to \pref{lem:convex-base}, the convex base learner guarantees
  \begin{equation}
    \label{eq:base-c}
    \base = \sumT \inner{\nabla f_t(\x_t)}{\x_{t,\ks,\is} - \xs} \le 5D \sqrt{1 + \Vb_{T,\ks,\is}} + \gamma D^2 - \frac{\gamma}{4} S^\x_{T,\ks,\is},
  \end{equation}
  where $\Vb_{T,\ks,\is} = \sumTT \|\nabla f_t(\x_t) - \nabla f_{t-1}(\x_{t-1})\|^2$ denotes the empirical gradient variation of the base learner. Via \eqref{eq:extract-VT}, the base regret can be bounded by
  \begin{equation*}
    \base \le 5D \sqrt{1 + 2V_T} + 10D L^2 S_T^\x + \gamma D^2 - \frac{\gamma}{4} S^\x_{T,\ks,\is}.
  \end{equation*}
  Combining the meta regret and base regret, the overall regret can be bounded by
  \begin{align*}
    \Reg_T \le {} & 2C_0 \ln (4C_0^2 N) + \O(\sqrt{V_T \ln V_T}) + 5D \sqrt{1 + 2V_T}\\
    & + \sbr{2 D^2 C_4 - \frac{C_0}{2}} S^\q_T + (2 C_4 - \lambda_1) \sumTT \sumK q_{t,k} \|\x_{t,k} - \x_{t-1,k}\|^2\\
    & + \sbr{\lambda_2 + 2G^2 - \frac{\gamma}{4}} S^\x_{T,\ks,\is} + \gamma D^2\\
    \le {} & \O(\sqrt{V_T \ln V_T}) \tag*{(requiring $C_0 \ge 4 D^2 C_4$, $\lambda_1 \ge 2 C_4$, $\gamma \ge 4 \lambda_2 + 8G^2$)},
  \end{align*}
  where $C_4 = 2D^2 L^2 + 2G^2 + 10D L^2$.

  \textbf{Overall.}~~ 
  At last, we determine the specific values of $C_0$, $\lambda_1$ and $\lambda_2$. These parameters need to satisfy the following requirements:
  \begin{gather*}
    C_0 \ge 1,\ C_0 \ge 8 \lambda_1 D^2,\ C_0 \ge 128,\ C_0 \ge \frac{8C_2 D^2}{8G^2 + \lambda_2},\ C_0 \ge 64 D^2 C_3,\ C_0 \ge 8D, \ C_0 \ge 4 D^2 C_4,\\
    \lambda_1 \ge \frac{4C_2}{8G^2 + \lambda_2},\ \lambda_1 \ge 32 C_3,\ \lambda_1 \ge 2 C_4,\ \lambda_2 \ge 2\lambda_1.
  \end{gather*}
  As a result, we set $C_0, \lambda_1, \lambda_2$ to be the minimum constants satisfying the above conditions., which completes the proof.
\end{proof}

\subsection{Proof of \pref{cor:FT-efficient}}
\label{app:FT-efficient}
In this part, we provide \pref{cor:FT-efficient}, which obtains the same small-loss bounds as \pref{cor:FT}, but only requires one gradient query within each round.
\begin{myCor}
  \label{cor:FT-efficient}
  Under Assumptions~\ref{assum:boundedness} and \ref{assum:smoothness}, if $f_t(\cdot) \ge 0$ for all $t \in [T]$, the efficient version of \pref{alg:main} obtains $\O(\log F_T)$, $\O(d \log F_T)$ and $\Oh(\sqrt{F_T})$ regret bounds for strongly convex, exp-concave and convex functions, using only one gradient per round.
\end{myCor}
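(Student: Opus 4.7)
The plan is to follow the proof of \pref{thm:main-efficiency} essentially line by line, with the only substantive change being the use of the \emph{self-bounding} property of smooth non-negative losses to convert the gradient-variation quantity into the small-loss quantity $F_T$. Since the algorithm is unchanged, every piece of the meta-regret analysis in \pref{app:main-efficiency} carries over verbatim, including the universal-optimism construction of \pref{lem:universal-optimism}, the two-layer \msmwc guarantee of \pref{lem:two-layer-MsMwC}, and the cascaded-correction design of $(\ellb_t,\m_t)$ and $(\ellb_{t,k},\m_{t,k})$. In particular, after combining the top and middle layers I still recover the endogenous negative stability terms $-\tfrac{C_0}{2}S^{\q}_T$ and $-\tfrac{C_0}{4}S^{\p}_{T,\ks}$, together with the residual positive term $\lambda_2 S^{\x}_{T,\ks,\is}$ that needs to be absorbed at the base layer.

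At the base layer, each base learner continues to operate on the surrogates $\hsc_{t,i}$, $\hexp_{t,i}$ and $h^{\mathrm{c}}_{t,i}$, so the one-gradient-per-round property is automatic. In place of the gradient-variation base bounds used inside the proof of \pref{thm:main-efficiency}, I would invoke the corresponding \emph{small-loss} bounds of the very same optimistic OMD-type base algorithms; these are standard in the literature and, importantly, still contain a $-\Theta(\gamma) S^{\x}_{T,\ks,\is}$ negative stability term that cancels the positive residual from the cascaded correction, exactly as before. The base regret then comes out controlled by $\sum_t \|\nabla f_t(\x_t)\|^2$ (plus stability pieces already absorbed by the multi-layer cancellation scheme) with the usual logarithmic or square-root dependence, for the strongly convex, exp-concave and convex cases respectively.

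To translate into $F_T$, I would invoke the standard self-bounding inequality $\|\nabla f(\x)\|^2 \le 4L f(\x)$, valid for any $L$-smooth non-negative $f$, which yields $\sum_t\|\nabla f_t(\x_t)\|^2 \le 4L\bigl(F_T + \Reg_T\bigr)$. Plugging this back produces implicit inequalities of the form $\Reg_T \lesssim (d/\alpha) \log\bigl(1 + c(F_T + \Reg_T)\bigr)$ for exp-concave, $\Reg_T \lesssim (1/\lambda) \log\bigl(1 + c(F_T + \Reg_T)\bigr)$ for strongly convex, and $\Reg_T \lesssim \sqrt{c(F_T + \Reg_T)\log(F_T + \Reg_T)}$ for convex losses; solving each for $\Reg_T$ yields the claimed $\O(\log F_T)$, $\O(d\log F_T)$ and $\Oh(\sqrt{F_T})$ bounds. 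The main obstacle, and where I would focus the most care, is verifying that after the self-bounding step inserts an extra $\Reg_T$ on the right-hand side, the coefficients in front of the stability terms $S^{\q}_T$, $\sum_t\sumK q_{t,k}\|\x_{t,k}-\x_{t-1,k}\|^2$, $S^{\x}_T$ and $S^{\x}_{T,\ks,\is}$ still leave a strictly positive slack, so that the same constants $C_0,\lambda_1,\lambda_2,\gamma$ tuned in \pref{thm:main-efficiency} remain valid and $\Reg_T$ can be cleanly moved to the left-hand side without inflating the leading constant.
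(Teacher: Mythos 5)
Your proposal is correct and follows essentially the same route as the paper: the paper likewise reuses the meta-regret analysis of \pref{thm:main-efficiency} verbatim, replaces the gradient-variation decomposition of the (surrogate) empirical variation $\Vb_{T,\ks,\is}$ by the self-bounding bound \eqref{eq:to-small-loss} to get $\Reg_T \lesssim \log \FTx$ (resp.\ $\sqrt{\FTx\log \FTx}$) with $\FTx = F_T + \Reg_T$, and then solves the implicit inequality via \pref{lem:small-loss-log} (resp.\ by requiring $1 - 1044\eta_{\ks}D^2L \ge \nicefrac{1}{2}$ and moving $\Reg_T$ to the left). Your final concern about preserving slack in the stability-term coefficients is exactly what the paper's closing list of requirements on $C_0,\lambda_1,\lambda_2,\gamma$ verifies.
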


\begin{proof}
  For simplicity, we define $\FTx \define \sumT f_t(\x_t)$. Some abbreviations are given in \eqref{eq:short}.

  \textbf{Exp-concave Functions.}~~
  The meta regret can be bounded the same as \pref{thm:main} (especially~\eqref{eq:meta-exp}), and thus we directly move on to the base regret. For simplicity, we denote by $\g_t \define \nabla f_t(\x_t)$ and give a different decomposition for $\Vb_{T,\ks,\is} = \sumTT \|\nabla \hexp_{t,\is}(\x_{t,\ks,\is}) - \nabla \hexp_{t-1,\is}(\x_{t-1,\ks,\is})\|^2$, the empirical gradient variation of the base learner.
  \begin{align*}
    \Vb_{T,\ks,\is} = {} & \sumTT \|(\g_t + \alpha_\is \g_t \g_t^\top (\x_{t,\ks,\is} - \x_t)) - (\g_{t-1} + \alpha_\is \g_{t-1} \g_{t-1}^\top (\x_{t-1,\ks,\is} - \x_{t-1}))\|^2\\
    \le {} & 2 \sumTT \|\g_t - \g_{t-1}\|^2 + 2 \alpha_\is^2 \sumTT \|\g_t \g_t^\top (\x_{t,\ks,\is} - \x_t) - \g_{t-1} \g_{t-1}^\top (\x_{t-1,\ks,\is} - \x_{t-1})\|^2\\
    \le {} & 2 \sumTT \|\g_t - \g_{t-1}\|^2 + 4 D^2 \sumTT \|\g_t \g_t^\top - \g_{t-1} \g_{t-1}^\top\|^2 + 8G^4 S_T^\x + 8G^4 S^\x_{T,\ks,\is}\\
    \le {} & (2 + 16D^2G^2) \sumTT \|\g_t - \g_{t-1}\|^2 + 8G^4 S_T^\x + 8G^4 S^\x_{T,\ks,\is}\\
    \le {} & C_5 \FTx + 8G^4 S_T^\x + 8G^4 S^\x_{T,\ks,\is} \tag*{(by~\eqref{eq:to-small-loss})},
  \end{align*}
  where the first step by using the definition of $\nabla \hexp_{t,i}(\x) = \g_t + \alpha_i \g_t \g_t^\top (\x - \x_t)$, the third step uses the assumption that $\alpha \in [\nicefrac{1}{T}, 1]$ without loss of generality, and the last step sets $C_5 = 16L(2 + 16D^2G^2)$. Consequently, it holds that
  \begin{equation*}
    \ln \sbr{1 + \frac{\alpha_\is}{8 \gamma d} \Vb_{T,\ks,\is}} \le \ln \sbr{1 + \frac{\alpha_\is C_5}{8 \gamma d} \FTx} + \frac{\alpha_\is G^4}{\gamma d} S_T^\x + \frac{\alpha_\is G^4}{\gamma d} S^\x_{T,\ks,\is}.
  \end{equation*}
  Plugging the above result and due to $\alpha_\is \le \alpha \le 2 \alpha_\is$, the base regret can be bounded by
  \begin{equation*}
    \base \le \frac{32d}{\alpha} \ln \sbr{1 + \frac{\alpha C_5}{8 \gamma d} \FTx} + \frac{16 G^4}{\gamma} S_T^\x + \sbr{\frac{16 G^4}{\gamma} - \frac{\gamma}{4}} S^\x_{T,\ks,\is} + \frac{1}{2} \gamma D^2.
  \end{equation*}
  Combining the meta regret~\eqref{eq:meta-exp} and base regret, the overall regret can be bounded by
  \begin{align*}
    \Reg_T \le {} & \frac{512}{\alpha} \ln \frac{2^{20}N}{\alpha^2} + 2C_0 \ln (4C_0^2 N) + \frac{32d}{\alpha} \ln \sbr{1 + \frac{\alpha C_5}{8 \gamma d} \FTx}\\
    & + \sbr{\frac{32 D^2 G^4}{\gamma} - \frac{C_0}{2}} S^\q_T + \sbr{\frac{32 G^4}{\gamma} - \lambda_1} \sumTT \sumK q_{t,k} \|\x_{t,k} - \x_{t-1,k}\|^2\\
    & + \sbr{\lambda_2 + \frac{16 G^4}{\gamma} - \frac{\gamma}{4}} S^\x_{T,\ks,\is}\\
    \le {} & \O\sbr{\frac{d}{\alpha} \ln \FTx} \tag*{(requiring $C_0 \ge 64 D^2 G^4/\gamma$, $\lambda_1 \ge 32 G^4/\gamma$, $\gamma \ge 8G^2 + \lambda_2$)}\\
    \le {} & \O\sbr{\frac{d}{\alpha} \ln F_T},
  \end{align*}
  where the last uses the following lemma.
  \begin{myLemma}[Corollary 5 of \citet{AISTATS'12:Orabona}]
    \label{lem:small-loss-log}
    If $a, b, c, d, x>0$ satisfy $x-d \leq a \ln (b x+c)$, 
    \begin{equation*}
        x-d \leq a \ln \sbr{2 a b \ln \frac{2 a b}{e} + 2bd + 2c}.
    \end{equation*}
  \end{myLemma}

  \textbf{Strongly Convex Functions.}~~
  The meta regret can be bounded the same as \pref{thm:main} (especially~\eqref{eq:meta-sc}), and thus we directly move on to the base regret. For simplicity, we denote by $\g_t \define \nabla f_t(\x_t)$ and give a different decomposition for $\Vb_{T,\ks,\is} = \sumTT \|\nabla \hsc_{t,\is}(\x_{t,\ks,\is}) - \nabla \hsc_{t-1,\is}(\x_{t-1,\ks,\is})\|^2$, the empirical gradient variation of the base learner.
  \begin{align*}
    \Vb_{T,\ks,\is} = {} & \sumTT \|(\g_t + \lambda_\is (\x_{t,\ks,\is} - \x_t)) - (\g_{t-1} + \lambda_\is (\x_{t-1,\ks,\is} - \x_{t-1}))\|^2\\
    \le {} & 2 \sumTT \|\g_t - \g_{t-1}\|^2 + 2 \lambda_\is^2 \sumTT \|(\x_{t,\ks,\is} - \x_t) - (\x_{t-1,\ks,\is} - \x_{t-1})\|^2\\
    \le {} & 32L \FTx + 4 S_T^\x + 4 S^\x_{T,\ks,\is} \tag*{(by~\eqref{eq:to-small-loss} and $\lambda \in [\nicefrac{1}{T}, 1]$)},
  \end{align*}
  where the first step uses the definition of $\nabla \hsc_{t,i}(\x) = \g_t + \lambda_i (\x - \x_t)$. Consequently, 
  \begin{align*}
    \frac{16 \Gsc^2}{\lambda_\is} \ln \sbr{1 + \lambda_\is \Vb_{T,\ks,\is}} \le \frac{16 \Gsc^2}{\lambda_\is} \ln (1 + 32 \lambda_\is L \FTx) + 64 \Gsc^2 (S_T^\x + S^\x_{T,\ks,\is}).
  \end{align*}
  Plugging the above result and due to $\lambda_\is \le \lambda \le 2 \lambda_\is$, the base regret can be bounded by
  \begin{equation*}
    \base \le \frac{32 \Gsc^2}{\lambda} \ln (1 + 32 \lambda L \FTx) + 64 \Gsc^2 S_T^\x + \sbr{64 \Gsc^2 - \frac{\gamma}{8}} S^\x_{T,\ks,\is} + \frac{1}{4} \gamma D^2.
  \end{equation*}
  Combining the meta regret~\eqref{eq:meta-sc} and base regret, the overall regret can be bounded by
  \begin{align*}
    \Reg_T \le {} & \frac{512G^2}{\lambda} \ln \frac{2^{20}NG^4}{\lambda^2} + 2C_0 \ln (4C_0^2 N) + \frac{32 \Gsc^2}{\lambda} \ln (1 + 32 \lambda L \FTx)\\
    & + \sbr{128 D^2 \Gsc^2 - \frac{C_0}{2}} S^\q_T + \sbr{128 \Gsc^2 - \lambda_1} \sumTT \sumK q_{t,k} \|\x_{t,k} - \x_{t-1,k}\|^2\\
    & + \sbr{\lambda_2 + 64 \Gsc^2 - \frac{\gamma}{8}} S^\x_{T,\ks,\is}\\
    \le {} & \O\sbr{\frac{1}{\lambda} \ln \FTx} \tag*{(requiring $C_0 \ge 256 D^2 \Gsc^2$, $\lambda_1 \ge 128 \Gsc^2$, $\gamma \ge 8\lambda_2 + 512 \Gsc^2$)}\\
    \le {} & \O\sbr{\frac{1}{\lambda} \ln F_T} \tag*{(by~\pref{lem:small-loss-log})}.
  \end{align*}

  \textbf{Convex Functions.}~~
  We first give a different analysis for $\Vs \define \sumT (\ell_{t,\ks,\is} - m_{t,\ks,\is})^2$. From \pref{lem:universal-optimism}, it holds that 
  \begin{align*}
    \Vs \le {} & 2 D^2 \sumT \|\nabla f_t(\x_t) - \nabla f_{t-1}(\x_{t-1})\|^2 + 2 G^2 \sumT \|\x_{t,\ks,\is} - \x_{t-1,\ks,\is} + \x_{t-1} - \x_t\|^2\\
    \le {} & 32D^2 L \FTx + 4G^2 S^\x_{T,\ks,\is} + 4G^2 S_T^\x \tag*{(by~\eqref{eq:to-small-loss})}.
  \end{align*} 
  Plugging the above analysis back to the meta regret, we obtain
  \begin{align*}
    \meta \le {} & \frac{1}{\eta_{\ks}} \ln \frac{N}{\eta_{\ks}^2} + 1024 \eta_{\ks} D^2 L \FTx + 2G^2 S_T^\x + (\lambda_2 + 2G^2) S^\x_{T,\ks,\is} \\
    & - \frac{C_0}{2} S^\q_T - \lambda_1 \sumTT \sumK q_{t,k} \|\x_{t,k} - \x_{t-1,k}\|^2.
  \end{align*}
  For the base regret, using \pref{lem:convex-base} and \eqref{eq:to-small-loss}, it holds that 
  \begin{equation*}
    \base \le 20D \sqrt{L \FTx} + \gamma D^2 - \frac{\gamma}{4} S^\x_{T,\ks,\is} + \O(1).
  \end{equation*}
  Combining the meta regret and base regret, the overall regret can be bounded by
  \begin{align*}
    \Reg_T \le {} & \frac{1}{\eta_{\ks}} \ln \frac{N}{\eta_{\ks}^2} + 1024 \eta_{\ks} D^2 L \FTx + 20D \sqrt{L \FTx} + \sbr{\lambda_2 + 2 G^2 - \frac{\gamma}{4}} S^\x_{T,\ks,\is}\\
    & + \sbr{4 D^2 G^2 - \frac{C_0}{2}} S^\q_T + \sbr{4 G^2 - \lambda_1} \sumTT \sumK q_{t,k} \|\x_{t,k} - \x_{t-1,k}\|^2\\
    \le {} & \frac{1}{\eta_{\ks}} \ln \frac{N}{\eta_{\ks}^2} + 1024 \eta_{\ks} D^2 L \FTx + 20D \sqrt{L \FTx}\le \frac{1}{\eta_{\ks}} \ln \frac{N e^5}{\eta_{\ks}^2} + 1044 \eta_{\ks} D^2 L \FTx\\
    \le {} & \frac{2}{\eta_{\ks}} \ln \frac{N e^5}{\eta_{\ks}^2} + 2088 \eta_{\ks} D^2 L F_T\\
    \le {} & 4C_0 \ln (4C_0^2 N) + \O\sbr{\sqrt{F_T \ln F_T}} \tag*{(by \pref{lem:tune-eta-1})},
  \end{align*}
  where the second step holds by requiring $C_0 \ge 8 D^2 G^2$, $\lambda_1 \ge 4G^2$ and $\gamma \ge 4 \lambda_2 + 8G^2$, and the third step uses AM-GM inequality: $\sqrt{xy} \le \frac{ax}{2} + \frac{y}{2a}$ for any $x,y,a > 0$ with $a = 1/(2D \eta_\ks)$. The fourth step is by requiring $1 - 1044 \eta_{\ks} D^2 L \ge \nicefrac{1}{2}$, i.e., $\eta_k \le 1/(2088 D^2 L)$ for any $k \in [K]$, which can be satisfied by requiring $C_0 \ge 1044 D^2 L$. 

  \textbf{Overall.}~~
  At last, we determine the specific values of $C_0$, $\lambda_1$ and $\lambda_2$. These parameters need to satisfy the following requirements:
  \begin{gather*}
    C_0 \ge 1,\ C_0 \ge 8 \lambda_1 D^2,\ C_0 \ge 128,\ C_0 \ge \frac{64 D^2 G^4}{8G^2 + \lambda_2},\ C_0 \ge 256 D^2 \Gsc^2,\ C_0 \ge 8 D^2 G^2,\\
    C_0 \ge 1044 D^2 L,\ \lambda_1 \ge \frac{32 G^4}{8G^2 + \lambda_2},\ \lambda_1 \ge 128 \Gsc^2,\ \lambda_1 \ge 4G^2,\ \lambda_2 \ge 2 \lambda_1.
  \end{gather*}
  As a result, we set $C_0, \lambda_1, \lambda_2$ to be the minimum constants satisfying the above conditions. Besides, since the absolute values of the surrogate losses and the optimisms are bounded by problem-independent constants, as shown in the proof of \pref{thm:main}, rescaling them to $[-1, 1]$ only add a constant multiplicative factors on the final result.
\end{proof}


\section{Proofs for~\pref{app:applications}}
\label{app:application-proof}
This section provides the proofs for \pref{app:applications}, including \pref{thm:SEA} and \pref{thm:game}.

\subsection{Proof of \pref{thm:SEA}}
\label{app:SEA-proof}
A key result in the work of \citet{arXiv'23:SEA-Chen} is the following decomposition for the empirical gradient variation $\Vb_T \define \sumTT \|\nabla f_t(\x_t) - \nabla f_{t-1}(\x_{t-1})\|^2$, restated as follows.
\begin{myLemma}[Lemma 4 of \citet{arXiv'23:SEA-Chen}]
  \label{lem:SEA-VT}
  Under the same assumptions as \citet{arXiv'23:SEA-Chen}, 
  \begin{equation*}
    \E\mbr{\sumTT \|\nabla f_t(\x_t) - \nabla f_{t-1}(\x_{t-1})\|^2} \le 4L^2 \sumTT \|\x_t - \x_{t-1}\|^2 + 8 \sigma_{1:T}^2 + 4 \Sigma_{1:T}^2 + \O(1).
  \end{equation*}
\end{myLemma}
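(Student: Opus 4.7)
The plan is to decompose each per-round term $\|\nabla f_t(\x_t) - \nabla f_{t-1}(\x_{t-1})\|^2$ by inserting the expected gradients $\nabla F_t(\x_t)$ and $\nabla F_{t-1}(\x_{t-1})$, which cleanly separates the stochastic noise, the temporal drift between $\D_t$ and $\D_{t-1}$, and the algorithmic displacement. Concretely, I would write
\begin{align*}
\nabla f_t(\x_t) - \nabla f_{t-1}(\x_{t-1})
={} & \bigl[\nabla f_t(\x_t) - \nabla F_t(\x_t)\bigr] + \bigl[\nabla F_t(\x_t) - \nabla F_{t-1}(\x_t)\bigr] \\
& + \bigl[\nabla F_{t-1}(\x_t) - \nabla F_{t-1}(\x_{t-1})\bigr] + \bigl[\nabla F_{t-1}(\x_{t-1}) - \nabla f_{t-1}(\x_{t-1})\bigr],
\end{align*}
and invoke the elementary inequality $\|a+b+c+d\|^2 \le 4(\|a\|^2+\|b\|^2+\|c\|^2+\|d\|^2)$ to split the squared norm into four manageable pieces.

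Next, I would bound each piece against one of the named quantities. The first and fourth pieces are the instantaneous stochastic perturbations at $\x_t$ and $\x_{t-1}$ respectively; since $\x_t$ is measurable with respect to the history before $f_t$ is drawn, the conditional expectation of $\|\nabla f_t(\x_t) - \nabla F_t(\x_t)\|^2$ is bounded by $\max_{\x \in \X}\E_{f_t \sim \D_t}[\|\nabla f_t(\x) - \nabla F_t(\x)\|^2]$, so summing both over $t$ and applying tower yields a contribution of at most $8\sigma_{1:T}^2$. The second piece is evaluated at the common point $\x_t$, hence dominated by $\sup_{\x \in \X}\|\nabla F_t(\x) - \nabla F_{t-1}(\x)\|^2$, whose expectation summed over $t$ equals $\Sigma_{1:T}^2$ by definition, contributing at most $4\Sigma_{1:T}^2$. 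The third piece compares two gradients of the same function $F_{t-1}$; since every $f \sim \D_{t-1}$ is $L$-smooth by \pref{assum:smoothness}, $F_{t-1} = \E[f]$ inherits $L$-smoothness (Jensen applied to the smoothness inequality), so this piece is at most $L^2\|\x_t - \x_{t-1}\|^2$, producing the stability contribution $4L^2\sum_{t=2}^T \|\x_t - \x_{t-1}\|^2$.

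The only delicate bookkeeping concerns the fourth piece, which naturally produces $\sigma_{t-1}^2$ rather than $\sigma_t^2$; a one-step reindexing of the sum absorbs a single boundary round into the advertised $\O(1)$ slack. Apart from that, every step is a one-line appeal to the elementary inequality, smoothness, or the definitions of $\sigma_{1:T}^2$ and $\Sigma_{1:T}^2$, so I do not anticipate any technical obstacle; the main design choice is simply picking the four-way (rather than two- or three-way) split so that the $\sigma$, $\Sigma$, and stability terms fall out with their matching constants $8$, $4$, and $4L^2$.
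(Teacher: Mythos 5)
Your proof is correct and is essentially the standard derivation of this result (the paper itself only restates Lemma~4 of \citet{arXiv'23:SEA-Chen} without proof): the four-way insertion of $\nabla F_t(\x_t)$, $\nabla F_{t-1}(\x_t)$, $\nabla F_{t-1}(\x_{t-1})$, the inequality $\|a+b+c+d\|^2\le 4(\|a\|^2+\|b\|^2+\|c\|^2+\|d\|^2)$, the conditional-expectation/tower argument for the two stochastic pieces, and $L$-smoothness of $F_{t-1}$ for the displacement piece reproduce exactly the constants $8$, $4$, and $4L^2$. The only cosmetic remark is that your argument naturally places the stability term $\sum_{t=2}^T\|\x_t-\x_{t-1}\|^2$ inside an expectation, which is how the lemma is actually used downstream, so nothing is lost.
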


\begin{proof}[Proof of \pref{thm:SEA}]
  The analysis is almost the same as the proof of \pref{thm:main-efficiency}. Some abbreviations of the stability terms are defined in \eqref{eq:short}.

  \textbf{Exp-concave Functions.}~~
  The meta regret remains the same as \eqref{eq:meta-exp} and the only difference is a slightly different decomposition of the empirical gradient variation of the base learner, i.e., $\Vb_{T,\ks,\is}$ in \eqref{eq:base-exp}. Specifically, it holds that 
  \begin{align*}
    \Vb_{T,\ks,\is} = {} & \sumTT \|(\g_t + \alpha_\is \g_t \g_t^\top (\x_{t,\ks,\is} - \x_t)) - (\g_{t-1} + \alpha_\is \g_{t-1} \g_{t-1}^\top (\x_{t-1,\ks,\is} - \x_{t-1}))\|^2\\
    \le {} & (2 + 16 D^2 G^2) \sumTT \|\g_t - \g_{t-1}\|^2 + 8G^4 S_T^\x + 8G^4 S^\x_{T,\ks,\is}\\
    \le {} & 2 C_5 \sigma_{1:T}^2 + C_5 \Sigma_{1:T}^2 + C_6 S_T^\x + 8G^4 S^\x_{T,\ks,\is},
  \end{align*}
  where $C_5 = 8 (1 + 8 D^2 G^2)$, $C_6 = 8L^2 + 64D^2G^2L^2 + 8G^4$ and the last step is by taking expectation on both sides and \pref{lem:SEA-VT}, leading to the following upper bound of the base algorithm:
  \begin{equation*}
    \E[\base] \le \O\sbr{\frac{d}{\alpha} \ln \sbr{\sigma_{1:T}^2 + \Sigma_{1:T}^2}} + \frac{2 C_6}{\gamma} S_T^\x + \sbr{\frac{16 G^4}{\gamma} - \frac{\gamma}{4}} S^\x_{T,\ks,\is} + \frac{1}{2} \gamma D^2.
  \end{equation*}
  Combining the meta regret and base regret, the overall regret can be bounded by
  \begin{align*}
    \E[\Reg_T] \le {} & \frac{512}{\alpha} \ln \frac{2^{20}N}{\alpha^2} + 2C_0 \ln (4C_0^2 N) + \O\sbr{\frac{d}{\alpha} \ln \sbr{\sigma_{1:T}^2 + \Sigma_{1:T}^2}}\\
    & + \sbr{\frac{4 C_6 D^2}{\gamma} - \frac{C_0}{2}} S^\q_T + \sbr{\frac{4 C_6}{\gamma} - \lambda_1} \sumTT \sumK q_{t,k} \|\x_{t,k} - \x_{t-1,k}\|^2\\
    & + \sbr{\lambda_2 + \frac{16 G^4}{\gamma} - \frac{\gamma}{4}} S^\x_{T,\ks,\is} + \frac{1}{2} \gamma D^2 \le \O\sbr{\frac{d}{\alpha} \ln (\sigma_{1:T}^2 + \Sigma_{1:T}^2)},
  \end{align*}
  where the last step holds by requiring $C_0 \ge 8 C_6 D^2 / \gamma$, $\lambda_1 \ge 4 C_6/\gamma$ and $\gamma \ge \lambda_2 + 8G^2$.

  \textbf{Strongly Convex Functions.}~~
  The meta regret remains the same as \eqref{eq:meta-sc} and the only difference is a slightly different decomposition of the empirical gradient variation of the base learner:
  \begin{align*}
    & \E\mbr{\|\nabla \hsc_{t,\is}(\x_{t,\ks,\is}) - \nabla \hsc_{t-1,\is}(\x_{t-1,\ks,\is})\|^2}\\
    = {} & \E\mbr{\|(\g_t + \lambda_\is (\x_{t,\ks,\is} - \x_t)) - (\g_{t-1} + \lambda_\is (\x_{t-1,\ks,\is} - \x_{t-1}))\|^2}\\
    \le {} & 2 \E\mbr{\|\g_t - \g_{t-1}\|^2} + 2 \lambda_\is^2 \|(\x_{t,\ks,\is} - \x_t) - (\x_{t-1,\ks,\is} - \x_{t-1})\|^2\\
    \le {} & 16 \sigma_t^2 + 8 \Sigma_t^2 + (4 + 8L^2) \|\x_t - \x_{t-1}\|^2 + 4 \|\x_{t,\ks,\is} - \x_{t-1,\ks,\is}\|^2,
  \end{align*}
  where $\sigma_t^2 \define \max_{\x \in \X}\E_{f_t \sim \D_t} [\|\nabla f_t(\x) - \nabla F_t(\x)\|^2]$, $\Sigma_t^2 \define \E[\sup_{\x \in \X} \|\nabla F_t(\x) - \nabla F_{t-1}(\x)\|^2]$, and the last step is by taking expectation on both sides and \pref{lem:SEA-VT}. Plugging the above empirical gradient variation decomposition into \pref{lem:str-convex-base}, we aim to control the following term:
  \begin{align*}
    & \E\mbr{\sumTT \frac{1}{\lambda_\is t} \|\nabla \hsc_{t,\is}(\x_{t,\ks,\is}) - \nabla \hsc_{t-1,\is}(\x_{t-1,\ks,\is})\|^2}\\
    \le {} & 8 \sumTT \frac{2 \sigma_t^2 + \Sigma_t^2}{\lambda_\is t} + 4 \sumTT \frac{(1 + 2L^2) \|\x_t - \x_{t-1}\|^2 + \|\x_{t,\ks,\is} - \x_{t-1,\ks,\is}\|^2}{\lambda_\is t}\\
    \le {} & \O \sbr{\frac{1}{\lambda} (\sigma_{\max}^2 + \Sigma_{\max}^2) \ln (\sigma_{1:T}^2 + \Sigma_{1:T}^2)} + \frac{8D^2 (L^2+1)}{\lambda_\is} \ln (1+\lambda_\is (1+2L^2) S_T^\x + \lambda_\is S^\x_{T,\ks,\is}).
  \end{align*}
  The last step is due to \pref{lem:sum}, which is a generalization of Lemma 5 of \citet{arXiv'23:SEA-Chen}. Combining the meta regret and base regret, the overall regret can be bounded by
  \begin{align*}
    \E[\Reg_T] \le {} & \frac{512G^2}{\lambda} \ln \frac{2^{20}NG^4}{\lambda^2} + 2C_0 \ln (4C_0^2 N) + \O \sbr{\frac{1}{\lambda} (\sigma_{\max}^2 + \Sigma_{\max}^2) \ln (\sigma_{1:T}^2 + \Sigma_{1:T}^2)}\\
    & + \sbr{2 C_7 D^2 - \frac{C_0}{2}} S^\q_T + \sbr{2 C_7 - \lambda_1} \sumTT \sumK q_{t,k} \|\x_{t,k} - \x_{t-1,k}\|^2\\
    & + \sbr{\lambda_2 + 32D^2 (L^2+1) - \frac{\gamma}{8}} S^\x_{T,\ks,\is} + \frac{1}{4} \gamma D^2\\
    \le {} & \O \sbr{\frac{1}{\lambda} (\sigma_{\max}^2 + \Sigma_{\max}^2) \ln (\sigma_{1:T}^2 + \Sigma_{1:T}^2)},
  \end{align*}
  where $C_7 = 64D^2 (L^2+1)^2$ and the last step holds by requiring $C_0 \ge 4 C_7 D^2$, $\lambda_1 \ge 2 C_7$ and $\gamma \ge 8 \lambda_2 + 256D^2 (L^2+1)$.

  \textbf{Convex Functions.}~~
  We first give a slightly different decomposition for $\Vs \define \sum_t (\ell_{t,\ks,\is} - m_{t,\ks,\is})^2$. Starting from \pref{lem:SEA-VT}, it holds that
  \begin{align*}
    \Vs \le {} & 2D^2 \sumT \|\nabla f_t(\x_t) - \nabla f_{t-1}(\x_{t-1})\|^2 + 2G^2 \sumT \|\x_{t,\ks,\is} - \x_{t-1,\ks,\is} + \x_{t-1} - \x_t\|^2\\
    \le {} & 8D^2 (2 \sigma_{1:T}^2 + \Sigma_{1:T}^2) + (8D^2L^2 + 4G^2) S_T^\x + 4G^2 S^\x_{T,\ks,\is} + \O(1),
  \end{align*}
  where the last step by taking expectation on both sides. Following the same proof as in \pref{thm:main-efficiency}, taking expectation on both sides, the expected meta regret can be bounded by
  \begin{align*}
    & \E[\meta] \le \frac{1}{\eta_{\ks}} \ln \frac{N}{\eta_{\ks}^2} + 128 D^2 \eta_\ks (2 \sigma_{1:T}^2 + \Sigma_{1:T}^2) + 128 (2D^2L^2 + G^2) S_T^\x\\
    & + (\lambda_2 + 128G^2)S^\x_{T,\ks,\is} - \frac{C_0}{2} S^\q_T - \lambda_1 \sumTT \sumK q_{t,k} \|\x_{t,k} - \x_{t-1,k}\|^2\\
    \le {} & 2C_0 \ln (4C_0^2 N) + \O\sbr{\sqrt{(\sigma_{1:T}^2 + \Sigma_{1:T}^2) \ln (\sigma_{1:T}^2 + \Sigma_{1:T}^2)}} \\
    & + 128 (2D^2L^2 + G^2)S_T^\x + (\lambda_2 + 128G^2)S^\x_{T,\ks,\is} - \frac{C_0}{2} S^\q_T - \lambda_1 \sumTT \sumK q_{t,k} \|\x_{t,k} - \x_{t-1,k}\|^2,
  \end{align*}
  where the last step is due to \pref{lem:tune-eta-1} by requiring $C_0 \ge 8D$. 
  
  For the base regret, the only difference is a slightly different decomposition of the empirical gradient variation of the base learner, i.e., $\Vb_{T,\ks,\is}$ in \eqref{eq:base-c}. Specifically, using \pref{lem:SEA-VT}, it holds that
  \begin{equation*}
    \E[\base] \le 10D \sqrt{(2 \sigma_{1:T}^2 + \Sigma_{1:T}^2)} + 10DL S_T^\x + \gamma D^2 - \frac{\gamma}{4} S^\x_{T,\ks,\is} + \O(1).
  \end{equation*}
  Combining the meta regret and base regret, the overall regret can be bounded by
  \begin{align*}
    \E[\Reg_T] \le {} & \O\sbr{\sqrt{(\sigma_{1:T}^2 + \Sigma_{1:T}^2) \ln (\sigma_{1:T}^2 + \Sigma_{1:T}^2)}} + 2C_0 \ln (4C_0^2 N)\\
    & + \sbr{2 D^2 C_9 - \frac{C_0}{2}} S^\q_T + \sbr{2 C_9 - \lambda_1} \sumTT \sumK q_{t,k} \|\x_{t,k} - \x_{t-1,k}\|^2\\
    & + \sbr{\lambda_2 + 128G^2 - \frac{\gamma}{4}} S^\x_{T,\ks,\is} + \gamma D^2 \\
    \le {} & \O\sbr{\sqrt{(\sigma_{1:T}^2 + \Sigma_{1:T}^2) \ln (\sigma_{1:T}^2 + \Sigma_{1:T}^2)}}
  \end{align*}
  where $C_9 = 128 (2D^2L^2 + G^2) + 10DL$ and the last step holds by requiring $C_0 \ge 4 D^2 C_9$, $\lambda_1 \ge 2 C_9$ and $\gamma \ge 4 \lambda_2 + 512 G^2$.

  \textbf{Overall.}~~
  At last, we determine the specific values of $C_0$, $\lambda_1$ and $\lambda_2$. These parameters need to satisfy the following requirements:
  \begin{gather*}
    C_0 \ge 1,\ C_0 \ge 8 \lambda_1 D^2,\ C_0 \ge 128,\ C_0 \ge \frac{8 C_6 D^2}{8G^2 + \lambda_2},\ C_0 \ge 4 C_7 D^2,\ C_0 \ge \frac{4 D^2 C_8}{16 + \lambda_2},\ C_0 \ge 8D,\\
    C_0 \ge 4 D^2 C_9,\ \lambda_1 \ge \frac{4 C_6}{8G^2 + \lambda_2},\ \lambda_1 \ge 2C_7,\ \lambda_1 \ge \frac{2 C_8}{16 + \lambda_2},\ \lambda_1 \ge 2C_9,\ \lambda_1 \ge 4G^2,\ \lambda_2 \ge 2 \lambda_1.
  \end{gather*}
  As a result, we set $C_0, \lambda_1, \lambda_2$ to be the minimum constants satisfying the above conditions. Besides, since the absolute values of the surrogate losses and the optimisms are bounded by problem-independent constants, as shown in the proof of \pref{thm:main}, rescaling them to $[-1, 1]$ only add a constant multiplicative factors on the final result.
\end{proof}

\subsection{Proof of \pref{thm:game}}
\label{app:game-proof}
\begin{proof}
  The proof of the dishonest case is straightforward by directly applying \pref{thm:main-efficiency}. In the following, we mainly focus on the honest case. Consider a bilinear game of $f(\x,\y) \define \x^\top A \y$ and denote by $\g_t^\x \define A \y_t$, $\g_t^\y \define A \x_t$ the gradients received by the $\x$-player and $\y$-player. The only difference from the proof of \pref{thm:main-efficiency} is that the gradient variation can be now decomposed 
  \begin{equation*}
    \|\g_t^\x - \g_{t-1}^\x\|^2 = \|A \y_t - A \y_{t-1}\|^2 \le \|\y_t - \y_{t-1}\|^2,
  \end{equation*}
  where the last step holds under the mild assumption of $\|A\| \le 1$. The gradient variation of the $\x$-player is associated with the stability term of the $\y$-player. When summing the regret of the players, the negative terms in the $\x$-player's algorithm can be leveraged to cancel the gradient variation of the $\y$-player and vise versa. As a result, all gradient variations can be canceled and the summation of regret is bounded by $\O(1)$. As for strongly convex-concave games, since it is a special case of the bilinear games, the above derivations still hold, which completes the proof.
\end{proof}

\section{Supporting Lemmas}
\label{app:useful-lemma}
In this section, we present several supporting lemmas used in proving our theoretical results. In \pref{app:useful}, we provide useful lemmas for the decomposition of two combined decisions and the parameter tuning. And in \pref{app:negative-base}, we analyze the stability of the base algorithms for different kinds of loss functions.

\subsection{Useful Lemmas}
\label{app:useful}
In this part, we conclude some useful lemmas for bounding the gap between two combined decisions (\pref{lem:decompose}), tuning the parameter (\pref{lem:tune-eta-1} and \pref{lem:tune-eta-2}), and a useful summation (\pref{lem:sum}).

\begin{myLemma}
  \label{lem:decompose}
  Under \pref{assum:boundedness}, if $\x = \sumN p_i \x_i, \y = \sumN q_i \y_i$, where $\p, \q \in \Delta_N, \x_i, \y_i \in \X$ for any $i \in [N]$, then it holds that
  \begin{equation*}
    \|\x - \y\|^2 \le 2 \sumN p_i \|\x_i - \y_i\|^2 + 2D^2 \|\p - \q\|_1^2.
  \end{equation*}
\end{myLemma}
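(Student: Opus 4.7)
The plan is to reduce the bound to two standard inequalities by splitting $\x - \y$ into a mass-transport piece (same weights, different atoms) and a reweighting piece (same atoms, different weights). Concretely, I would add and subtract $\sum_i p_i \y_i$ to write
\begin{equation*}
  \x - \y \;=\; \sum_{i=1}^N p_i (\x_i - \y_i) \;+\; \sum_{i=1}^N (p_i - q_i)\, \y_i,
\end{equation*}
and then apply the elementary inequality $\|a+b\|^2 \le 2\|a\|^2 + 2\|b\|^2$ to separate the two contributions.

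For the first contribution, since $\p \in \Delta_N$ is a probability distribution, Jensen's inequality applied to the convex function $\|\cdot\|^2$ gives
\begin{equation*}
  \Bigl\|\sum_{i=1}^N p_i (\x_i - \y_i)\Bigr\|^2 \le \sum_{i=1}^N p_i \|\x_i - \y_i\|^2,
\end{equation*}
which is exactly the first term on the right-hand side. For the second contribution, the key trick is to exploit the fact that $\sum_i (p_i - q_i) = 0$ and subtract any fixed anchor $\y_0 \in \X$ for free:
\begin{equation*}
  \sum_{i=1}^N (p_i - q_i)\, \y_i \;=\; \sum_{i=1}^N (p_i - q_i)(\y_i - \y_0).
\end{equation*}
Using the triangle inequality and the boundedness Assumption~\ref{assum:boundedness} (so that $\|\y_i - \y_0\| \le D$), this is bounded in norm by $D \sum_i |p_i - q_i| = D \|\p - \q\|_1$, and squaring yields $D^2 \|\p - \q\|_1^2$. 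Combining the two pieces with the factor of $2$ from the initial splitting gives the claim.

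There is no real obstacle here: the only moment requiring a small observation is the cancellation $\sum_i (p_i - q_i) = 0$ that lets us translate by $\y_0$ to turn the raw norms $\|\y_i\|$ into the diameter bound $\|\y_i - \y_0\| \le D$. Without this step, one would only get a bound involving $\sup_i \|\y_i\|$, which can be larger than $D$; with it, the constant $2D^2$ in the lemma drops out cleanly.
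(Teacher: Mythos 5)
Your proof is correct and follows essentially the same route as the paper's: the identical add-and-subtract decomposition, the same $\|a+b\|^2 \le 2\|a\|^2 + 2\|b\|^2$ split, and Jensen/Cauchy--Schwarz for the first term. Your centering step via $\sum_i (p_i - q_i) = 0$ is in fact slightly more careful than the paper's argument, which bounds $\|\y_i\|$ by $D$ directly even though \pref{assum:boundedness} only controls the diameter of $\X$.
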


\begin{myLemma}
  \label{lem:tune-eta-1}
  For a step size pool of $\H_\eta = \{\eta_k\}_{k \in [K]}$, where $\eta_1 = \frac{1}{2C_0} \ge \ldots \ge \eta_K = \frac{1}{2C_0 T}$, if $C_0 \ge \frac{\sqrt{X}}{2T}$, there exists $\eta \in \H_\eta$ such that
  \begin{equation*}
    \frac{1}{\eta} \ln \frac{Y}{\eta^2} + \eta X \le 2C_0 \ln (4Y C_0^2) + 4 \sqrt{X \ln (4XY)}.
  \end{equation*}
\end{myLemma}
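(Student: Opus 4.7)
The plan is to identify a near-optimal step size $\eta^\star = \sqrt{\ln(4XY)/X}$ by informally minimizing $\frac{1}{\eta}\ln\frac{Y}{\eta^2} + \eta X$ (treating the slowly-varying logarithmic factor inside as a constant), and then select a step size from $\H_\eta$ that approximates $\eta^\star$ up to a factor of $2$. Since $\H_\eta$ is geometric with ratio $2$ on the range $[1/(2C_0 T),\,1/(2C_0)]$, every point in that interval has a grid neighbor within a multiplicative factor of $2$; outside the interval we will fall back to a boundary element of the pool. I would split the analysis into cases according to where $\eta^\star$ lies.

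For the case $\eta^\star \ge \eta_1 = 1/(2C_0)$, pick $\eta = \eta_1$. Then $\frac{1}{\eta_1}\ln\frac{Y}{\eta_1^2} = 2C_0\ln(4YC_0^2)$, matching the first term of the target bound exactly, while the case hypothesis rearranges to $\eta_1 X = X/(2C_0) \le \sqrt{X\ln(4XY)}$, which is absorbed by $4\sqrt{X\ln(4XY)}$. For the case $\eta^\star \in [\eta_K,\eta_1)$, the geometric grid supplies some $\eta\in\H_\eta$ with $\eta^\star/2 \le \eta \le \eta^\star$. The linear term satisfies $\eta X \le \eta^\star X = \sqrt{X\ln(4XY)}$, and the logarithmic term obeys
\begin{equation*}
  \frac{1}{\eta}\ln\frac{Y}{\eta^2} \le \frac{2}{\eta^\star}\ln\frac{4Y}{(\eta^\star)^2} = 2\sqrt{X/\ln(4XY)}\,\ln\!\bigl(4XY/\ln(4XY)\bigr) \le 2\sqrt{X\ln(4XY)},
\end{equation*}
where the final inequality uses $\ln(4XY)\ge 1$ to drop the denominator inside the logarithm. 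Summing yields at most $3\sqrt{X\ln(4XY)} \le 4\sqrt{X\ln(4XY)}$, which is again within the target. The remaining case $\eta^\star < \eta_K = 1/(2C_0 T)$ is excluded by the hypothesis $C_0 \ge \sqrt{X}/(2T)$: indeed, this hypothesis is equivalent to $2C_0 T \ge \sqrt{X}$, and combined with $\ln(4XY)\ge 1$ gives $2C_0 T\sqrt{\ln(4XY)} \ge \sqrt{X}$, i.e., $\eta^\star \ge \eta_K$.

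The only delicate step is bounding $\ln(4XY/\ln(4XY))$ by $\ln(4XY)$ in the interior case; this requires $\ln(4XY) \ge 1$, which is the nondegenerate regime in which the stated bound is meaningful (when $4XY < e$ the radical $\sqrt{X\ln(4XY)}$ is not even well-defined as a nontrivial quantity, and the applications in the main text always invoke the lemma with $X,Y$ large enough to satisfy this). Everything else is routine substitution and grid rounding, so the proof is essentially a case analysis of three regimes, two of which collapse to a boundary pick and the third of which uses the geometric covering property of $\H_\eta$.
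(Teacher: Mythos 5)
Your proposal is correct and follows essentially the same route as the paper's proof: define the optimal step size $\eta_\star = \sqrt{\ln(4XY)/X}$, handle the interior case by rounding to a grid element within a factor of $2$ (yielding $3\sqrt{X\ln(4XY)}$), and fall back to $\eta_1$ when $\eta_\star$ exceeds the largest step size. Your explicit remark that the bound $\ln\bigl(4XY/\ln(4XY)\bigr)\le\ln(4XY)$ requires $\ln(4XY)\ge 1$ is an implicit assumption the paper also relies on but does not state.
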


\begin{myLemma}
  \label{lem:tune-eta-2}
  Denoting by $\eta_\star$ the optimal step size, for a step size pool of $\H_\eta = \{\eta_k\}_{k \in [K]}$, where $\eta_1 = \frac{1}{2C_0} \ge \ldots \ge \eta_K = \frac{1}{2C_0 T}$, if $C_0 \ge \frac{1}{2 \eta_\star T}$, there exists $\eta \in \H_\eta$ such that
  \begin{equation*}
    \frac{1}{\eta} \ln \frac{Y}{\eta^2} \le 2C_0 \ln (4Y C_0^2) + \frac{2}{\eta_\star} \ln \frac{4Y}{\eta_\star^2}.
  \end{equation*}
\end{myLemma}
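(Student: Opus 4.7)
The plan is to prove \pref{lem:tune-eta-2} by a case analysis on where the optimal step size $\eta_\star$ sits relative to the discrete pool $\H_\eta = \{\eta_k\}_{k\in[K]}$. Since $\eta_1 = \nicefrac{1}{2C_0}$ and $\eta_K = \nicefrac{1}{2C_0 T}$ with the step sizes halving geometrically in between, the pool covers the interval $[\eta_K, \eta_1]$ up to a factor of $2$. The hypothesis $C_0 \ge \nicefrac{1}{2\eta_\star T}$ is exactly the statement $\eta_\star \ge \eta_K$, so the only two cases to consider are $\eta_\star \in [\eta_K, \eta_1]$ and $\eta_\star > \eta_1$.

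First I would handle the easy case $\eta_\star > \eta_1$. Here I simply choose $\eta = \eta_1 = \nicefrac{1}{2C_0}$, and a direct substitution gives
\[
\frac{1}{\eta_1} \ln \frac{Y}{\eta_1^2} = 2C_0 \ln(4Y C_0^2),
\]
which is exactly the first term on the right-hand side, so the desired bound trivially holds (the second term being nonnegative).

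Next, for the case $\eta_\star \in [\eta_K, \eta_1]$, the geometric spacing (ratio $2$) ensures the existence of an index $k \in [K]$ with $\eta_\star/2 \le \eta_k \le \eta_\star$. I would pick this $\eta = \eta_k$, at which point $\nicefrac{1}{\eta_k} \le \nicefrac{2}{\eta_\star}$ and $\nicefrac{1}{\eta_k^2} \le \nicefrac{4}{\eta_\star^2}$. Plugging in gives
\[
\frac{1}{\eta_k} \ln \frac{Y}{\eta_k^2} \;\le\; \frac{2}{\eta_\star} \ln \frac{4Y}{\eta_\star^2},
\]
which matches the second term on the right-hand side, and again the first term is nonnegative so the full inequality follows.

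There is really no technical obstacle here; the lemma is a standard ``doubling trick'' style argument. The only subtle point is making sure the hypothesis $C_0 \ge \nicefrac{1}{2\eta_\star T}$ is used in the right place, namely to guarantee that $\eta_\star$ is not smaller than the smallest grid point $\eta_K$ (otherwise the grid cannot sandwich $\eta_\star$ from below and the second case would fail). Combining the two cases gives the stated bound with the extra constant factor of $2$ arising from the worst-case slack in grid discretization.
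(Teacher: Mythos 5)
Your proposal is correct and follows essentially the same argument as the paper: the paper proves Lemma~\ref{lem:tune-eta-2} by the identical two-case analysis used for Lemma~\ref{lem:tune-eta-1} (either $\eta_\star$ exceeds $\eta_1$ and one takes $\eta = \eta_1 = \nicefrac{1}{2C_0}$, yielding the first term exactly, or the geometric grid provides $\eta \le \eta_\star \le 2\eta$, yielding the second term), with the hypothesis $C_0 \ge \nicefrac{1}{2\eta_\star T}$ used exactly as you use it, to guarantee $\eta_\star \ge \eta_K$. No further comment is needed.
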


\begin{myLemma}
  \label{lem:sum}
  For a sequence of $\{a_t\}_{t=1}^T$ and $b$, where $a_t, b > 0$ for any $t \in [T]$, denoting by $a_{\max} \define \max_t a_t$ and $A \define \ceil{b \sumT a_t}$, we have 
  \begin{equation*}
    \sumT \frac{a_t}{bt} \le \frac{a_{\max}}{b} (1 + \ln A) + \frac{1}{b^2}.
  \end{equation*}
\end{myLemma}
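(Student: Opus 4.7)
The plan is to bound $\sumT \frac{a_t}{bt}$ by a case split based on whether $t$ lies below or above the cutoff $A = \ceil{b \sumT a_t}$. The intuition is that for small $t$ (specifically $t \le A$) the harmonic weight $1/t$ is nontrivial but the numerator is controlled by $a_{\max}$, producing a logarithmic term; for $t > A$ the weight $1/t$ is already small enough that the global budget $\sumT a_t \le A/b$ (immediate from the definition of $A$) closes the tail without any further logarithmic factor. This matches the form of the claimed bound: a leading $\frac{a_{\max}}{b}(1+\ln A)$ from the head and an additive $\frac{1}{b^2}$ from the tail.

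Concretely, I would set $T' \define \min\{T, A\}$ and decompose
\begin{equation*}
\sumT \frac{a_t}{bt} = \sum_{t=1}^{T'} \frac{a_t}{bt} + \sum_{t=T'+1}^{T} \frac{a_t}{bt}.
\end{equation*}
For the head sum, bounding $a_t \le a_{\max}$ and invoking the standard harmonic inequality $\sum_{t=1}^{T'} 1/t \le 1 + \ln T'$, together with $T' \le A$, produces $\frac{a_{\max}}{b}(1 + \ln A)$. For the tail sum, which is nonempty only when $A < T$ (so $T' = A$), the bound $1/t \le 1/A$ for $t > A$ combined with the budget $\sumT a_t \le A/b$ yields $\sum_{t=A+1}^T \frac{a_t}{bt} \le \frac{1}{bA}\sumT a_t \le \frac{1}{b^2}$. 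Adding the two pieces gives exactly the stated inequality.

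I do not foresee a genuine obstacle: the argument is a one-line splitting trick on top of the harmonic series bound. The only point that needs mild care is the degenerate case $\sumT a_t = 0$, where $A = 0$ renders $\ln A$ undefined; in that regime every $a_t$ vanishes and the lemma holds trivially, so this boundary case can be dispatched in a single sentence at the start of the proof. A secondary sanity check is that the two subcases $T' = T$ (empty tail) and $T' = A$ (tail handled as above) are already unified by the $T'$ notation, so no separate case analysis on the relative size of $A$ and $T$ is needed in the final write-up.
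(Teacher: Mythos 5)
Your proof is correct and follows essentially the same route as the paper's: split the sum at the cutoff $A$, bound the head by $a_{\max}$ times the harmonic sum to get $\frac{a_{\max}}{b}(1+\ln A)$, and bound the tail using $\frac{1}{t}\le\frac{1}{A}$ together with the budget $\sumT a_t \le \nicefrac{A}{b}$ to get $\frac{1}{b^2}$ (the paper writes this as an explicit case analysis on $A<T$ versus $A\ge T$ rather than via your $T'=\min\{T,A\}$ notation, but that is purely cosmetic). Note also that your degenerate case $\sumT a_t = 0$ cannot occur under the stated hypothesis $a_t>0$, so $A\ge 1$ and $\ln A$ is always well-defined.
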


\begin{proof}[Proof of \pref{lem:decompose}]
  The term of $\|\x - \y\|^2$ can be decomposed as follows:
  \begin{align*}
    \|\x - \y\|^2 = {} & \norm{\sumN p_i \x_i - \sumN q_i \y_i}^2 = \norm{\sumN p_i \x_i - \sumN p_i \y_i + \sumN p_i \y_i - \sumN q_i \y_i}^2\\
    \le {} & 2 \norm{\sumN p_i (\x_i - \y_i)}^2 + 2 \norm{\sumN (p_i - q_i) \y_i}^2\\
    \le {} & 2 \sbr{\sumN p_i \|\x_i - \y_i\|}^2 + 2 \sbr{\sumN |p_i - q_i| \|\y_i\|}^2\\
    \le {} & 2 \sumN p_i \|\x_i - \y_i\|^2 + 2D^2 \|\p-\q\|_1^2,
  \end{align*}
  where the first inequality is due to $(a+b)^2 \le 2a^2 + 2b^2$ for any $a,b \in \R$, and the last step is due to Cauchy-Schwarz inequality, \pref{assum:boundedness} and the definition of $\ell_1$-norm, finishing the proof.
\end{proof}

\begin{proof}[Proof of \pref{lem:tune-eta-1}]
  Denoting the optimal step size by $\eta_\star \define \sqrt{\ln (4XY) / X}$, if the optimal step size satisfies $\eta \le \eta_\star \le 2\eta$, where $\eta \le  \eta_\star$ can be guaranteed if $C_0 \ge \frac{\sqrt{X}}{2T}$, then it holds that 
  \begin{equation*}
    \frac{1}{\eta} \ln \frac{Y}{\eta^2} + \eta X \le \frac{2}{\eta_\star} \ln \frac{4Y}{\eta_\star^2} + \eta_\star X \le 3 \sqrt{X \ln (4XY)}.
  \end{equation*}
  Otherwise, if the optimal step size is greater than the maximum step size in the parameter pool, i.e., $\eta_\star \ge (\eta = \eta_1 = \frac{1}{2C_0})$, then we have 
  \begin{equation*}
    \frac{1}{\eta} \ln \frac{Y}{\eta^2} + \eta X \le \frac{1}{\eta} \ln \frac{Y}{\eta^2} + \eta_\star X \le 2C_0 \ln (4Y C_0^2) + \sqrt{X \ln (4XY)}.
  \end{equation*}
  Overall, it holds that 
  \begin{equation*}
    \frac{1}{\eta} \ln \frac{Y}{\eta^2} + \eta X \le 2C_0 \ln (4Y C_0^2) + 4 \sqrt{X \ln (4XY)},
  \end{equation*}
  which completes the proof.
\end{proof}

\begin{proof}[Proof of \pref{lem:tune-eta-2}]
  The proof follows the same flow as \pref{lem:tune-eta-1}.
\end{proof}

\begin{proof}[Proof of \pref{lem:sum}]
  This result is inspired by Lemma 5 of \citet{arXiv'23:SEA-Chen}, and we generalize it to arbitrary variables for our purpose. Specifically, we consider two cases: $A < T$ and $A \ge T$. For the first case, if $A<T$, it holds that 
  \begin{align*}
    \sumT \frac{a_t}{bt} = \sum_{t=1}^A \frac{a_t}{bt} + \sum_{A+1}^T \frac{a_t}{bt} \le \frac{a_{\max}}{b} \sum_{t=1}^A \frac{1}{t} + \frac{1}{b(A+1)} \sum_{A+1}^T a_t \le \frac{a_{\max}}{b} (1 + \ln A) + \frac{1}{b^2},
  \end{align*}
  where the last step is due to $\sum_{A+1}^T a_t \le \sumT a_t \le \nicefrac{A}{b}$. The case of $A < T$ can be proved similarly, which finishes the proof.
\end{proof}

\subsection{Stability Analysis of Base Algorithms}
\label{app:negative-base}
In this part, we analyze the negative stability terms in optimism OMD analysis, for convex, exp-concave and strongly convex functions. For simplicity, we define the \emph{empirical gradient variation}: 
\begin{equation}
  \label{eq:empirical-VT}
  \bar{V}_T \define \sumTT \|\g_t- \g_{t-1}\|^2, \text{ where }\g_t \define \nabla f_t(\x_t).
\end{equation}
In the following, we provide two kinds of decompositions that are used in the analysis of gradient-variation and small-loss guarantees, respectively. First, using smoothness (\pref{assum:smoothness}), we have
\begin{equation}
  \label{eq:extract-VT}
  \Vb_T = \sumTT \|\g_t - \nabla f_{t-1}(\x_t) + \nabla f_{t-1}(\x_t) - \g_{t-1}\|^2 \le 2 V_T + 2L^2 \sumTT \|\x_t - \x_{t-1}\|^2,
\end{equation}
Second, for an $L$-smooth and non-negative function $f:\X \mapsto \R_+$, $\|\nabla f(\x)\|_2^2 \le 4 L f(\x)$ holds for any $\x \in \X$~{\citep[Lemma 3.1]{NIPS'10:small-loss}}. It holds that
\begin{equation}
  \label{eq:to-small-loss}
  \Vb_T = \sumTT \|\g_t - \g_{t-1}\|^2 \le 2 \sumTT \|\g_t\|^2 + 2 \sumTT \|\g_{t-1}\|^2 \le 4 \sumT \|\g_t\|^2 \le 16 L \sumT f_t(\x_t).
\end{equation}

Next we provide the regret analysis in terms of the empirical gradient-variation $\Vb_T$, for convex (\pref{lem:convex-base}), exp-concave (\pref{lem:exp-concave-base}), and strongly convex (\pref{lem:str-convex-base}) functions.

\begin{myLemma}
  \label{lem:convex-base}
  Under Assumptions~\ref{assum:boundedness} and \ref{assum:smoothness}, if the loss functions are convex, optimistic OGD with the following update rule:
  \begin{equation}
    \label{eq:OOGD}
    \x_t = \Pi_\X [\xh_t - \eta_t \mb_t],\quad 
    \xh_{t+1} = \Pi_\X [\xh_t - \eta_t \nabla f_t(\x_t)],
  \end{equation}
  where $\Pi_\X[\x] \define \argmin_{\y \in \X} \|\x - \y\|$,
  $\mb_t = \nabla f_{t-1}(\x_{t-1})$ and $\eta_t = \min\{D / \sqrt{1 + \bar{V}_{t-1}}, \nicefrac{1}{\gamma}\}$, enjoys the following empirical gradient-variation bound:
  \begin{equation*}
    \Reg_T \le 5D \sqrt{1 + \Vb_T} + \gamma D^2 - \frac{\gamma}{4} \sumTT \|\x_t - \x_{t-1}\|^2 + \O(1).
  \end{equation*}
\end{myLemma}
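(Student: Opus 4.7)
The plan is to invoke the standard optimistic OMD analysis with Euclidean regularizer $\psi(\x) = \frac{1}{2}\|\x\|^2$, and then extract the desired negative stability term from the Bregman divergence negativity that the two projection steps naturally produce. First I would derive the per-round bound: applying the first-order optimality conditions to both projections in \eqref{eq:OOGD}, combined with convexity of $f_t$, yields
\begin{align*}
\langle \g_t, \x_t - \xs\rangle \le {} & \frac{1}{2\eta_t}\bigl(\|\xs - \xh_t\|^2 - \|\xs - \xh_{t+1}\|^2\bigr) + \langle \g_t - \mb_t, \x_t - \xh_{t+1}\rangle \\
& - \frac{1}{2\eta_t}\|\xh_{t+1} - \x_t\|^2 - \frac{1}{2\eta_t}\|\x_t - \xh_t\|^2,
\end{align*}
where $\xs \in \argmin_{\x \in \X}\sumT f_t(\x)$. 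Bounding the cross term via AM-GM as $\langle \g_t - \mb_t, \x_t - \xh_{t+1}\rangle \le \eta_t\|\g_t - \mb_t\|^2 + \frac{1}{4\eta_t}\|\xh_{t+1} - \x_t\|^2$ eats half of the first negative term.

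Next I would sum over $t$. Monotonicity of $\eta_t$ lets the Bregman divergence terms telescope to at most $D^2/(2\eta_T)$, and the remaining pieces combine into
\[
\Reg_T \le \frac{D^2}{2\eta_T} + \sumT \eta_t\|\g_t - \mb_t\|^2 - \sumT \frac{1}{4\eta_t}\|\xh_{t+1} - \x_t\|^2 - \sumT \frac{1}{2\eta_t}\|\x_t - \xh_t\|^2.
\]
For the self-confident choice $\eta_t = \min\{D/\sqrt{1+\Vb_{t-1}}, 1/\gamma\}$, the classic inequality $\sum_t a_t/\sqrt{1+A_{t-1}} \le 2\sqrt{1+A_T}$ (with $a_t = \|\g_t - \mb_t\|^2$ and $A_t = \Vb_t$) together with $1/\eta_T \le \sqrt{1+\Vb_T}/D + \gamma$ bounds the first two terms by $\tfrac{5}{2}D\sqrt{1+\Vb_T} + \tfrac{\gamma D^2}{2}$.

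Finally, to produce the claimed stability term, I would move $\frac{\gamma}{4}\sumTT\|\x_t - \x_{t-1}\|^2$ to the left-hand side and bound it via $\|\x_t - \x_{t-1}\|^2 \le 2\|\x_t - \xh_t\|^2 + 2\|\xh_t - \x_{t-1}\|^2$. The $\frac{\gamma}{2}\|\x_t - \xh_t\|^2$ piece is absorbed by the negative $-\frac{1}{2\eta_t}\|\x_t - \xh_t\|^2 \le -\frac{\gamma}{2}\|\x_t - \xh_t\|^2$, using $\eta_t \le 1/\gamma$. For the $\frac{\gamma}{2}\|\xh_t - \x_{t-1}\|^2$ piece, nonexpansiveness of $\Pi_\X$ applied to the two updates launched from the common center $\xh_{t-1}$ yields $\|\xh_t - \x_{t-1}\|^2 \le \eta_{t-1}^2\|\g_{t-1} - \mb_{t-1}\|^2$, and since $\frac{\gamma}{2}\eta_{t-1}^2 \le \eta_{t-1}/2$ its sum is at most $D\sqrt{1+\Vb_T}$ by the same self-confident inequality. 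The main obstacle is the careful bookkeeping of constants across these two absorptions so that the final positive side still fits inside $5D\sqrt{1+\Vb_T} + \gamma D^2$. Apart from this, the argument is essentially the gradient-variation analysis of Theorem 11 of \citet{COLT'12:VT}, the sole novelty being the explicit retention of the negative stability term that is usually dropped.
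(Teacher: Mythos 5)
Your proposal is correct in substance and follows the same backbone as the paper's proof: the standard optimistic OMD regret identity with the Euclidean regularizer, the self-confident step-size argument for the adaptivity and telescoping terms, and explicit retention of the negative Bregman terms to produce $-\frac{\gamma}{4}\sumTT\|\x_t-\x_{t-1}\|^2$. The one genuinely different ingredient is how you pay for the $\|\xh_t-\x_{t-1}\|^2$ half of the triangle inequality: the paper shifts the index of the retained $-\frac{1}{2\eta_t}\|\xh_{t+1}-\x_t\|^2$ term and uses $1/\eta_t\ge\gamma$ directly, whereas you discard that term (having spent part of it in the AM-GM step) and instead invoke nonexpansiveness of $\Pi_\X$ to convert $\|\xh_t-\x_{t-1}\|^2$ into $\eta_{t-1}^2\|\g_{t-1}-\mb_{t-1}\|^2$. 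Both are valid, but your detour costs an extra $\frac{1}{2}\sumT\eta_t\|\g_t-\mb_t\|^2$, i.e., another $\O(D\sqrt{1+\Vb_T})$ on the positive side. Combined with the fact that the self-confident inequality with $\Vb_{t-1}$ (rather than $\Vb_t$) in the denominator requires the form $\sum_t a_t/\sqrt{1+A_{t-1}}\le 4\sqrt{1+A_T}+\max_t a_t$ (the version the paper cites from Pogodin and Lattimore), not the constant $2$ you quote, your tally comes to roughly $6.5D\sqrt{1+\Vb_T}$ rather than the stated $5D\sqrt{1+\Vb_T}$. This is immaterial for every downstream use of the lemma---only the order $\O(\sqrt{1+\Vb_T})$ and the presence of the negative stability term matter---and it is avoidable within your own argument by spending the retained $-\frac{1}{4\eta_t}\|\xh_{t+1}-\x_t\|^2$ on the $\|\xh_t-\x_{t-1}\|^2$ piece instead of appealing to nonexpansiveness; but as written your bookkeeping does not quite reach the constant in the lemma statement.
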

\begin{myLemma}
  \label{lem:exp-concave-base}
  Under Assumptions~\ref{assum:boundedness} and \ref{assum:smoothness}, if the loss functions are $\alpha$-exp-concave, optimistic OMD with the following update rule:
  \begin{equation*}
    \x_t = \argmin_{\x \in \X} \bbr{\inner{\mb_t}{\x} + \D_{\psi_t}(\x, \xh_t)},\quad 
    \xh_{t+1} = \argmin_{\x \in \X} \bbr{\inner{\nabla f_t(\x_t)}{\x} + \D_{\psi_t}(\x, \xh_t)},
  \end{equation*}
  where $\psi_t(\cdot) = \frac{1}{2}\|\cdot\|^2_{U_t}$,\footnote{$\|\x\|_{U_t} \define \sqrt{\x^\top U_t \x}$ refers to the matrix norm.} $U_t = \gamma I + \frac{\alpha G^2}{2} I + \frac{\alpha}{2} \sum_{s=1}^{t-1} \nabla f_s(\x_s) \nabla f_s(\x_s)^\top$ and $\mb_t = \nabla f_{t-1}(\x_{t-1})$, enjoys the following empirical gradient-variation bound:
  \begin{equation*}
    \Reg_T \le \frac{16d}{\alpha} \ln \sbr{1 + \frac{\alpha}{8 \gamma d} \Vb_T} + \frac{1}{2} \gamma D^2 - \frac{\gamma}{4} \sumTT \|\x_t - \x_{t-1}\|^2 + \O(1).
  \end{equation*}
\end{myLemma}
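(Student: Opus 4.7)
The plan is to reduce the regret to a linear problem via $\alpha$-exp-concavity, analyze the linear part by an optimistic-OMD identity in the $\{U_t\}$-geometry (keeping the two negative Bregman terms that are usually discarded), cancel the exp-concavity quadratic against the regularizer-change telescope, and reduce everything to $\sumT\|\g_t-\mb_t\|^2_{U_t^{-1}}$, which is handled by the matrix log-determinant lemma. Writing $\g_t \define \nabla f_t(\x_t)$ with $\mb_t=\g_{t-1}$, the optimistic-OMD identity gives
\[
\sumT \inner{\g_t}{\x_t-\x^\star} \le \D_{\psi_1}(\x^\star,\xh_1) + \sumTT \sbr{\D_{\psi_t}(\x^\star,\xh_t) - \D_{\psi_{t-1}}(\x^\star,\xh_t)} + \sumT \inner{\g_t-\mb_t}{\x_t-\xh_{t+1}} - \sumT \mbr{\D_{\psi_t}(\xh_{t+1},\x_t) + \D_{\psi_t}(\x_t,\xh_t)},
\]
while exp-concavity supplies the quadratic negative $-\tfrac{\alpha}{2}\sumT\inner{\g_t}{\x_t-\x^\star}^2$ on the regret side.

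Since $U_t-U_{t-1}=\tfrac{\alpha}{2}\g_{t-1}\g_{t-1}^\top$, the regularizer-change telescope equals $\tfrac{\alpha}{4}\sumTT \inner{\g_{t-1}}{\x^\star-\xh_t}^2$. Splitting $\x^\star-\xh_t = (\x^\star-\x_{t-1}) + (\x_{t-1}-\xh_t)$ inside the square and using $(a+b)^2 \le 2a^2+2b^2$ together with $|\inner{\g_{t-1}}{\x_{t-1}-\xh_t}|\le G\|\x_{t-1}-\xh_t\|$, this telescope is at most $\tfrac{\alpha}{2}\sumT\inner{\g_t}{\x_t-\x^\star}^2 + \tfrac{\alpha G^2}{2}\sumT\|\x_{t-1}-\xh_t\|^2$. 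The first summand is exactly cancelled by the exp-concavity quadratic, and the second is absorbed into the $\tfrac{\alpha G^2}{2}I$ part of $-\D_{\psi_{t-1}}(\xh_t,\x_{t-1})=-\tfrac{1}{2}\|\xh_t-\x_{t-1}\|^2_{U_{t-1}}$, which is exactly why the extra $\tfrac{\alpha G^2}{2}I$ has been engineered into $U_t$. Fenchel-Young then bounds the surprise $\inner{\g_t-\mb_t}{\x_t-\xh_{t+1}} \le \tfrac{1}{2}\|\g_t-\mb_t\|^2_{U_t^{-1}} + \tfrac{1}{2}\|\x_t-\xh_{t+1}\|^2_{U_t}$, and the remainder of $-\D_{\psi_t}(\xh_{t+1},\x_t)$ absorbs the second piece, reducing everything to $\D_{\psi_1}(\x^\star,\xh_1) + \tfrac{1}{2}\sumT\|\g_t-\mb_t\|^2_{U_t^{-1}} - \sumT \D_{\psi_t}(\x_t,\xh_t)$.

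The main obstacle is bounding $\sumT\|\g_t-\mb_t\|^2_{U_t^{-1}}$ by $\tfrac{d}{\alpha}\ln(1+\tfrac{\alpha}{d\gamma}\Vb_T)$. Using $(\g_t-\g_{t-1})(\g_t-\g_{t-1})^\top \preceq 2\g_t\g_t^\top + 2\g_{t-1}\g_{t-1}^\top$, I pass to $\|\g_t\|^2_{U_t^{-1}}$ and apply the standard matrix potential lemma (e.g., Lemma~11 of \citet{MLJ'07:Hazan-logT}) to the sequence $U_{t+1}=U_t+\tfrac{\alpha}{2}\g_t\g_t^\top$, which yields $\sumT\|\g_t\|^2_{U_t^{-1}}\lesssim \tfrac{1}{\alpha}\ln\det(U_{T+1}U_1^{-1})$ up to a constant coming from the $U_t$ vs.\ $U_{t+1}$ conversion via Sherman--Morrison. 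AM-GM on eigenvalues turns the determinant into $d\ln\bigl(1+\tfrac{\alpha}{2d(\gamma+\alpha G^2/2)}\sumT\|\g_t\|^2\bigr)$, and a telescoped $\|\g_t\|^2 \le 2\|\g_t-\g_{t-1}\|^2 + 2\|\g_{t-1}\|^2$ replaces $\sumT\|\g_t\|^2$ by $\O(\Vb_T)+\|\g_1\|^2$, recovering $\tfrac{16d}{\alpha}\ln(1+\tfrac{\alpha}{8\gamma d}\Vb_T)$ after bookkeeping.

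Finally, to produce the advertised $-\tfrac{\gamma}{4}\sumTT\|\x_t-\x_{t-1}\|^2$, I keep the remaining $-\D_{\psi_t}(\x_t,\xh_t) \le -\tfrac{\gamma}{2}\|\x_t-\xh_t\|^2$ and use $\|\x_t-\x_{t-1}\|^2\le 2\|\x_t-\xh_t\|^2 + 2\|\xh_t-\x_{t-1}\|^2$; the proximal-stability estimate $\|\xh_t-\x_{t-1}\|\le\tfrac{1}{\gamma}\|\g_{t-1}-\mb_{t-1}\|$ (from first-order optimality of both prox steps at round $t-1$ under the same $\psi_{t-1}$-geometry, using $U_{t-1}\succeq\gamma I$) contributes an extra $\tfrac{1}{\gamma^2}\Vb_T$ that is subsumed by the log term. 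The boundary $\D_{\psi_1}(\x^\star,\xh_1)\le \tfrac{1}{2}(\gamma+\tfrac{\alpha G^2}{2})D^2 = \tfrac{1}{2}\gamma D^2+\O(1)$ matches the stated constant. The chief delicacy is the tight coordination of constants through the Fenchel-Young split, the $U_t$ vs.\ $U_{t+1}$ conversion, and the eigenvalue AM-GM step so that every cancellation closes with the advertised $\tfrac{\gamma}{4}$ and $\tfrac{16d}{\alpha}$ prefactors.
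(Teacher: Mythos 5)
Your overall architecture matches the paper's (linearize via exp-concavity, run the optimistic-OMD decomposition in the $\{U_t\}$-geometry, cancel the regularizer-growth telescope against the exp-concavity quadratic, and reduce to $\sum_t \norm{\g_t - \mb_t}^2_{U_t^{-1}}$), but two of your steps fail in a way that destroys the gradient-variation dependence, which is the entire content of the lemma.

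First and most seriously, your treatment of $\sum_t \norm{\g_t - \g_{t-1}}^2_{U_t^{-1}}$ is wrong. Passing to $\norm{\g_t}^2_{U_t^{-1}}$ via $(\g_t-\g_{t-1})(\g_t-\g_{t-1})^\top \preceq 2\g_t\g_t^\top + 2\g_{t-1}\g_{t-1}^\top$ and applying the log-determinant potential to $U_{t+1} = U_t + \tfrac{\alpha}{2}\g_t\g_t^\top$ yields $\O\sbr{\tfrac{d}{\alpha}\ln\sbr{1 + \tfrac{\alpha}{d\gamma}\sumT\norm{\g_t}^2}}$, and $\sumT\norm{\g_t}^2$ is \emph{not} $\O(\Vb_T)$: your proposed telescoping $\norm{\g_t}^2 \le 2\norm{\g_t-\g_{t-1}}^2 + 2\norm{\g_{t-1}}^2$ either unrolls with exponentially growing coefficients or, applied once to the sum, gives the vacuous $\sumT\norm{\g_t}^2 \le 2\Vb_T + 2\sumT\norm{\g_t}^2$. (Take $\g_1 = \cdots = \g_T \neq 0$: then $\Vb_T = 0$ but $\sumT\norm{\g_t}^2 = T\norm{\g_1}^2$, so your route proves at best $\O(\tfrac{d}{\alpha}\ln T)$, not $\O(\tfrac{d}{\alpha}\ln \Vb_T)$.) The paper avoids this entirely by invoking Lemma~19 of \citet{COLT'12:VT}, which bounds $\sumT\norm{\g_t-\g_{t-1}}^2_{U_t^{-1}}$ directly by $\tfrac{8d}{\alpha}\ln(1+\tfrac{\alpha}{8\gamma d}\Vb_T)$ without ever decoupling the difference into individual gradients; that lemma (or an equivalent argument keeping the differences intact) is the indispensable ingredient you are missing.

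Second, your route to the negative stability term introduces a non-negligible positive remainder. Because you spend \emph{all} of $-\D_{\psi_t}(\xh_{t+1},\x_t)$ absorbing the Fenchel--Young half-square $\tfrac{1}{2}\norm{\x_t-\xh_{t+1}}^2_{U_t}$ (these two quantities are equal, so nothing remains — note this also contradicts your simultaneous use of the ``$\tfrac{\alpha G^2}{2}I$ part'' of the same term to absorb the telescope remainder, and even in isolation that part supplies only $\tfrac{\alpha G^2}{4}\norm{\cdot}^2$, half of what the telescope produces), you are left with only $-\tfrac{\gamma}{2}\norm{\x_t-\xh_t}^2$ and must pay $+\tfrac{\gamma}{2}\norm{\xh_t-\x_{t-1}}^2 \le \tfrac{1}{2\gamma}\norm{\g_{t-1}-\mb_{t-1}}^2$ per round to convert it into $-\tfrac{\gamma}{4}\norm{\x_t-\x_{t-1}}^2$. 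Summed, this is $\Theta(\Vb_T/\gamma)$, a \emph{linear} term in $\Vb_T$ that cannot be ``subsumed by the log term.'' The paper's accounting sidesteps both problems: it splits each of $\D_{\psi_t}(\xh_{t+1},\x_t)$ and $\D_{\psi_t}(\x_t,\xh_t)$ in half to extract $-\tfrac{\gamma}{4}\sumTT\norm{\x_t-\x_{t-1}}^2$ directly, and it routes the telescope remainder $\tfrac{\alpha}{2}\sumT\norm{\x_t-\xh_{t+1}}^2_{\g_t\g_t^\top} \le \sumT\norm{\x_t-\xh_{t+1}}^2_{U_t}$ through the optimistic-OMD stability bound $\sumT\norm{\x_t-\xh_{t+1}}^2_{U_t} \le \sumT\norm{\g_t-\mb_t}^2_{U_t^{-1}}$, which is why the final constant is $\tfrac{16d}{\alpha}$ (two copies of the $\tfrac{8d}{\alpha}$ adaptivity bound) rather than $\tfrac{8d}{\alpha}$.
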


\begin{myLemma}
  \label{lem:str-convex-base}
  Under Assumptions~\ref{assum:boundedness} and \ref{assum:smoothness}, if the loss functions are $\lambda$-strongly convex, optimistic OGD~\eqref{eq:OOGD} with $\eta_t = 2/(\gamma + \lambda t)$ and $\mb_t = \nabla f_{t-1}(\x_{t-1})$, enjoys the following empirical gradient-variation bound:
  \begin{equation*}
    \Reg_T \le \frac{16G^2}{\lambda} \ln \sbr{1 + \lambda \Vb_T} + \frac{1}{4} \gamma D^2 - \frac{\gamma}{8} \sumTT \|\x_t - \x_{t-1}\|^2 + \O(1).
  \end{equation*}
\end{myLemma}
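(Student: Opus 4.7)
The proof will mirror the strategy of \pref{lem:convex-base} and \pref{lem:exp-concave-base}: start from the one-step optimistic OMD inequality, invoke $\lambda$-strong convexity to extract a negative curvature term, telescope with the prescribed step-size schedule, bound the optimism error by an adaptive logarithmic argument, and retain the usually-discarded negative Bregman divergence terms so as to deliver $-\tfrac{\gamma}{8}\sumTT\|\x_t-\x_{t-1}\|^2$. Concretely, I would begin by unrolling the two Euclidean projections in \eqref{eq:OOGD} to obtain, for any $\x^\star\in\X$,
\[
\inner{\nabla f_t(\x_t)}{\x_t-\x^\star}\le \tfrac{1}{2\eta_t}\bigl(\|\x^\star-\xh_t\|^2-\|\x^\star-\xh_{t+1}\|^2\bigr) + \eta_t\|\nabla f_t(\x_t)-\mb_t\|^2 - \tfrac{1}{2\eta_t}\|\x_t-\xh_t\|^2 - \tfrac{1}{4\eta_t}\|\xh_{t+1}-\x_t\|^2,
\]
where the last two negative Bregman fragments are precisely the ones that will later produce the stability term. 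Coupling with $\lambda$-strong convexity yields $f_t(\x_t)-f_t(\x^\star)\le \inner{\nabla f_t(\x_t)}{\x_t-\x^\star}-\tfrac{\lambda}{2}\|\x_t-\x^\star\|^2$.

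Next, I would telescope with $\eta_t=2/(\gamma+\lambda t)$, for which $\tfrac{1}{2\eta_t}-\tfrac{1}{2\eta_{t-1}}=\tfrac{\lambda}{4}$. Abel summation gives $\sumT\tfrac{1}{2\eta_t}\bigl(\|\x^\star-\xh_t\|^2-\|\x^\star-\xh_{t+1}\|^2\bigr)\le \tfrac{\gamma+\lambda}{4}D^2 + \tfrac{\lambda}{4}\sumTT\|\x^\star-\xh_t\|^2$; applying $\|\x^\star-\xh_t\|^2\le 2\|\x^\star-\x_t\|^2+2\|\x_t-\xh_t\|^2$, the drift splits so that its first piece exactly cancels $\sumT\tfrac{\lambda}{2}\|\x^\star-\x_t\|^2$ from strong convexity, and the residual $\tfrac{\lambda}{2}\|\x_t-\xh_t\|^2$ is absorbed into the retained $-\tfrac{1}{2\eta_t}\|\x_t-\xh_t\|^2$ provided $\gamma$ exceeds a constant multiple of $\lambda$. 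Then I would bound the optimism error $\sumT \eta_t\|\nabla f_t(\x_t)-\mb_t\|^2$ by $\tfrac{16G^2}{\lambda}\ln(1+\lambda\Vb_T)+\O(1)$ via an adaptive self-bounding argument, using the uniform bound $\|\nabla f_t(\x_t)-\mb_t\|\le 2G$ together with an AdaGrad-type estimate of the shape $\sum_t a_t/(c+\sum_{s\le t}a_s)\le \ln\bigl(1+\sum_t a_t/c\bigr)$.

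Finally, I would deliver the negative stability term. Decompose $\|\x_t-\x_{t-1}\|^2\le 2\|\x_t-\xh_t\|^2+2\|\xh_t-\x_{t-1}\|^2$. For the second piece, the non-expansiveness of the Euclidean projection gives $\|\xh_t-\x_{t-1}\|\le \eta_{t-1}\|\nabla f_{t-1}(\x_{t-1})-\mb_{t-1}\|$, since both points are projections of the common anchor $\xh_{t-1}$ shifted by $-\eta_{t-1}\nabla f_{t-1}(\x_{t-1})$ versus $-\eta_{t-1}\mb_{t-1}$; hence $\tfrac{\gamma}{4}\|\xh_t-\x_{t-1}\|^2\le \tfrac{\gamma \eta_{t-1}^2}{4}\|\nabla f_{t-1}(\x_{t-1})-\mb_{t-1}\|^2\le \tfrac{\eta_{t-1}}{2}\|\nabla f_{t-1}(\x_{t-1})-\mb_{t-1}\|^2$ (using $\gamma \eta_{t-1}\le 2$), which is reabsorbed into the optimism-error sum bounded above, incurring only a constant-factor inflation on the leading $\tfrac{16G^2}{\lambda}\ln(1+\lambda\Vb_T)$ term. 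For the first piece, the residual budget in $-\tfrac{1}{2\eta_t}\|\x_t-\xh_t\|^2$ that remains after the strong-convexity absorption exceeds $\tfrac{\gamma}{4}\|\x_t-\xh_t\|^2$ whenever $\gamma$ is chosen a sufficiently large constant multiple of $\lambda$, which suffices to cover it.

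The main obstacle is the optimism-error bound: the step size $\eta_t=2/(\gamma+\lambda t)$ is $\Theta(1/t)$, so a direct summation only yields a $\ln T$-style bound, and converting it to $\ln(1+\lambda\Vb_T)$ demands the adaptive self-bounding rearrangement, exploiting that rounds with small $\|\nabla f_t(\x_t)-\mb_t\|^2$ consume a correspondingly small share of the step-size budget. A secondary subtlety is bookkeeping: the single negative term $-\tfrac{1}{2\eta_t}\|\x_t-\xh_t\|^2$ has to simultaneously service the strong-convexity absorption and the stability contribution, which forces $\gamma$ to be a sufficiently large absolute multiple of $\lambda$ (independently of $T$).
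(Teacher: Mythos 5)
Your proposal is correct and follows the same overall route as the paper: the optimistic-OMD one-step inequality with the negative Bregman fragments retained, cancellation of the $\frac{\lambda}{4}$-drift against the strong-convexity negativity via $\|\xs-\xh\|^2\le 2\|\xs-\x_t\|^2+2\|\x_t-\xh\|^2$, and delivery of $-\frac{\gamma}{8}\sumTT\|\x_t-\x_{t-1}\|^2$ from the kept negative terms. Two sub-steps genuinely differ. First, for the piece $\|\xh_t-\x_{t-1}\|^2$ you invoke non-expansiveness of the projection to convert it into $\eta_{t-1}^2\|\nabla f_{t-1}(\x_{t-1})-\mb_{t-1}\|^2$ and recycle it into the optimism-error sum, whereas the paper simply reindexes the retained $-\frac{1}{\eta_t}\D_\psi(\xh_{t+1},\x_t)$ terms; both are fine and yield the same $\gamma/8$ coefficient. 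Second, and more substantively, for the key step $\sum_t \eta_t\|\g_t-\g_{t-1}\|^2\lesssim \frac{G^2}{\lambda}\ln(1+\lambda\Vb_T)$ the paper uses an explicit split-the-sum argument (\pref{lem:sum}: bound the first $A=\ceil{\lambda\Vb_T}$ terms by $a_{\max}\sum_{t\le A}\frac{1}{\lambda t}$ and the tail by $\frac{1}{\lambda(A+1)}\sum_t a_t$), while you propose an AdaGrad-type self-bounding estimate. Your version works, but note that the estimate you quote has denominator $c+\sum_{s\le t}a_s$, not $\gamma+\lambda t$; you need the intermediate inequality $\lambda t\ge \frac{\lambda}{4G^2}\sum_{s\le t}a_s$ (valid precisely because $a_s=\|\g_s-\g_{s-1}\|^2\le 4G^2$, the uniform bound you cite) before the telescoping-logarithm bound applies, which then gives $\frac{8G^2}{\lambda}\ln\bigl(1+\frac{\lambda\Vb_T}{4G^2\gamma}\bigr)$ and fits the stated constant. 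Your route has the minor advantage of avoiding the additive $\frac{1}{\lambda^2}$ slack that the paper's \pref{lem:sum} incurs; the paper's has the advantage of being a reusable standalone lemma shared with the SEA application. With that one connective inequality made explicit, your argument is complete.
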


\begin{proof}[Proof of \pref{lem:convex-base}]
  The proof mainly follows Theorem 11 of \citet{COLT'12:VT}. Following the standard analysis of optimistic OMD, e.g., Theorem 1 of~\citet{arXiv'21:Sword++}, it holds that
  \begin{align}
    \sumT f_t(\x_t) - \sumT f_t(\xs) \le {} & \underbrace{\sumT \eta_t \|\nabla f_t(\x_t) - \mb_t\|^2}_{\textsc{Adaptivity}} + \underbrace{\sumT \frac{1}{\eta_t} (\Dpsi(\xs, \xh_t) - \Dpsi(\xs, \xh_{t+1}))}_{\textsc{Opt-Gap}}\notag \\
    & - \underbrace{\sumT \frac{1}{\eta_t} (\Dpsi(\xh_{t+1}, \x_t) + \D_{\psi_t}(\x_t, \xh_t))}_{\textsc{Stability}}, \label{eq:convex-base}
  \end{align}
  where $\xs \in \argmin_{\x \in \X} \sumT f_t(\x)$ and $\psi(\cdot) \define \frac{1}{2} \|\cdot\|^2$. The adaptivity term satisfies
  \begin{align*}
    \textsc{Adaptivity} = {} & \sumT \eta_t \|\nabla f_t(\x_t) - \mb_t\|^2 \le D \sumT \frac{\|\nabla f_t(\x_t) - \nabla f_{t-1}(\x_{t-1})\|^2}{\sqrt{1 + \sum_{s=1}^{t-1} \|\nabla f_s(\x_s) - \nabla f_{s-1}(\x_{s-1})\|^2}}\\
    \le {} & 4 D \sqrt{1 + \sumT \|\nabla f_t(\x_t) - \nabla f_{t-1}(\x_{t-1})\|^2} + 4DG^2,
  \end{align*}
  where the last step uses $\sumT a_t/\sqrt{1 + \sum_{s=1}^{t-1} a_s} \le 4 \sqrt{1 + \sumT a_t} + \max_{t \in [T]} a_t$~{\citep[Lemma 4.8]{UAI'19:Pogodin}}. Next, we move on to the optimality gap,
  \begin{align*}
      \textsc{Opt-Gap} = {} & \sumT \frac{1}{\eta_t} (\Dpsi(\xs, \xh_t) - \Dpsi(\xs, \xh_{t+1})) = \sumT \frac{1}{2 \eta_t} (\|\xs - \xh_t\|^2 - \|\xs - \xh_{t+1}\|^2)\\
      \le {} & \frac{\|\xs - \xh_1\|^2}{2 \eta_1} + \sumTT \sbr{\frac{1}{2\eta_t} - \frac{1}{2\eta_{t-1}}} \|\xs - \xh_t\|^2\\
      \le {} & \frac{D}{2} (1 + \gamma D) + D^2 \sumTT \sbr{\frac{1}{2\eta_t} - \frac{1}{2\eta_{t-1}}} \le \frac{D}{2} (1 + \gamma D) + \frac{D^2}{2 \eta_T}\\
      = {} & \gamma D^2 + \frac{D}{2} \sqrt{1 + \Vb_T} + \O(1).
  \end{align*}
  Finally, we analyze the stability term,
  \begin{align*}
    \textsc{Stability} = {} & \sumT \frac{1}{2\eta_t} (\|\xh_{t+1} - \x_t\|^2 + \|\x_t - \xh_t\|^2) \ge \sumTT \frac{1}{2\eta_t} (\|\xh_t - \x_{t-1}\|^2 + \|\x_t - \xh_t\|^2)\\
    \ge {} & \sumTT \frac{1}{4\eta_t} \|\x_t - \x_{t-1}\|^2 \ge \frac{\gamma}{4} \sumTT \|\x_t - \x_{t-1}\|^2.
  \end{align*}
  Combining the above inequalities completes the proof.
\end{proof}

\begin{proof}[Proof of \pref{lem:exp-concave-base}]
  The proof mainly follows Theorem 15 of \citet{COLT'12:VT}. Denoting by $\xs \in \argmin_{\x \in \X} \sumT f_t(\x)$, it holds that 
  \begin{align*}
    \sumT f_t(\x_t) - \sumT f_t(\xs) \le \underbrace{\sumT \|\nabla f_t(\x_t) - \mb_t\|^2_{U_t^{-1}}}_{\textsc{Adaptivity}} + \underbrace{\sumT (\D_{\psi_t}(\xs, \xh_t) - \D_{\psi_t}(\xs, \xh_{t+1}))}_{\textsc{Opt-Gap}} &\\
    - \underbrace{\sumT (\D_{\psi_t}(\xh_{t+1}, \x_t) + \D_{\psi_t}(\x_t, \xh_t))}_{\textsc{Stability}} - \underbrace{\frac{\alpha}{2} \sumT \norm{\x_t - \xs}^2_{\nabla f_t(\x_t) \nabla f_t(\x_t)^\top}}_{\textsc{Negativity}}&,
  \end{align*}
  where the last term is imported by the definition of exp-concavity. First, the optimality gap satisfies
  \begin{align*}
      \textsc{Opt-Gap} = {} & \frac{1}{2} \sumT \|\xs - \xh_t\|_{U_t}^2 - \frac{1}{2} \sumT \|\xs - \xh_{t+1}\|_{U_t}^2\\
      \le {} & \frac{1}{2} \|\xs - \xh_1\|_{V_1}^2 + \frac{1}{2} \sumT (\|\xs - \xh_{t+1}\|_{U_{t+1}}^2 - \|\xs - \xh_{t+1}\|_{U_t}^2)\\
      \le {} & \frac{1}{2} \gamma D^2 + \frac{\alpha G^2 D^2}{4} + \frac{\alpha}{4} \sumT \|\xs - \xh_{t+1}\|_{\nabla f_t(\x_t) \nabla f_t(\x_t)^\top}^2.
  \end{align*}
  We handle the last term by leveraging the negative term imported by exp-concavity:
  \begin{align*}
      & \textsc{Opt-Gap} - \textsc{Negativity}\\
      \le {} & \frac{1}{2} \gamma D^2 + \frac{\alpha}{4} \sumT \|\xs - \xh_{t+1}\|_{\nabla f_t(\x_t) \nabla f_t(\x_t)^\top}^2 - \frac{\alpha}{2} \sumT \norm{\x_t - \xs}^2_{\nabla f_t(\x_t) \nabla f_t(\x_t)^\top} + \O(1)\\
      \le {} & \frac{1}{2} \gamma D^2 + \frac{\alpha}{2} \sumT \|\x_t - \xh_{t+1}\|_{\nabla f_t(\x_t) \nabla f_t(\x_t)^\top}^2 + \O(1),
  \end{align*}
  where the local norm of the second term above can be transformed into $U_t$:
  \begin{equation*}
      \frac{\alpha}{2} \sumT \|\x_t - \xh_{t+1}\|_{\nabla f_t(\x_t) \nabla f_t(\x_t)^\top}^2 \le \frac{\alpha G^2}{2} \sumT \|\x_t - \xh_{t+1}\|^2 \le \sumT \|\x_t - \xh_{t+1}\|^2_{U_t}.
  \end{equation*}
  Using the stability of optimistic OMD~{\citep[Proposition 7]{COLT'12:VT}}, the above term can be further bounded by 
  \begin{equation*}
      \sumT \|\x_t - \xh_{t+1}\|^2_{U_t} \le \sumT \|\nabla f_t(\x_t) - \mb_t\|^2_{U_t^{-1}}.
  \end{equation*}
  By choosing the optimism as $\mb_t = \nabla f_{t-1}(\x_{t-1})$, the above term can be consequently bounded due to Lemma 19 of \citet{COLT'12:VT}:
  \begin{equation*}
    \sumT \|\nabla f_t(\x_t) - \nabla f_{t-1}(\x_{t-1})\|^2_{U_t^{-1}} \le \frac{8d}{\alpha} \ln \sbr{1 + \frac{\alpha}{8 \gamma d} \Vb_T}.
  \end{equation*}
  The last step is to analyze the negative stability term:
  \begin{align*}
    \textsc{Stability} = {} & \sumT (\D_{\psi_t}(\xh_{t+1}, \x_t) + \D_{\psi_t}(\x_t, \xh_t)) = \frac{1}{2} \sumT \|\xh_{t+1} - \x_t\|_{U_t}^2 + \frac{1}{2} \sumT \|\x_t - \xh_t\|_{U_t}^2\\
    \ge {} & \frac{\gamma}{2} \sumT \|\xh_{t+1} - \x_t\|^2 + \frac{\gamma}{2} \sumT \|\x_t - \xh_t\|^2 \ge \frac{\gamma}{4} \sumTT \|\x_t - \x_{t-1}\|^2.
  \end{align*}
  Combining existing results, we have
  \begin{equation*}
    \Reg_T \le \frac{16d}{\alpha} \ln \sbr{1 + \frac{\alpha}{8 \gamma d} \Vb_T} + \frac{1}{2} \gamma D^2 - \frac{\gamma}{4} \sumTT \|\x_t - \x_{t-1}\|^2 + \O(1),
  \end{equation*} 
  which completes the proof.
\end{proof}

\begin{proof}[Proof of \pref{lem:str-convex-base}]
  The proof mainly follows Theorem 3 of \citet{arXiv'23:SEA-Chen}. Following the almost the same regret decomposition in \pref{lem:convex-base}, it holds that
  \begin{equation*}
      \sumT f_t(\x_t) - \sumT f_t(\xs) \le \eqref{eq:convex-base} - \underbrace{\frac{\lambda}{2} \sumT \norm{\x_t - \xs}^2}_{\textsc{Negativity}}.
  \end{equation*}
  First, we analyze the optimality gap,
  \begin{align*}
      \textsc{Opt-Gap} \le {} & \frac{1}{\eta_1} \Dpsi(\xs, \xh_1) + \sumT \sbr{\frac{1}{\eta_{t+1}} - \frac{1}{\eta_t}} \Dpsi(\xs, \xh_{t+1})\\
      \le {} & \frac{1}{4} (\gamma + \lambda) D^2 + \frac{\lambda}{4} \sumT \|\xs - \xh_{t+1}\|^2.
  \end{align*}
  We handle the last term by leveraging the negative term imported by strong convexity:
  \begin{align*}
      \textsc{Opt-Gap} - \textsc{Negativity} 
      \le {} & \frac{1}{4} (\gamma + \lambda) D^2 + \frac{\lambda}{4} \sumT \|\xs - \xh_{t+1}\|^2 - \frac{\lambda}{2} \sumT \norm{\x_t - \xs}^2\\
      \le {} & \frac{1}{4} \gamma D^2 + \frac{\lambda}{2} \sumT \|\x_t - \xh_{t+1}\|^2 + \O(1).
  \end{align*}
  The second term above can be bounded by the stability of optimistic OMD: 
  \begin{equation*}
      \frac{\lambda}{2} \sumT \|\x_t - \xh_{t+1}\|^2 \le \frac{\lambda}{2} \sumT \eta_t^2 \|\nabla f_t(\x_t) - \mb_t\|^2 \le \sumT \eta_t \|\nabla f_t(\x_t) - \mb_t\|^2.
  \end{equation*}
  Finally, we lower-bound the stability term as 
  \begin{align*}
    \textsc{Stability} = {} & \sumT \frac{\gamma + \lambda t}{4} (\|\xh_{t+1} - \x_t\|^2 + \|\x_t - \xh_t\|^2) \tag*{\Big(by $\eta_t = \frac{2}{\gamma + \lambda t}$\Big)}\\
    \ge {} & \frac{\gamma}{4} \sumTT (\|\xh_t - \x_{t-1}\|^2 + \|\x_t - \xh_t\|^2) \ge \frac{\gamma}{8} \sumTT \|\x_t - \x_{t-1}\|^2.
  \end{align*}
  Choosing the optimism as $\mb_t = \nabla f_{t-1}(\x_{t-1})$, we have
  \begin{equation*}
      \Reg_T \le 2 \sumT \eta_t \|\nabla f_t(\x_t) - \nabla f_{t-1}(\x_{t-1})\|^2 + \frac{1}{4} \gamma D^2 - \frac{\gamma}{8} \sumTT \|\x_t - \x_{t-1}\|^2 + \O(1).
  \end{equation*}
  To analyze the first term above, we follow the similar argument of \citet{arXiv'23:SEA-Chen}. By \pref{lem:sum} with $a_t = \|\nabla f_t(\x_t) - \nabla f_{t-1}(\x_{t-1})\|^2$, $a_{\max} = 4G^2$, $A = \ceil{\lambda \Vb_T}$, and $b = \lambda$, it holds that 
  \begin{equation*}
    \sumT \frac{1}{\lambda t} \|\g_t - \g_{t-1}\|^2 \le \frac{4G^2}{\lambda} \ln \sbr{1 + \lambda \Vb_T} + \frac{4G^2}{\lambda} + \frac{1}{\lambda^2}.
  \end{equation*}
  Since $\eta_t = 2/(\gamma + \lambda t) \le 2 / (\lambda t)$, combining existing results, we have
  \begin{equation*}
    \Reg_T \le \frac{16G^2}{\lambda} \ln \sbr{1 + \lambda \Vb_T} + \frac{1}{4} \gamma D^2 - \frac{\gamma}{8} \sumTT \|\x_t - \x_{t-1}\|^2 + \O(1),
  \end{equation*}
  which completes the proof.
\end{proof}

\end{document}